\title{\texttt{DePRL}: Achieving Linear Convergence Speedup in Personalized Decentralized Learning with Shared Representations}
\author{
    Guojun Xiong\textsuperscript{\rm 1}, Gang Yan\textsuperscript{\rm 2}, Shiqiang Wang\textsuperscript{\rm 3}, Jian Li\textsuperscript{\rm 1}
}
\newtheorem{theorem}{Theorem}
\newtheorem{lemma}{Lemma}
\newtheorem{corollary}{Corollary}
\newtheorem{assumption}{Assumption}
\newtheorem{remark}{Remark}
  \def\cC{{\mathcal{C}}} \def\cD{{\mathcal{D}}}
\def\cE{{\mathcal{E}}}  \def\cG{{\mathcal{G}}} 
 \def\cN{{\mathcal{N}}}
\def\ba{{\mathbf{a}}} \def\bb{{\mathbf{b}}}
  \def\bw{{\mathbf{w}}} \def\bx{{\mathbf{x}}}
\def\bP{{\mathbf{P}}}    
   \def\bX{{\mathbf{X}}}
\newcommand{\DePRL}{\texttt{DePRL}\space}
\begin{document}

\maketitle

\begin{abstract}
Decentralized learning has emerged as an alternative method to the popular parameter-server framework which suffers from high communication burden, single-point failure and scalability issues due to the need of a central server.  However, most existing works focus on a single shared model for all workers regardless of the data heterogeneity problem, rendering the resulting model performing poorly on individual workers.  In this work, we propose a novel personalized decentralized learning algorithm named \texttt{DePRL} via shared representations.  Our algorithm  relies on ideas from representation learning theory to learn a low-dimensional global representation collaboratively among all workers in a fully decentralized manner, and a user-specific low-dimensional local head leading to a personalized solution for each worker.  We show that \texttt{DePRL} achieves, for the first time, a provable \textit{linear speedup for convergence} with general non-linear representations (i.e., the convergence rate is improved linearly with respect to the number of workers). Experimental results support our theoretical findings showing the superiority of our method in data heterogeneous environments.

\end{abstract}

\section{Introduction}\label{intro}
Fueled by the rise of machine learning applications in 
Internet of Things, federated learning (FL) \cite{mcmahan2017communication, imteaj2022federated} has become an emerging paradigm that allows a large number of workers to produce a global model without sharing local data.  The task of coordinating between workers is fulfilled by a central server that aggregates models received from workers at each round and broadcasts updated models to them.  However, this parameter-server (PS) based scheme has a major drawback for the need of a central server \cite{kairouz2019advances}.  In practice, the communication occurs between the server and workers leads to a quite large communication burden for the server \cite{lian2017can}, and the server could face system failure or attacks, which may leak users' privacy or jeopardize the training process.

With this regard, \textit{consensus-based decentralized learning} has recently emerged as a promising method, where each worker maintains a local copy of the model and embraces peer-to-peer communication for faster convergence \cite{lian2017can,lian2018asynchronous}.  In decentralized learning, workers follow a communication graph to reach a so-called consensus model.  However, like conventional PS  framework, one of the most important challenges in decentralized learning is the issue of \textit{data heterogeneity}, where the data distribution among workers may vary to a large extent.  As a result, if all workers learn a \textit{single shared model} with parameter $\bw$, the resulting model could perform poorly on many of individual workers.  To this end, \textit{personalized decentralized learning} \cite{vanhaesebrouck2017decentralized,dai2022dispfl}
is important for achieving personalized models for each worker $i$ with parameter $\bw_i$ instead of using a single shared model.

In this paper, we take a further step towards personalized decentralized learning.  In particular, we take advantage of common representation among workers. This is inspired by observations in centralized learning, which suggest that heterogeneous data distributed across tasks (e.g., image classification) may share a common (low-dimensional) representation despite having different labels \cite{bengio2013representation,lecun2015deep}. To our best knowledge, \citet{collins2021exploiting} is the first to leverage this insight to design personalized PS based scheme, while we generalize it to decentralized setting.  Specifically, we consider the setting in which 
all workers' model parameters share a common map, coupled with a personalized map that fits their local data.  Formally, the parameter for worker $i$'s model can be represented as $\bw_i= \pmb{\theta}_i \circ \pmb{\phi}$, where $\pmb{\phi}: \mathbb{R}^d \rightarrow \mathbb{R}^z$ is a shared \textbf{{global representation}}\footnote{For abuse of notion, we use $\pmb{\phi}$ to denote both the global representation model and its associated parameter, and $\{\pmb{\theta}_i\}_{i=1}^N$ to denote both the local heads and its associated parameter. For simplicity, we call $\pmb{\phi}$ the \textit{global representation} and $\pmb{\theta}_i$ the \textit{local head} of worker $i$ in the rest of the paper. The ``$\circ$'' symbol denotes the composition relation between the parameters $\pmb{\theta}_i$ and $\pmb{\phi}$ as in \citet{collins2021exploiting}. } which maps $d$-dimensional data points to a lower space of size $z$, and $\pmb{\theta}_i : \mathbb{R}^z \rightarrow \mathcal{Y}$ is the worker specific \textbf{{local head}} which maps from the lower dimensional subspace to the space of labels. Typically $z \ll d$ and thus given any fixed representation $\pmb{\phi}$, the worker specific heads $\pmb{\theta}_i $ are easy to optimize locally.  Though \citet{collins2021exploiting} provided a rigorous analysis with linear global representation, 
the following important questions remain open: 

\textit{Does there exist a personalized, fully decentralized algorithm that can solve the optimization problem  $\min_{\pmb{\phi} \in \Phi}  \frac{1}{N} \sum_{i=1}^N  \min_{\pmb{\theta}_i \in \Theta}F_i(\pmb{\theta}_i \circ \pmb{\phi}),$ where $F_i(\cdot)$ is the loss function associated with worker $i$?  Can we provide a convergence analysis for such a personalized, decentralized algorithm under general non-linear representations?}

In this paper, we provide affirmative answers to these questions.  We propose \textit{a fully decentralized algorithm} named \DePRL with alternating updates between global representation and local head parameters to solve the above optimization.  At each round, each worker performs one or more steps of stochastic gradient descent to update its local head and global representation from its side.  Then each worker \textit{only} shares its updated global representation with a subset of workers (neighbors) in the communication graph and computes a weighted average (i.e., consensus component) of global representations received from its neighbors.  The updated local head and global representation after consensus serve as the initialization for the next round update.  All workers in \DePRL collaborate to learn a common global representation, while locally each worker learns its unique head.

Compared to conventional decentralized learning with a single shared model \cite{lian2017can, lian2018asynchronous,assran2019stochastic}, the updates of parameters of local head and global representation in \DePRL are strongly coupled due to their intrinsic dependence and iterative update nature. This makes existing convergence analysis for  decentralized learning with a single shared model not directly applicable to ours, and necessitates different proof techniques. One fundamental reason is that, instead of learning only a single shared model, there are multiple local heads that need to be handled in \DePRL and the updates of local heads are also strongly coupled with global representation.  We summarize our contributions:

$\bullet$ \textbf{\DePRL Algorithm.} 
We propose for the first time \textit{a fully decentralized} algorithm named \DePRL which leverages ideas from representation learning theory to learn a global representation collaboratively among all workers, and a user-specific local head leading to a personalized solution for each worker.

$\bullet$ \textbf{Convergence Rate.} To incorporate the impact of two coupled parameters, we first introduce a new notion of $\epsilon$-approximation solution. Using this notion, to our best knowledge, we provide the first convergence analysis of personalized decentralized learning with \textit{shared non-linear representations}.  We show that the convergence rate of \DePRL is $\mathcal{O}(\frac{1}{\sqrt{NK}})$, where $N$ is the number of workers, and $K$ is the number of communication rounds which is sufficiently large.  This indicates that \DePRL achieves a \textbf{linear speedup} for convergence with respect to the number of workers. This is the first linear speedup result for personalized decentralized learning with shared representations, and is highly desirable since it implies that one can efficiently leverage the massive parallelism in large-scale decentralized systems.  In addition, interestingly, our results guarantee that all workers reach a consensus on the shared global representation, while learn a personalized local head.  This reveals new insights on the relationship between personalized decentralized model with shared representations and its generalization to unseen workers that have not participated in the training process, as we numerically verify in experimental results.

$\bullet$ \textbf{Evaluation.} To examine the performance of \DePRL and verify our theoretical results, we conduct experiments on different datasets with representative DNN models and compare with a set of baselines.  Our results show the superior performance of \DePRL in data heterogeneous environments.

\section{System Model and Problem Formulation}\label{sec:model}

\textbf{Notation.}
Denote the number of workers and communication rounds as $N$ and $K$, respectively.  We use calligraphy letter $\mathcal{A}$ to denote a finite set with cardinality $|\mathcal{A}|$, and $[N]$ to denote the set of integers $\{1,\cdots, N\}$.  We use boldface to denote matrices and vectors, and $\|\cdot\|$ to denote the $l_2$-norm. 

\textbf{Consensus-based Decentralized Learning.} Supervised learning aims to learn a model with optimal parameter  that maps an input to an output by using examples from a training data set $\mathcal{D}$ with each example being a pair of input $\bx_m$  and the associated output $y_m$. Due to increases in available data and the complexity of statistical model, an efficient decentralized
algorithm is to offload the computation overhead to $N$  workers, which jointly determine the optimal parameters through a decentralized coordination. This gives rise to the minimization of the sum of functions local to each worker 
 \vspace{-0.1cm}\begin{align}\label{eq:decen_objective}
\min_{\bw} f(\bw):=\frac{1}{N}\sum\limits_{i=1}^N F_i(\bw),
\vspace{-1em}
\end{align} 
where $F_i(\bw)=\frac{1}{|\cD_i|}\sum_{(\bx_m,y_m)\in\cD_i}\ell(\bw, \bx_m,y_m)$, $\cD_i$ is worker $i$'s local dataset,
with $\ell(\bw, \bx_m, y_m)$ being model error on example $(\bx_m, y_m)$ using model  parameter $\bw\in\mathbb{R}^{d\times 1}$.
The decentralized system can be modeled as \textit{a communication graph} $\cG=(\cN,\cE)$ with $\cN=[N]$ being the set of workers and an edge $(i, j)\in\cE$ indicates that workers $i$ and $j$ can communicate with each other.  
We assume the graph is strongly connected \cite{nedic2009distributed,nedic2018network}, i.e., there exists at least one path between any two arbitrary workers. Denote neighbors of worker $i$ as $\cN_i=\{j|(j,i)\in\cE\}\cup\{i\}$.  All workers perform local updates synchronously and broadcast updated models to their neighbors. Each worker then computes a weight average (i.e., consensus component) of the received models from its neighbors, which serves as the initialization for next round.

\textbf{Personalization via Common Representation.} 
Conventional decentralized learning aims at learning a {\em single} shared model parameter $\bw$ that performs well on average across all workers \cite{lian2017can,lian2018asynchronous}.  However, this approach may yield a solution that performs poorly in heterogeneous settings where data distributions
vary across workers.  Indeed, in the presence of data heterogeneity, the error functions $F_i$ will have different minimizers.   This necessities the search for more personalized solutions $\{\bw_i\}_{i=1}^N$ that can be learned in a decentralized manner using workers' local data.

To address this challenge, \citet{collins2021exploiting}  leveraged representation learning theory into the PS setting. Formally, the parameter for worker $i$'s model can be represented as $\bw_i= \pmb{\theta}_i \circ \pmb{\phi}$, where $\pmb{\phi}: \mathbb{R}^d \rightarrow \mathbb{R}^z$ is a shared \textbf{\textit{global representation}} which maps $d$-dimensional data points to a lower space of size $z$, and $\pmb{\theta}_i : \mathbb{R}^z \rightarrow \mathcal{Y}$ is the worker specific \textbf{\textit{local head}} which maps from the lower dimensional subspace to the space of labels.  
See an illustrative example of \citet{collins2021exploiting} in supplementary materials. We generalize this common structure studied in \citet{collins2021exploiting} to the decentralized setting, with which, (\ref{eq:decen_objective})  can be reformulated as 
\begin{align}\label{obj}
    \min_{\pmb{\phi} \in \Phi}  \min_{\pmb{\theta}_i \in \Theta} f(\pmb{\phi},\{\pmb{\theta}_i\}_{i=1}^N):=\frac{1}{N} \sum_{i=1}^N  F_i(\pmb{\theta}_i \circ \pmb{\phi}),\allowdisplaybreaks
\end{align}
where $\Phi$ is the class of feasible representations and $\Theta$ is the class of feasible heads.  
In our proposed decentralized learning scheme, workers collaborate to learn global representation $\pmb{\phi}$ using all workers' data, while using their local information to learn personalized local heads $\{\pmb{\theta}_i\}_{i=1}^N$. In other words, worker $i$ maintains \textit{a local estimate} of global representation $\pmb{\phi}_i(k)$ at each round $k$ and broadcasts it to its neighbors in $\mathcal{N}_i$, while the local head $\pmb{\theta}_i(k)$ is only updated locally.

\section{\DePRL Algorithm} \label{sec:alg}

In personalized decentralized learning, workers aim to learn the global representation $\pmb{\phi}$ collaboratively, while each worker $i$ aims to learn a unique local head $\pmb{\theta}_i$ locally.  To achieve this, we propose a stochastic gradient descent (SGD)-based algorithm named \DePRL that solves~(\ref{obj}) \textbf{\textit{in a fully decentralized manner}}.  Specially, \DePRL alternates between three steps among all workers at each communication round: (a) local head update; (b) local representation update; and (c) consensus-based global representation update.

\textbf{Local Head Update.} At round $k$, worker $i$ makes $\tau$ local stochastic gradient-based updates to solve for its optimal local head $\pmb{\theta}_i(k)$ given the current global representation $\pmb{\phi}_i(k)$ on its local side. In other words, {for $s=0,\ldots, \tau-1$,} worker $i$ updates its local head as
\begin{align}\label{eq:local_model_update}
    \pmb{\theta}_i(k, s+1) = \pmb{\theta}_i(k, s)-\alpha g_{\pmb{\theta}}(\pmb{\phi}_i(k), \pmb{\theta}_i(k,s)),
\end{align}
where $\alpha$ is the learning rate for local head and  $g_{\pmb{\theta}}( \pmb{\phi}_i(k), \pmb{\theta}_i(k,s))$ is a stochastic gradient  of local head $\pmb{\theta}_i(k,s)$ given the global representation  $\pmb{\phi}_i(k)$ on its side: 
\begin{align}\label{eq:local-subgradient1}\nonumber
&g_{\pmb{\theta}}(\pmb{\phi}_i(k), \pmb{\theta}_i(k,s))\\
&\!\!\!\!:=\!\frac{1}{|\cC_i(k,s)|} \!\sum\limits_{(\bx_m, y_m)\in \cC_i(k,s)} \!\!\!\!\!\!\!\!\!\!\!\nabla_{\pmb{\theta}} F_i(\pmb{\phi}_i(k), \pmb{\theta}_i(k,s), \bx_m, y_m),
\end{align}
where $\mathcal{C}_i(k,s)$ is a random subset of $\mathcal{D}_i$.  We allow each worker to perform $\tau$ steps local updates to find the optimal local head based on its local data.  For ease of presentation, we denote $\pmb{\theta}_i(k+1):=\pmb{\theta}_i(k+1,0)=\pmb{\theta}_i(k,\tau-1)$.

\begin{figure*}[t]
\centering
   \includegraphics[width=1\textwidth]{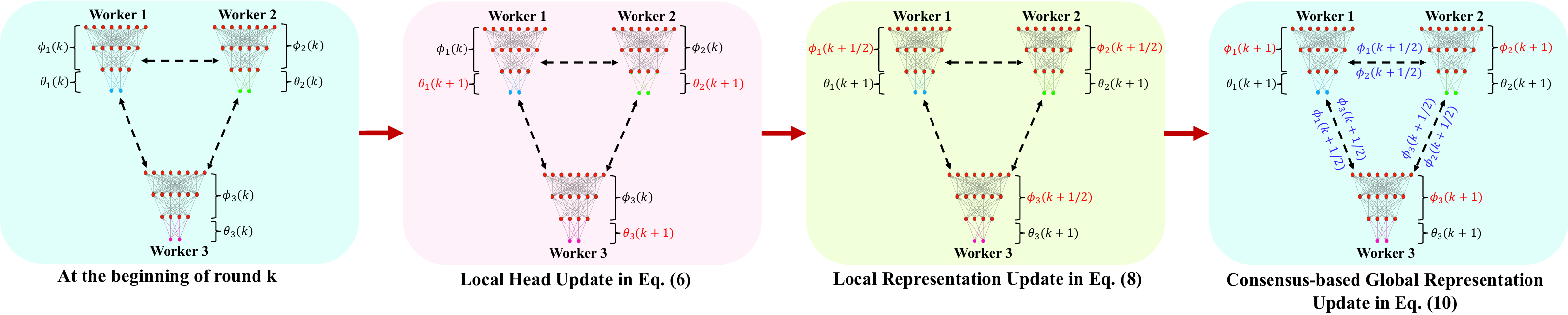}
\vspace{-0.2in}
\caption{An illustrative example of \DePRL for 3 workers with the communication graph being a ring (indicated by black dashed lines).  (a) At the beginning of each round $k$, each worker $i=1,2,3$ has the local head $\pmb{\theta}_i(k)$ and the global representation $\pmb{\phi}(k)$ on its side, which we denote as $\pmb{\phi}_i(k).$ (b) \textit{Local Head Update:} With $(\pmb{\theta}_i(k),\pmb{\phi}_i(k))$, each worker $i$ performs $\tau$ steps SGD to obtain $\pmb{\theta}_i(k+1)$.  Note that $\pmb{\phi}_i(k)$ remains unchanged at this step and the updated $\pmb{\theta}_i(k+1)$ depends on both $\pmb{\theta}_i(k)$ and $\pmb{\phi}_i(k)$. (c) \textit{Local Representation Update:} Each worker $i$ then updates the global representation on its side by executing one-step SGD to obtain $\pmb{\phi}_i(k+1/2),$ which depends on both $\pmb{\theta}_i(k+1)$ and $\pmb{\phi}_i(k)$. (d) \textit{Consensus-based Global Representation Update:} Each worker $i$ shares $\pmb{\phi}_i(k+1/2)$ with its neighbors and then executes a consensus step to produce the next global representation model $\pmb{\phi}_i(k+1)$.  We highlight the updated parameters in each step in red, and the shared parameters (only the global representation) between workers in blue.}
        \label{fig:example}
\vspace{-0.1in}
\end{figure*}

\textbf{Local Representation Update.} Once the updated local heads $\pmb{\theta}_i(k+1)$ are obtained, each worker $i$ executes one-step local update on their representation parameters, i.e., 
 \begin{align}\label{eq:global_model_update}
     \pmb{\phi}_i(k+1/2)=\pmb{\phi}_i(k)-\beta g_{\pmb{\phi}}(\pmb{\phi}_i(k), \pmb{\theta}_i(k+1)),
 \end{align} 
where $\beta$ is the learning rate for global representation and $g_{\pmb{\phi}}(\pmb{\phi}_i(k), \pmb{\theta}_i(k+1))$ is the stochastic gradient  of global representation $\pmb{\phi}_i(k)$ given the updated local head $\pmb{\theta}_i(k+1)$:  
\begin{align}\label{eq:local-subgradient2}\nonumber
&g_{\pmb{\phi}}(\pmb{\phi}_i(k), \pmb{\theta}_i(k+1))\\
&\!\!\!\!:=\!\frac{1}{|\cC_i(k)|} \sum\limits_{(\bx_m, y_m)\in \cC_i(k)} \!\!\!\!\!\!\!\!\!\!\!\nabla_{\pmb{\phi}} F_i(\pmb{\phi}_i(k), \pmb{\theta}_i(k+1), \bx_m, y_m). 
\end{align}

\textbf{Consensus-based Global Representation Update.} Each worker $i$ broadcasts its local representation update $\pmb{\phi}_i(k+1/2)$ to its neighbors $j\in\cN_i$, and computes a weighted average (i.e., consensus component) of local representation updates $\pmb{\phi}_j(k+1/2)$ received from its neighbors $j$ to produce the next representation model $\pmb{\phi}_i(k+1)$ on its side: 
 \begin{align}\label{eq:global_model_update2}
     \pmb{\phi}_i(k+1) =\sum\nolimits_{j\in \cN_i} \pmb{\phi}_j(k+1/2)P_{i,j},
 \end{align} 
where $\bP=(P_{i,j})$ is a $N\times N$ non-negative  \textit{consensus matrix}.

\DePRL alternates between \eqref{eq:local_model_update}, \eqref{eq:global_model_update} and \eqref{eq:global_model_update2} at each round, and the entire procedure is summarized in Algorithm \ref{alg:general}. An example of \DePRL with $3$ workers is illustrated in Figure \ref{fig:example}.

\begin{remark}\label{remark-alg}
We update parameters using SGD in~(\ref{eq:local-subgradient1}) and~(\ref{eq:local-subgradient2}); however, \DePRL can be easily incorporated with other methods such as gradient descent with momentum.  Further, we perform one-step update on global representation in~(\ref{eq:global_model_update}) given that we perform $\tau$-step updates on local head in~(\ref{eq:local_model_update}). However, \DePRL can be easily generalized to multi-step representation update in~(\ref{eq:global_model_update}).  Finally, we note that our representation model is inspired by \citet{collins2021exploiting}, which considered the PS setting. The theoretical analysis in \citet{collins2021exploiting} focused on showing that the learned representation converges to a ground-truth under the assumption that the global representation must be linear.  In contrast, we consider a fully decentralized framework and our convergence analysis for \DePRL is under the general non-linear representations. In addition, we numerically evaluate the generalization performance of \DePRL under non-linear representations.
\end{remark}

\begin{algorithm}[tb]
\caption{\DePRL}
\begin{algorithmic}[1] \label{alg:general}
\STATE \textbf{Parameters:} Learning rates $\alpha, \beta$; update step number for local head  $\tau$; number of communication rounds $K$.
\STATE Initialize $\pmb{\phi}(0), \pmb{\theta}_1(0,0), \dots, \pmb{\theta}_N(0,0).$
 \FOR{$k=0 ,1, \dots, K-1$}
  \FOR{$i=1,\cdots, N$ }

  \FOR{$s=0,\ldots, \tau-1$} 
\STATE $\pmb{\theta}_i(k, s+1) \leftarrow  \pmb{\theta}_i(k,s)-\alpha g_{\pmb{\theta}}(\pmb{\phi}_i(k), \pmb{\theta}_i(k,s))$; 
\ENDFOR
\STATE $\pmb{\phi}_i(k+1/2)\leftarrow\pmb{\phi}_i(k)-\beta g_{\pmb{\phi}}(\pmb{\phi}_i(k), \pmb{\theta}_i(k+1))$; 
\STATE $\pmb{\phi}_i(k+1) \leftarrow\sum_{j\in \cN_i} \pmb{\phi}_j(k+1/2)P_{i,j}$; 
 \STATE Worker $i$ initializes $\pmb{\theta}_i(k+1,0)\leftarrow \pmb{\theta}_i(k, \tau-1)$.
  \ENDFOR
 \ENDFOR
 \end{algorithmic}
\end{algorithm}

\section{Convergence Analysis}\label{sec:convergence}
In this section, we provide a rigorous analysis of the convergence of \DePRL in the decentralized setting with the general non-linear representation model.

\subsection{$\epsilon$-Approximation Solution}\label{sec:epsilon-approximation}
We first introduce the notion of $\epsilon$-approximation solution.  We denote $\bar{\pmb{\phi}}(k):=
    \frac{1}{N}\sum_{i=1}^N \pmb{\phi}_i(k)$ as the consensus global representation, and  the partial gradients of the global loss function with respect to 
(w.r.t.) $\pmb{\theta}$ and $\pmb{\phi}$ as $\nabla_{\pmb{\theta}}f(\bar{\pmb{\phi}}(k),\{\pmb{\theta}_i(k)\}_{i=1}^N)$ and $\nabla_{\pmb{\phi}}f(\bar{\pmb{\phi}}(k),\{\pmb{\theta}_i(k+1)\}_{i=1}^N)$, respectively, satisfying     
\begin{align}\label{eq:partial}
&\nabla_{\pmb{\theta}}f(\bar{\pmb{\phi}}(k),\{\pmb{\theta}_i(k)\}_{i=1}^N):=\frac{1}{N}\sum_{i=1}^N\nabla_{\pmb{\theta}}F_i(\bar{\pmb{\phi}}(k),\!\pmb{\theta}_i(k)),\nonumber\allowdisplaybreaks\\
&\nabla_{\pmb{\phi}}f(\bar{\pmb{\phi}}(k),\{\pmb{\theta}_i(k+1)\}_{i=1}^N)\!:=\!\frac{1}{N}\!\sum_{i=1}^N \nabla_{\pmb{\phi}}F_i(\bar{\pmb{\phi}}(k),\pmb{\theta}_i(k+1)), 
\end{align}
where we remark that in \DePRL each worker $i$ first updates its local head $\pmb{\theta}_i(k)$ to $\pmb{\theta}_i(k+1)$, and then updates global representation $\bar{\pmb{\phi}}(k)$ provided $\pmb{\theta}_i(k+1)$, see Algorithm~\ref{alg:general}.

Then, we say that $\{ \{\pmb{\phi}_i(k)\}_{i=1}^N, \{\pmb{\theta}_i(k)\}_{i=1}^N, \forall k\}$ is an $\epsilon$-approximation solution to~(\ref{obj}) if it satisfies 
\begin{align}\label{eq:epsilon}
    \frac{1}{K}\sum_{k=1}^K \mathbb{E}[M(k)]\leq \epsilon,
\end{align}
where 
\begin{align}\label{eq:lyapunov-function3}\nonumber
M(k)&:=\underset{\text{partial gradient w.r.t.}~\pmb{\phi}~\text{of global loss function}}{\underbrace{\left\|\nabla_{\pmb{\phi}}f(\bar{\pmb{\phi}}(k),\{\pmb{\theta}_i(k+1)\}_{i=1}^N)\right\|^2}}\nonumber\allowdisplaybreaks\\
&\qquad\qquad+\frac{\alpha\tau}{\beta}\underset{\text{partial gradient w.r.t.}~\pmb{\theta}~\text{of global loss function}}{\underbrace{\left\|\nabla_{\pmb{\theta}}f(\bar{\pmb{\phi}}(k),\{\pmb{\theta}_i(k)\}_{i=1}^N)\right\|^2}}\nonumber\allowdisplaybreaks\\
&\qquad\qquad+\underset{\text{consensus error of global representation}~\pmb{\phi}}{\underbrace{\frac{1}{N}\sum_{i=1}^N\|\pmb{\phi}_i(k)-\bar{\pmb{\phi}}(k)\|^2}}. 
\end{align}
The first two terms in \eqref{eq:lyapunov-function3} characterizes the performance of \DePRL and the third term measures the average error of global representation from the perspective of each worker $i$'s local representation update $\pmb{\phi}_i(k)$. Since \DePRL iteratively updates the local head $\{\pmb{\theta}_i(k)\}_{i=1}^N$ and representation $\{\pmb{\phi}_i(k)\}_{i=1}^N$ using different learning rates, i.e., $\tau$-step updates with a rate $\alpha$ and one-step update with a rate $\beta$, as shown in~(\ref{eq:local_model_update}) and~(\ref{eq:global_model_update}), we consider weighted partial gradients w.r.t. the global loss function in the first two terms.  This is inspired by finite-time analysis of two-timescale stochastic approximation \cite{borkar2009stochastic}. Finally, \eqref{eq:lyapunov-function3} does not explicitly include the local head error due to two reasons. First, we consider general non-convex loss functions, and the local optimum $\{\pmb{\theta}_i^*\}_{i=1}^N$ is often unknown. 
More importantly, the impact of local head $\{\pmb{\theta}_i\}_{i=1}^N$ is evaluated by partial gradients of local loss functions $F_i, \forall i$, which is implicitly incorporated in the first two terms in \eqref{eq:lyapunov-function3}, with definitions given in~(\ref{eq:partial}).

\begin{remark}
The $\epsilon$-approximation solution defined in~(\ref{eq:epsilon}) and \eqref{eq:lyapunov-function3} incorporates the impact of two coupled parameters, while conventional decentralized learning frameworks such as \citet{lian2017can,assran2019stochastic,xiong2023straggler} only considered a single shared model.  This makes existing convergence analysis not directly applicable to ours and necessitates different proof techniques.  One fundamental reason is that, instead of learning only a single shared model, there are multiple local heads strongly coupled with the global representation that need to be handled in our setting.  Compared to the PS framework, only the gap between the learned global representation $\pmb{\phi}$ and the global optimum $\pmb{\phi}^*$ under a linear representation  model is considered in \citet{collins2021exploiting}.  Finally, another line of work on decentralized bilevel optimization \cite{liu2022interact,qiu2022diamond} involves two coupled parameters under the assumption that inner parameters are strongly convex in outer parameters, and hence differ from our model and definition in \eqref{eq:lyapunov-function3}. 
\end{remark}

\subsection{Assumptions}\label{sec:assumptions}

\begin{assumption}[Doubly Stochastic Consensus Matrix]\label{assumption-weight} 
The consensus matrix $\bP=(P_{i,j})$
is doubly stochastic, i.e., $\sum_{j=1}^N P_{i,j}=\sum_{i=1}^N P_{i,j}=1, \forall i\in[N], j\in[N].$
\end{assumption}

\begin{assumption}[$L$-Lipschitz Continuous Gradient]\label{assumption-lipschitz}
There exists a constant $L>0$, such that 
$\|\nabla_{\pmb{\phi}} F_i(\pmb{\phi},\pmb{\theta})-\nabla_{\pmb{\phi}} F_i(\pmb{\phi}^\prime,\pmb{\theta}^\prime)\|\leq L(\|\pmb{\phi}-\pmb{\phi}^\prime\|+\|\pmb{\theta}-\pmb{\theta}^\prime\|)$ and
$\|\nabla_{\pmb{\theta}} F_i(\pmb{\phi},\pmb{\theta})-\nabla_{\pmb{\theta}} F_i(\pmb{\phi}^\prime,\pmb{\theta}^\prime)\|\leq L(\|\pmb{\phi}-\pmb{\phi}^\prime\|+\|\pmb{\theta}-\pmb{\theta}^\prime\|)$, $~\forall i\in[N], \forall \pmb{\phi}, \pmb{\phi}^\prime\in{\Phi}, \pmb{\theta}, \pmb{\theta}^\prime\in{\Theta}.$
\end{assumption}

\begin{assumption}[Unbiased Local Gradient Estimator]\label{assumption-gradient}
The local gradient estimators are unbiased, i.e., $\forall \pmb{\phi}_i, \pmb{\phi}^\prime_i\in{\Phi},$ $\forall\pmb{\theta}_i, \pmb{\theta}^\prime_i\in{\Theta},$ $\forall i\in[N],$ 
    $\mathbb{E}[g_{\pmb{\phi}}(\pmb{\phi}_i,\pmb{\theta}_i)]=\nabla_{\pmb{\phi}} F_i(\pmb{\phi}_i, \pmb{\theta}_i),~
    \mathbb{E}[g_{\pmb{\theta}}(\pmb{\phi}_i,\pmb{\theta}_i)]=\nabla_{\pmb{\theta}} F_i(\pmb{\phi}_i, \pmb{\theta}_i),$
with the expectation being taken over the local data samples.
\end{assumption}

\begin{assumption}[Bounded Variance]\label{assumption-variance}
{There exists a constant $\sigma>0$ such that the variance of each local gradient estimator is bounded}, i.e.,  $\forall \pmb{\phi}_i, \pmb{\phi}^\prime_i\in{\Phi},$ $\forall\pmb{\theta}_i, \pmb{\theta}^\prime_i\in{\Theta},$ $\forall i\in[N],$ 
    $\mathbb{E}[\|g_{\pmb{\phi}}(\pmb{\phi}_i,\pmb{\theta}_i)-\nabla_{\pmb{\phi}} F_i(\pmb{\phi}_i,\pmb{\theta}_i)\|^2]\leq \sigma^2, ~
    \mathbb{E}[\|g_{\pmb{\theta}}(\pmb{\phi}_i,\pmb{\theta}_i)-\nabla_{\pmb{\theta}} F_i(\pmb{\phi}_i,\pmb{\theta}_i)\|^2]\leq \sigma^2.$
\end{assumption}

\begin{assumption}[Bounded Global Variability]\label{assumption:global-var}
 There exists a constant  $\varsigma>0$ such that the global variability of the local partial gradients on $\pmb{\phi}$ of the loss function $\forall \pmb{\theta}_i\in\Theta$ is bounded, i.e., 
   $\frac{1}{N}\sum_{i=1}^N \mathbb{E}[\|\nabla_{\pmb{\phi}} F_i({\pmb{\phi}},\pmb{\theta}_i)-\nabla_{\pmb{\phi}} f({\pmb{\phi}},\{\pmb{\theta}_i\}_{i=1}^N)\|^2]\leq\varsigma^2.$

\end{assumption}

Assumptions~\ref{assumption-weight}-\ref{assumption:global-var} are standard \citep{kairouz2019advances,tang2020communication,yang2021achieving}, except the difference caused by two coupled parameters in our representation learning model. We use a universal bound $\varsigma$ to quantify the global variability of local partial gradients on global representation $\pmb{\phi}$ in Assumption \ref{assumption:global-var} due to the non-i.i.d. data among workers, which is similar to the heterogeneity assumption in conventional decentralized frameworks with a single global parameter, where $\|\nabla_{\bw} F_i(\bw)-\nabla_{\bw} f(\bw)\|^2\leq \varsigma^2, \forall i\in[N]$ with $\varsigma=0$ meaning i.i.d. data across workers \cite{lian2017can}.  Finally, it is worth noting that we do \textit{not} require a bounded gradient assumption, which is often used in distributed optimization analysis \citep{nedic2009distributed}.

\subsection{Convergence Analysis for \DePRL}\label{sec:convergence-results}
\begin{theorem}\label{thm:loss_convergence}
Under Assumptions \ref{assumption-weight}-\ref{assumption:global-var}, we choose learning rates satisfying $\alpha\leq \frac{1+36\tau^2}{\tau L}$ and $\beta\leq \min\left(1/L,N/2,\frac{1-q}{3\sqrt{2}CLN}\right)$, where $C:=\frac{2(1+p^{-N})}{1-p^{N}}$, $q:=(1-p^{N})^{1/N}$ and $p=\arg\min P_{i,j}, \forall i,j, P_{i,j}>0.$ 
 Denote the optimal parameters of global representation and local heads as $\pmb{\phi}^*$ and $\{\pmb{\theta}_i^*\}_{i=1}^N$, respectively.  The sequence of parameters  $\{ \{\pmb{\phi}_i(k)\}_{i=1}^N,  \{\pmb{\theta}_i(k)\}_{i=1}^N,\forall k\}$ generated by \DePRL satisfy
\begin{align}\label{eq:thm_loss}
&\frac{1}{K}\!\sum\limits_{k=0}^{K-1}\mathbb{E}[M(k)]
\leq\frac{4f(\bar{\pmb{\phi}}(0),\!\{\pmb{\theta}_i(0)\}_{i=1}^N)\!-\!4 f({\pmb{\phi}^*},\!\{\pmb{\theta}_i^*\}_{i=1}^N)}{K\beta}\nonumber\displaybreak[1]\\
&+\frac{2\beta L}{N}\sigma^2+\frac{12\alpha^3L^2\tau}{\beta}(\tau-1)(6\tau+1)\sigma^2+\frac{2\alpha^2\tau L}{\beta}\sigma^2\nonumber\displaybreak[1]\\
&+\frac{2\beta}{3 N}\!\left(\!1\!+\!\frac{1}{L^2}\!\right)\!\sigma^2\!+\!\frac{2\beta}{N}\!\left(\!1\!+\!\frac{1}{L^2}\!\right)\!\varsigma^2.
\end{align}
\end{theorem}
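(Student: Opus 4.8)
The plan is to treat $M(k)$ as a one-round Lyapunov quantity and derive a descent inequality for the coupled loss $f(\bar{\pmb{\phi}}(k),\{\pmb{\theta}_i(k)\}_{i=1}^N)$ along the trajectory, then telescope over $k=0,\dots,K-1$. The first observation I would exploit is that, because $\bP$ is doubly stochastic (Assumption~\ref{assumption-weight}), the consensus step \eqref{eq:global_model_update2} leaves the average representation invariant, so $\bar{\pmb{\phi}}(k+1)=\bar{\pmb{\phi}}(k)-\tfrac{\beta}{N}\sum_{i=1}^N g_{\pmb{\phi}}(\pmb{\phi}_i(k),\pmb{\theta}_i(k+1))$; the averaged representation thus obeys a clean, centralized-looking stochastic gradient recursion even though each worker only holds a local copy. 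I would then split the per-round change of the loss as $f(\bar{\pmb{\phi}}(k+1),\{\pmb{\theta}_i(k+1)\})-f(\bar{\pmb{\phi}}(k),\{\pmb{\theta}_i(k)\})$ into a representation part (varying $\bar{\pmb{\phi}}$ at the new heads $\{\pmb{\theta}_i(k+1)\}$) and a head part (varying the heads at the old representation $\bar{\pmb{\phi}}(k)$), applying the $L$-smoothness of Assumption~\ref{assumption-lipschitz} to each.

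For the representation part I would apply the smoothness inequality to the averaged update, take expectations, and use unbiasedness (Assumption~\ref{assumption-gradient}) so the inner product yields a $-\beta\|\nabla_{\pmb{\phi}}f(\bar{\pmb{\phi}}(k),\{\pmb{\theta}_i(k+1)\})\|^2$ descent term together with a quadratic term controlled by $\sigma^2/N$ (Assumption~\ref{assumption-variance}). The crucial subtlety is that each stochastic gradient is evaluated at the \emph{local} $\pmb{\phi}_i(k)$ rather than at $\bar{\pmb{\phi}}(k)$; I would bound the resulting mismatch by $L\|\pmb{\phi}_i(k)-\bar{\pmb{\phi}}(k)\|$ via Assumption~\ref{assumption-lipschitz}, which is precisely what introduces the consensus-error term $\tfrac1N\sum_i\|\pmb{\phi}_i(k)-\bar{\pmb{\phi}}(k)\|^2$ into $M(k)$, and I would absorb the inter-worker heterogeneity through Assumption~\ref{assumption:global-var}, producing the $\varsigma^2$ contribution.

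For the head part, each worker performs $\tau$ inner SGD steps \eqref{eq:local_model_update}, so I would run a local-SGD-style analysis: writing $\pmb{\theta}_i(k+1)-\pmb{\theta}_i(k)=-\alpha\sum_{s=0}^{\tau-1}g_{\pmb{\theta}}(\pmb{\phi}_i(k),\pmb{\theta}_i(k,s))$, I would control the client drift $\|\pmb{\theta}_i(k,s)-\pmb{\theta}_i(k)\|$ accumulated over the inner loop by a recursion that uses smoothness and the variance bound. Summing these drift bounds is what generates the higher-order terms $\tfrac{\alpha^3 L^2\tau(\tau-1)(6\tau+1)}{\beta}\sigma^2$ and $\tfrac{\alpha^2\tau L}{\beta}\sigma^2$; the $1/\beta$ prefactors arise because, matching the two-timescale weighting $\alpha\tau/\beta$ built into $M(k)$, I would normalize the head descent by $\beta$ so it can be added to the representation descent on a common scale, yielding the $\tfrac{\alpha\tau}{\beta}\|\nabla_{\pmb{\theta}}f\|^2$ term.

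Separately I would establish a recursion for the consensus error using the contraction of $\bP$: iterating \eqref{eq:global_model_update} and \eqref{eq:global_model_update2} and projecting off the average gives $\pmb{\Phi}(k+1)-\bar{\pmb{\Phi}}(k+1)=(\pmb{\Phi}(k)-\bar{\pmb{\Phi}}(k)-\beta(\cdots))(\bP-\tfrac1N\mathbf{1}\mathbf{1}^\top)$, and bounding the $N$-step mixing by the minimum positive weight $p$ yields the geometric factor $q=(1-p^N)^{1/N}$ and amplitude $C$, so that the accumulated consensus error over $K$ rounds is controlled by $\beta^2$ times the gradient magnitudes and noise. The final step is to add the representation descent, the $\beta$-normalized head descent, and the consensus-error recursion, choose $\alpha,\beta$ as in the statement so that every non-negative left-hand coefficient is bounded below by a constant (essentially $\tfrac14$ after the stated step-size restrictions, which is why the factor $4$ appears in front of the loss gap), telescope the loss from $k=0$ to $K-1$, and divide by $K$. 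I expect the main obstacle to be the coupling itself: because the representation gradient is taken at the \emph{already-updated} heads $\pmb{\theta}_i(k+1)$ while the head gradient is taken at the \emph{old} representation $\pmb{\phi}_i(k)$, the two descent inequalities cannot be combined naively, and forcing the cross terms to cancel requires the precise two-timescale weighting $\alpha\tau/\beta$ and a careful step-size coupling so that the drift and heterogeneity errors do not overwhelm the descent.
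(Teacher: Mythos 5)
Your plan matches the paper's proof in every essential respect: the same smoothness-based decomposition of the one-round loss drift into a representation part (at the updated heads) and a head part (at the old representation), the same observation that double stochasticity makes $\bar{\pmb{\phi}}(k)$ follow a centralized-looking SGD recursion, the same local-SGD drift recursion for the $\tau$ inner head steps producing the $\alpha^3$ and $\alpha^2$ noise terms, the same geometric-mixing bound on the consensus error (the paper uses Lemma 4 of Nedi\'{c} and Ozdaglar to bound $|1/N-\bP^{k-s}(i,j)|$, equivalent to your projection-off-the-average phrasing), and the same final telescoping and division by $K\beta/4$. The proposal is correct and takes essentially the same route as the paper.
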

There are  two terms on right hand side of \eqref{eq:thm_loss}: (i) a vanishing term $ \frac{4f(\bar{\pmb{\phi}}(0),\{\pmb{\theta}_i(0)\}_{i=1}^N)-4f({\pmb{\phi}}^*,\{\pmb{\theta}_i^*\}_{i=1}^N)}{K\beta}$ that goes to zero as $K$ increases; and (ii) a constant noise term $\frac{2\beta L}{N}\sigma^2+\frac{12\alpha^3L^2\tau}{\beta}(\tau-1)(6\tau+1)\sigma^2
+\frac{2\alpha^2\tau L}{\beta}\sigma^2+\frac{2\beta}{3 N}\left(1+\frac{1}{L^2}\right)\sigma^2+\frac{2\beta}{ N}\left(1+\frac{1}{L^2}\right)\varsigma^2$ that depends on the problem instance and is independent of $K$.  
The  decay rate of the vanishing term  matches that of conventional decentralized learning frameworks with a single shared model \cite{lian2017can, lian2018asynchronous, assran2019stochastic}.  The constant noise term mainly comes from the variance of stochastic partial gradients on consensus global representation $\bar{\pmb{\phi}}(k)$ and local head $\{\pmb{\theta}_i(k)\}_{i=1}^N$, as well as the global variability due to the model heterogeneity when bounding the consensus error of global representation, i.e., $\|\bar{\pmb{\phi}}(k)-\pmb{\phi}_i(k)\|^2, \forall i\in[N]$ at each round $k$.  In particular, the noise term $\frac{12\alpha^3L^2\tau}{\beta}(\tau-1)(6\tau+1)\sigma^2+\frac{2\alpha^2\tau L}{\beta}\sigma^2$ is caused by $\tau$-step local head update, where the first part measures the accumulated error between each intermediate update  $\pmb{\theta}_i(k,s), \forall s\in\{0, 1, \ldots, \tau-1\}$  and $\pmb{\theta}_i(k)$, i.e., $\mathbb{E}\|\pmb{\theta}_i(k,s)-\pmb{\theta}_i(k)\|^2,$ and goes to zero when $\tau=1$ due to the fact that $\pmb{\theta}_i(k,0)=\pmb{\theta}_i(k)$. To lower its impact, an inverse relationship between the local head learning rate $\alpha$ and the updated steps $\tau$ in each round is desired, i.e., $\alpha=\mathcal{O}(\frac{1}{\tau})$, such that the error can be offset by a small $\alpha$. This is consistent with observations in the PS setting with non-IID datasets across workers \cite{yang2021achieving}.

\begin{corollary}\label{cor:1}
Let $\alpha=\frac{1}{\tau\sqrt{K}}$ and $\beta=\sqrt{{N}/{K}}$. The convergence rate of \DePRL is 
$\mathcal{O}\left(\frac{1}{\sqrt{NK}}+\frac{1}{K\sqrt{N}}+\frac{1}{\tau\sqrt{NK}}\right),$ 
when the total number of communication rounds $K$ satisfies 
    $K\geq 
    \max\left(\frac{18C^2L^2N^3}{(1-q)^2},\frac{(2L^2+2)^2}{NL^4},NL^2\right).$
\end{corollary}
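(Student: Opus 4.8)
The plan is to obtain Corollary~\ref{cor:1} as a direct corollary of Theorem~\ref{thm:loss_convergence}: I would substitute $\alpha=\frac{1}{\tau\sqrt K}$ and $\beta=\sqrt{N/K}$ into the bound~\eqref{eq:thm_loss}, simplify the six terms on its right-hand side into the three order classes claimed, and separately check that this pair $(\alpha,\beta)$ satisfies the feasibility constraints of Theorem~\ref{thm:loss_convergence}, which is exactly what produces the stated lower bound on $K$.

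First I would reduce each term, writing $\Delta:=f(\bar{\pmb{\phi}}(0),\{\pmb{\theta}_i(0)\}_{i=1}^N)-f(\pmb{\phi}^*,\{\pmb{\theta}_i^*\}_{i=1}^N)$ and using $1/\beta=\sqrt{K/N}$. The vanishing term becomes $\frac{4\Delta}{K\beta}=\frac{4\Delta}{\sqrt{NK}}$; the first noise term becomes $\frac{2\beta L}{N}\sigma^2=\frac{2L\sigma^2}{\sqrt{NK}}$; and the last two become $\frac{2(1+1/L^2)}{3\sqrt{NK}}\sigma^2$ and $\frac{2(1+1/L^2)}{\sqrt{NK}}\varsigma^2$, so these four are all $O(1/\sqrt{NK})$. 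For the fourth term, $\frac{2\alpha^2\tau L}{\beta}\sigma^2=\frac{2L\sigma^2}{\tau\sqrt{NK}}=O\!\left(\frac{1}{\tau\sqrt{NK}}\right)$. The one term deserving care is the third: since $\alpha^3/\beta=\frac{1}{\tau^3K\sqrt N}$, it equals $\frac{12L^2(\tau-1)(6\tau+1)}{\tau^2K\sqrt N}\sigma^2$, and because $\frac{(\tau-1)(6\tau+1)}{\tau^2}\le 6$ for every $\tau\ge 1$, it is at most $\frac{72L^2\sigma^2}{K\sqrt N}=O\!\left(\frac{1}{K\sqrt N}\right)$. Collecting the three classes gives precisely $O\!\left(\frac{1}{\sqrt{NK}}+\frac{1}{K\sqrt N}+\frac{1}{\tau\sqrt{NK}}\right)$.

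Next I would verify feasibility and read off the $K$-threshold. Squaring $\beta=\sqrt{N/K}$, the constraint $\beta\le\frac{1-q}{3\sqrt2\,CLN}$ rearranges to $K\ge\frac{18C^2L^2N^3}{(1-q)^2}$ (using $(3\sqrt2\,CLN)^2=18C^2L^2N^2$), the constraint $\beta\le 1/L$ gives $K\ge NL^2$, and $\beta\le N/2$ gives only the mild $K\ge 4/N$; likewise $\alpha\le\frac{1+36\tau^2}{\tau L}$ reduces to the mild $K\ge\frac{L^2}{(1+36\tau^2)^2}$. To keep the heterogeneity and variance terms carrying the factor $(1+1/L^2)$ at the $O(1/\sqrt{NK})$ scale, I would additionally require $\frac{2(1+1/L^2)}{\sqrt{NK}}\le 1$, equivalently $K\ge\frac{(2L^2+2)^2}{NL^4}$. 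Taking the maximum of the three non-trivial bounds yields the condition stated in the corollary.

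This argument is essentially bookkeeping, so there is no deep obstacle; the two places that merit attention are the $\tau$-dependence of the third term --- where one must confirm that the $\tau^3$ generated by $\alpha^3\cdot\tau(\tau-1)(6\tau+1)$ cancels against the $\tau^3$ in $\alpha^{-3}\propto(\tau\sqrt K)^3$, leaving a bounded $\tau$-factor --- and matching the algebraic constants in the three thresholds exactly to $\frac{18C^2L^2N^3}{(1-q)^2}$, $\frac{(2L^2+2)^2}{NL^4}$, and $NL^2$.
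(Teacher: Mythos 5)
Your proposal is correct and follows essentially the same route as the paper: the paper's proof is exactly this direct substitution of $\alpha=\frac{1}{\tau\sqrt K}$ and $\beta=\sqrt{N/K}$ into \eqref{eq:thm_loss}, including the bound $(\tau-1)(6\tau+1)/\tau^2\le 6$ that yields the $\frac{72L^2}{K\sqrt N}\sigma^2$ term. The only cosmetic difference is the origin of the threshold $\frac{(2L^2+2)^2}{NL^4}$: in the paper it comes from the constraint $\beta\le\frac{NL^2}{2L^2+2}$ actually used inside the proof of Theorem~\ref{thm:loss_convergence} (rather than the $\beta\le N/2$ stated in the theorem), whereas you reverse-engineer it from normalizing the variance terms --- both give the identical condition.
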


Since $\frac{1}{K\sqrt{N}}$ and $\frac{1}{\tau\sqrt{NK}}$ are dominated by $
\frac{1}{\sqrt{NK}}$, \DePRL with two coupled parameters achieves  a \textbf{linear speedup} for convergence, i.e., we can proportionally decrease $K$ as $N$ increases while keeping the same convergence rate. This is the first linear speedup result for personalized decentralized learning with shared representations, and is highly desirable since it implies that one can efficiently leverage the massive parallelism in large-scale decentralized systems. An interesting point is that our result also indicates that the number
of local updates $\tau$ does not hurt the convergence with a proper learning rate choice for $\alpha$ as observed in PS setting \cite{yang2021achieving}. {Need to mention that local SGD steps usually slow down the convergence around $\mathcal{O}(\frac{\tau}{K})$ even for strongly convex objectives as shown in \citet{li2019convergence}.}

\begin{table*}[t]
\centering
\caption{Average test accuracy  with different communication graphs and data heterogeneities.} 
\scalebox{0.8}{
\begin{tabular}{|c|c|ccc|ccc|ccc|}
\hline
\multirow{2}{*}{\begin{tabular}[c]{@{}c@{}}Dataset\\ (Model)\end{tabular}}           & \multirow{2}{*}{$\pi$} & \multicolumn{3}{c|}{Ring}                                         & \multicolumn{3}{c|}{Random}                                       & \multicolumn{3}{c|}{FC}                                           \\ \cline{3-11} 
&                           & \multicolumn{1}{c|}{D-PSGD} & \multicolumn{1}{c|}{DisPFL} & DePRL & \multicolumn{1}{c|}{D-PSGD} & \multicolumn{1}{c|}{DisPFL} & DePRL & \multicolumn{1}{c|}{D-PSGD} & \multicolumn{1}{c|}{DisPFL} & DePRL \\ \hline\hline

\multirow{3}{*}{\begin{tabular}[c]{@{}c@{}}CIFAR-100\\ (ResNet-18)\end{tabular}}     
&0.1 &\multicolumn{1}{c|}{25.18\scriptsize{$\pm$0.4}} &\multicolumn{1}{c|}{46.09\scriptsize{$\pm$0.2}} &\textbf{60.72}\scriptsize{$\pm$0.2} &\multicolumn{1}{c|}{30.27\scriptsize{$\pm$0.5}} &\multicolumn{1}{c|}{47.77\scriptsize{$\pm$0.3}} &\textbf{61.51}\scriptsize{$\pm$0.5} &\multicolumn{1}{c|}{33.04\scriptsize{$\pm$0.7}} &\multicolumn{1}{c|}{47.96\scriptsize{$\pm$0.3}} &\textbf{62.40}\scriptsize{$\pm$0.7}  \\

&0.3 &\multicolumn{1}{c|}{26.51\scriptsize{$\pm$0.4}} &\multicolumn{1}{c|}{37.92\scriptsize{$\pm$0.3}} &\textbf{49.82}\scriptsize{$\pm$0.4} &\multicolumn{1}{c|}{31.71\scriptsize{$\pm$0.3}} &\multicolumn{1}{c|}{39.91\scriptsize{$\pm$0.5}} &\textbf{50.49}\scriptsize{$\pm$0.7} &\multicolumn{1}{c|}{34.93\scriptsize{$\pm$0.7}} &\multicolumn{1}{c|}{40.37\scriptsize{$\pm$0.5}} &\textbf{51.51}\scriptsize{$\pm$0.5}  \\

&0.5 &\multicolumn{1}{c|}{26.90\scriptsize{$\pm$0.3}} &\multicolumn{1}{c|}{35.33\scriptsize{$\pm$0.4}} &\textbf{45.89}\scriptsize{$\pm$0.4} &\multicolumn{1}{c|}{32.05\scriptsize{$\pm$0.4}} &\multicolumn{1}{c|}{37.39\scriptsize{$\pm$0.3}} &\textbf{46.68}\scriptsize{$\pm$0.4} &\multicolumn{1}{c|}{35.22\scriptsize{$\pm$0.6}} &\multicolumn{1}{c|}{37.86\scriptsize{$\pm$0.3}} &\textbf{47.63}\scriptsize{$\pm$0.3}   \\ \hline\hline

\multirow{3}{*}{\begin{tabular}[c]{@{}c@{}}CIFAR-10\\ (VGG-11)\end{tabular}}         
&0.1 &\multicolumn{1}{c|}{53.91\scriptsize{$\pm$0.2}} &\multicolumn{1}{c|}{86.38\scriptsize{$\pm$0.3}} &\textbf{89.57}\scriptsize{$\pm$0.2} &\multicolumn{1}{c|}{57.69\scriptsize{$\pm$0.2}} &\multicolumn{1}{c|}{89.01\scriptsize{$\pm$0.2}} &\textbf{91.03}\scriptsize{$\pm$0.1} &\multicolumn{1}{c|}{58.90\scriptsize{$\pm$0.1}} &\multicolumn{1}{c|}{89.19\scriptsize{$\pm$0.3}} &\textbf{91.33}\scriptsize{$\pm$0.2}  \\

&0.3 &\multicolumn{1}{c|}{59.17\scriptsize{$\pm$0.2}} &\multicolumn{1}{c|}{73.48\scriptsize{$\pm$0.3}} &\textbf{76.41}\scriptsize{$\pm$0.3} &\multicolumn{1}{c|}{64.12\scriptsize{$\pm$0.4}} &\multicolumn{1}{c|}{77.36\scriptsize{$\pm$0.4}} &\textbf{79.60}\scriptsize{$\pm$0.2} &\multicolumn{1}{c|}{65.82\scriptsize{$\pm$0.3}} &\multicolumn{1}{c|}{78.52\scriptsize{$\pm$0.5}} &\textbf{79.84}\scriptsize{$\pm$0.4}  \\

&0.5 &\multicolumn{1}{c|}{60.45\scriptsize{$\pm$0.4}} &\multicolumn{1}{c|}{68.83\scriptsize{$\pm$0.2}} &\textbf{72.51}\scriptsize{$\pm$0.2} &\multicolumn{1}{c|}{65.48\scriptsize{$\pm$0.2}} &\multicolumn{1}{c|}{73.30\scriptsize{$\pm$0.4}} &\textbf{74.80}\scriptsize{$\pm$0.2} &\multicolumn{1}{c|}{67.30\scriptsize{$\pm$0.3}} &\multicolumn{1}{c|}{74.50\scriptsize{$\pm$0.3}} &\textbf{75.04}\scriptsize{$\pm$0.2}   \\ \hline\hline

\multirow{3}{*}{\begin{tabular}[c]{@{}c@{}}Fashion\\ MNIST\\ (AlexNet)\end{tabular}} 
&0.1 &\multicolumn{1}{c|}{77.45\scriptsize{$\pm$0.2}} &\multicolumn{1}{c|}{95.74\scriptsize{$\pm$0.2}} &\textbf{96.66}\scriptsize{$\pm$0.2} &\multicolumn{1}{c|}{84.74\scriptsize{$\pm$0.2}} &\multicolumn{1}{c|}{96.59\scriptsize{$\pm$0.3}} &\textbf{97.16}\scriptsize{$\pm$0.2} &\multicolumn{1}{c|}{87.24\scriptsize{$\pm$0.3}} &\multicolumn{1}{c|}{96.76\scriptsize{$\pm$0.3}} &\textbf{97.36}\scriptsize{$\pm$0.2}  \\

&0.3 &\multicolumn{1}{c|}{81.95\scriptsize{$\pm$0.5}} &\multicolumn{1}{c|}{91.52\scriptsize{$\pm$0.3}} &\textbf{92.81}\scriptsize{$\pm$0.2} &\multicolumn{1}{c|}{87.76\scriptsize{$\pm$0.3}} &\multicolumn{1}{c|}{93.47\scriptsize{$\pm$0.3}} &\textbf{94.81}\scriptsize{$\pm$0.2} &\multicolumn{1}{c|}{89.80\scriptsize{$\pm$0.5}} &\multicolumn{1}{c|}{93.50\scriptsize{$\pm$0.2}} &\textbf{95.03}\scriptsize{$\pm$0.2}  \\

&0.5 &\multicolumn{1}{c|}{84.63\scriptsize{$\pm$0.2}} &\multicolumn{1}{c|}{89.49\scriptsize{$\pm$0.2}} &\textbf{91.36}\scriptsize{$\pm$0.3} &\multicolumn{1}{c|}{88.93\scriptsize{$\pm$0.5}} &\multicolumn{1}{c|}{91.99\scriptsize{$\pm$0.3}} &\textbf{93.55}\scriptsize{$\pm$0.2} &\multicolumn{1}{c|}{90.38\scriptsize{$\pm$0.4}} &\multicolumn{1}{c|}{92.13\scriptsize{$\pm$0.2}} &\textbf{93.87}\scriptsize{$\pm$0.3}   \\ \hline\hline

\multirow{3}{*}{\begin{tabular}[c]{@{}c@{}}HARBox\\ (DNN)\end{tabular}}              
&0.1 &\multicolumn{1}{c|}{54.90\scriptsize{$\pm$0.7}} &\multicolumn{1}{c|}{90.96\scriptsize{$\pm$0.1}} &\textbf{92.07}\scriptsize{$\pm$0.1} &\multicolumn{1}{c|}{57.59\scriptsize{$\pm$0.6}} &\multicolumn{1}{c|}{91.36\scriptsize{$\pm$0.3}} &\textbf{92.49}\scriptsize{$\pm$0.1} &\multicolumn{1}{c|}{58.23\scriptsize{$\pm$0.3}} &\multicolumn{1}{c|}{91.47\scriptsize{$\pm$0.1}} &\textbf{93.46}\scriptsize{$\pm$0.1}  \\

&0.3 &\multicolumn{1}{c|}{55.41\scriptsize{$\pm$0.7}} &\multicolumn{1}{c|}{80.02\scriptsize{$\pm$0.2}} &\textbf{80.85}\scriptsize{$\pm$0.1} &\multicolumn{1}{c|}{57.97\scriptsize{$\pm$0.7}} &\multicolumn{1}{c|}{80.35\scriptsize{$\pm$0.2}} &\textbf{81.30}\scriptsize{$\pm$0.2} &\multicolumn{1}{c|}{58.93\scriptsize{$\pm$0.7}} &\multicolumn{1}{c|}{82.14\scriptsize{$\pm$0.2}} &\textbf{83.55}\scriptsize{$\pm$0.2} \\

&0.5 &\multicolumn{1}{c|}{56.66\scriptsize{$\pm$0.7}} &\multicolumn{1}{c|}{74.47\scriptsize{$\pm$0.1}} &\textbf{75.84}\scriptsize{$\pm$0.1} &\multicolumn{1}{c|}{58.59\scriptsize{$\pm$0.7}} &\multicolumn{1}{c|}{74.72\scriptsize{$\pm$0.3}} &\textbf{76.22}\scriptsize{$\pm$0.2} &\multicolumn{1}{c|}{59.17\scriptsize{$\pm$0.7}} &\multicolumn{1}{c|}{77.61\scriptsize{$\pm$0.3}} &\textbf{78.74}\scriptsize{$\pm$0.2}  \\ \hline
\end{tabular}
}
\label{tbl:final-accuracy}
\end{table*}

\subsection{Intuitions and Proof Sketch}\label{sec:convergence-proof}
We now highlight the key ideas and challenges behind the convergence proof of \DePRL with two coupled parameters.  Given the definition of $\epsilon$-approximation solution defined in~(\ref{eq:epsilon}) and \eqref{eq:lyapunov-function3}, we characterize the descending property of the global loss function as 
\begin{align} \label{eq:step1}
   &\hspace{-0.3cm}\mathbb{E}[f(\bar{\pmb{\phi}}(k+1),\{\pmb{\theta}_i(k+1)\}_{i=1}^N)]- \mathbb{E}[f(\bar{\pmb{\phi}}(k),\{\pmb{\theta}_i(k)\}_{i=1}^N)]\nonumber\allowdisplaybreaks\\
   &{\leq} \underset{C_1}{\underbrace{\frac{1}{N}\sum_{i=1}^N\mathbb{E}\Big\langle \nabla_{\pmb{\phi}}F_i(\bar{\pmb{\phi}}(k),\pmb{\theta}_i(k+1)), \bar{\pmb{\phi}}(k+1)\!-\!\bar{\pmb{\phi}}(k)\Big\rangle}}\nonumber\allowdisplaybreaks\\
   &+\underset{C_2}{\underbrace{\frac{1}{N}\sum_{i=1}^N\frac{L}{2}\mathbb{E}[\|\bar{\pmb{\phi}}(k+1)\!-\!\bar{\pmb{\phi}}(k)\|^2]}}\nonumber\allowdisplaybreaks\\
   &+ \underset{C_3}{\underbrace{\frac{1}{N}\sum_{i=1}^N\!\mathbb{E}\Big\langle \nabla_{\pmb{\theta}}F_i(\bar{\pmb{\phi}}(k),\pmb{\theta}_i(k)), \ \pmb{\theta}_i(k+1) \!-\! \pmb{\theta}_i(k) \Big\rangle}}\nonumber\allowdisplaybreaks\\
   &+\underset{C_4}{\underbrace{\frac{1}{N}\sum_{i=1}^N\frac{L}{2}\mathbb{E}[\|{\pmb{\theta}_i}(k+1)\!-\!{\pmb{\theta}_i}(k)\|^2]}}, 
\end{align}
by following the Lipschitz assumption. 
Bounding $C_1, C_2, C_3$ and $C_4$ leads to all key components in $M(k)$ defined in \eqref{eq:lyapunov-function3}, including the partial gradients on $\bar{\pmb{\phi}}(k)$ and $\{\pmb{\theta}_i(k)\}_{i=1}^N$ of the global loss function, the error of gradient estimation, and the average consensus error of the global representation.

As aforementioned, instead of learning a single shared model as in conventional decentralized learning frameworks \cite{lian2017can, lian2018asynchronous, assran2019stochastic}, \DePRL needs to handle multiple local heads that are strongly coupled with the global representation, which necessitates different proof techniques.  Below, we highlight several key differences: 1) \textbf{\textit{Coupled model parameters.}} The updates of local heads $\{\pmb{\theta}_i\}_{i=1}^N$ and global representations $\{\pmb{\phi}_i\}_{i=1}^N$ are strongly coupled, which makes bounding $C_1$ and $C_3$ challenging. In particular, the update of  consensus global representation $\bar{\pmb{\phi}}(k)$ depends on local heads $\{\pmb{\theta}_i(k+1)\}_{i=1}^N$ in $C_1$, and the update of local heads $\{\pmb{\theta}_i(k)\}_{i=1}^N$ depends on the  consensus global representation $\bar{\pmb{\phi}}(k)$ in $C_3$. Since the loss function is evaluated on  consensus global representation $\bar{\pmb{\phi}}(k)$, we show that bounding $C_1$ and $C_3$ can be reduced to bound the consensus error of global representation $\|\pmb{\phi}_i(k)-\bar{\pmb{\phi}}(k)\|^2$. {Specifically, the bound on consensus error allows us to control the terms involving the local partial gradients and local updates in the drift of the global loss function as shown in \eqref{eq:step1}, which also serves as a bridge to track the update of $\bar{\pmb{\phi}}(k+1)-\bar{\pmb{\phi}}(k+1)$ in $C_2$ and $\pmb{\theta}_i(k+1)-\pmb{\theta}_i(k)$ in $C_4$.} 2) \textbf{\textit{Consensus error.}} Based on \eqref{eq:global_model_update}, \eqref{eq:local-subgradient2} and \eqref{eq:global_model_update2}, the average consensus error $\mathbb{E}\|\pmb{\phi}_i(k)-\bar{\pmb{\phi}}(k)\|^2$ depends on both the consensus matrix $\bP$, and the local partial gradient on $\pmb{\phi}$, i.e., $g_{\pmb{\phi}}(\pmb{\phi}_i(k), \pmb{\theta}_i(k+1))$, which is correlated with local heads $\{\pmb{\theta}_i(k)\}_{i=1}^N$.  We address these impacts by leveraging Assumption~\ref{assumption:global-var}. 3) \textbf{\textit{Two learning rates.}} As discussed earlier, we leverage a weight term in \eqref{eq:lyapunov-function3} to capture the different learning rates for $\{\pmb{\theta}_i(k)\}_{i=1}^N$ and $\{\pmb{\phi}_i(k)\}_{i=1}^N$.  This weight benefits for characterizing the desired learning rate for convergence.

  \begin{figure*}[t]
  \centering
    \begin{minipage}{.48\textwidth}
  \centering
  \includegraphics[width=1\columnwidth]{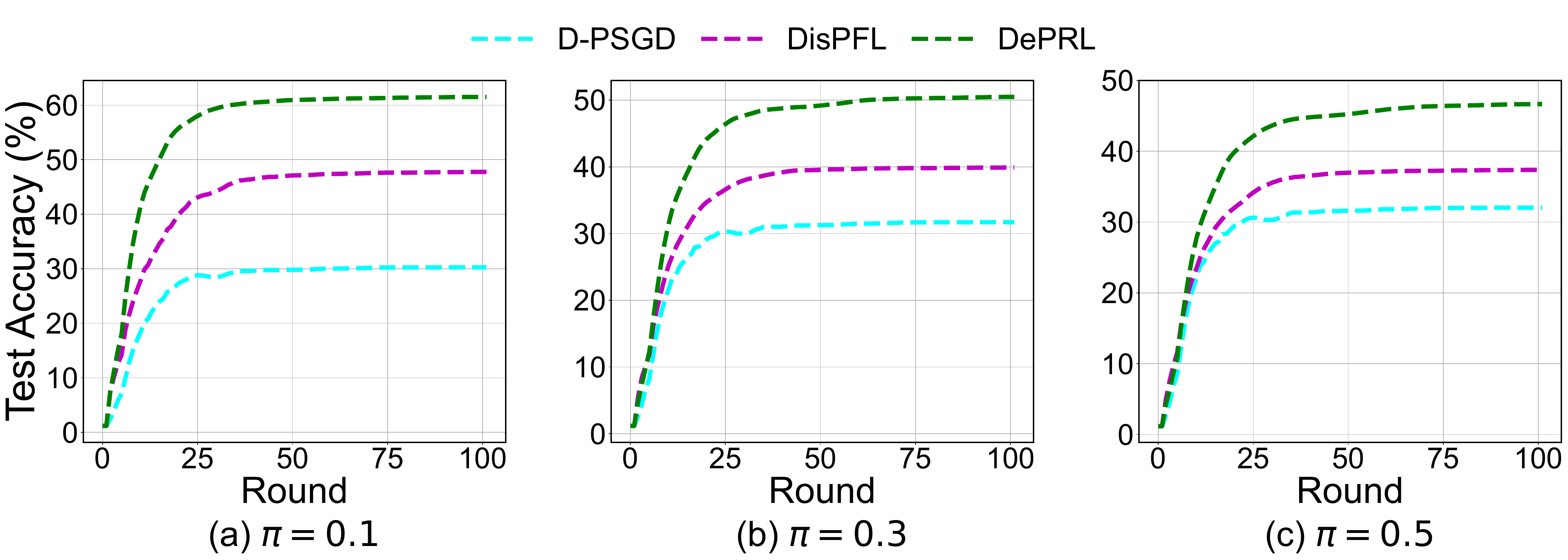}
 \subcaption{Test accuracy vs. training rounds.}
 \label{fig:CIFAR100-ResNet-random-round}
  \end{minipage}\hfill
     \begin{minipage}{.48\textwidth}
  \centering
  \includegraphics[width=1\columnwidth]{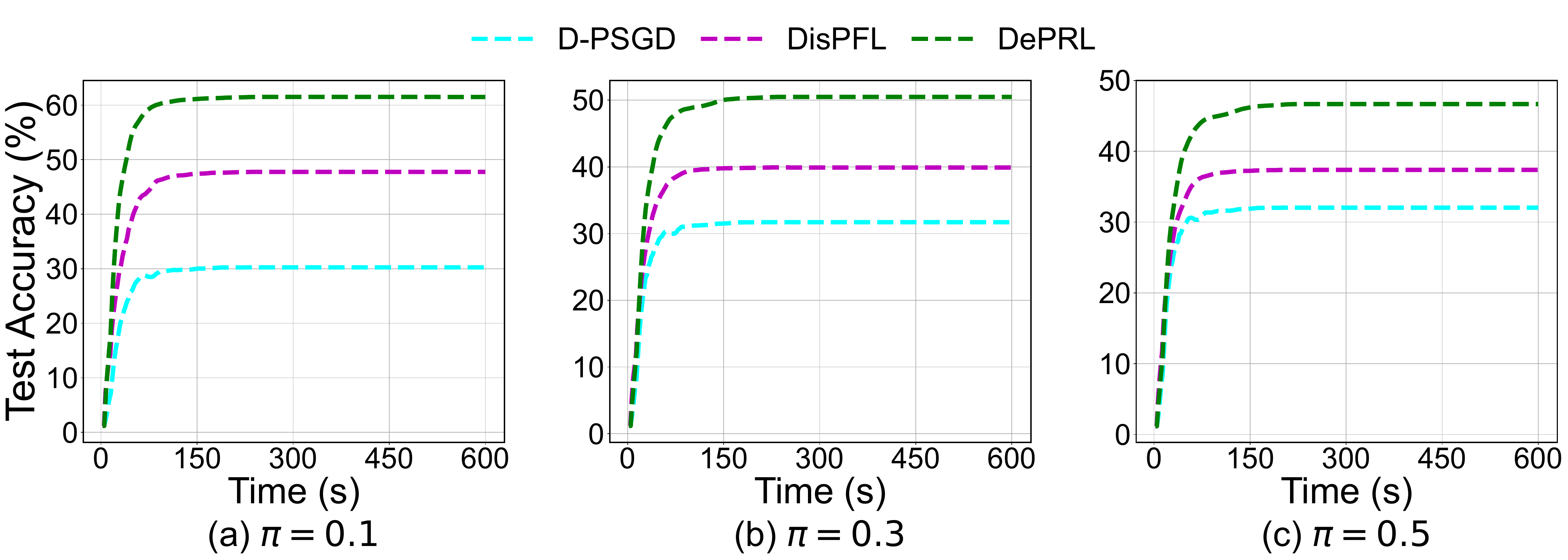}
 \subcaption{Test accuracy vs. training time.}
 \label{fig:CIFAR100-ResNet-random-time}
  \end{minipage}
  \caption{Learning curves of different baselines using ResNet-18 on non-IID partitioned CIFAR-100 with different heterogeneities when the communication graph is ``Random''.}
	\label{fig:CIFAR100-ResNet-random}
  \end{figure*}

\section{Experiments}\label{sec:sim}

We experimentally evaluate the performance of \DePRL.  Further details about experiments, hyperparameters, and additional results are provided in supplementary materials.  


\textbf{Datasets and Models.} We use (i) three image classification datasets: CIFAR-100, CIFAR-10 \cite{krizhevsky2009learning} and Fashion-MNIST \cite{xiao2017fashion}; and (ii) a human activity recognition dataset: HARBox \cite{ouyang2021clusterfl}.  We simulate non-IID scenario by considering a heterogeneous data partition for which the number of data points and class proportions are unbalanced  \citet{wang2020federated,wang2020tackling}.  In particular, we simulate a heterogeneous partition into $N$ workers by sampling $\boldsymbol p_i\sim\text{Dir}_N(\pi)$, {where $\pi$ is the parameter of Dirichlet distribution.}  
 We use ResNet-18 \cite{he2016deep} for CIFAR-100, VGG-11 \cite{simonyan2015very} for CIFAR-10, AlexNet \cite{krizhevsky2012imagenet} for Fashion-MNIST, and a fully connected DNN \cite{li2021hermes,li2022pyramidfl}.   As in \citet{collins2021exploiting}, we treat the head as the weights and biases as the final fully-connected layer in each of the models.

\textbf{Baselines.} We compare \DePRL with a diverse of baselines including both conventional decentralized learning algorithms and the popular PS based algorithms.  For decentralized setting, we take the commonly used D-PSGD \cite{lian2017can} and DisPFL \cite{dai2022dispfl}, a personalized method with a single shared model. PS based baselines include FedAvg \cite{mcmahan2017communication}, FedRep \cite{collins2021exploiting}, Ditto \cite{li2021ditto} and FedRoD \cite{hong2022fedrod}. We implement all algorithms in PyTorch \citep{paszke2017automatic} on Python~3 with three NVIDIA RTX A6000 GPUs.

\textbf{Communication Graph.} Based on our model and theoretical analysis, we randomly generate a connected communication graph (``Random'' for short) for decentralized settings.  We also experiment on two representative communication graph including ``Ring'' and ``fully connected (FC)''.  Further, since communications occur between the central server and workers in PS based setting, for a fair comparison\footnote{ Our ``fair comparison'' is defined from the perspective of total communications. Specifically, each worker sends its update only to two neighbors in ``Ring''. As a result, the total communications are the same as that for PS based schemes (i.e., a star graph).}, we only compare decentralized baselines with PS based baselines under ``Ring''.   Due to space constraints, we relegate the comparisons with PS based methods to supplementary materials. 

\textbf{Configurations.} All results are averaged over four random seeds. The final accuracy is calculated through the average of each worker's local test accuracy.   The total worker number is $128$, and the epoch number for local head update is $2$.  An ablation study is conducted in supplementary materials.


\textbf{Testing Accuracy.}  We show the final test accuracy for all considered algorithms under various settings in Table~\ref{tbl:final-accuracy}, and report the learning curve 
in Figure~\ref{fig:CIFAR100-ResNet-random}.   We observe that our \DePRL outperforms all baselines over all four datasets and three non-IID partitions. First, the state-of-the-art D-PSGD performs worse in non-IID settings due to the fact that it targets on learning a single model without encouraging personalization.  Second, though DisPFL is incorporated with personalization, and significantly improves the performance of D-PSGD, \DePRL always outperforms DisPFL. In particular, \DePRL achieves a remarkable performance improvement on non-IID partitioned CIFAR-100.  Compared to CIFAR-10, the data heterogeneity across workers are further increased due to the larger number of classes, and hence calls for personalization of local models.  This observation makes our representation learning augmented personalized model in \DePRL even pronounced compared to learning a single full-dimensional model in these baseline methods.  Finally, the superior performance of \DePRL over D-PSGD and DisPFL is consistent and robust over all communication graphs.

\begin{figure}
    \centering
 \includegraphics[width=1\columnwidth]{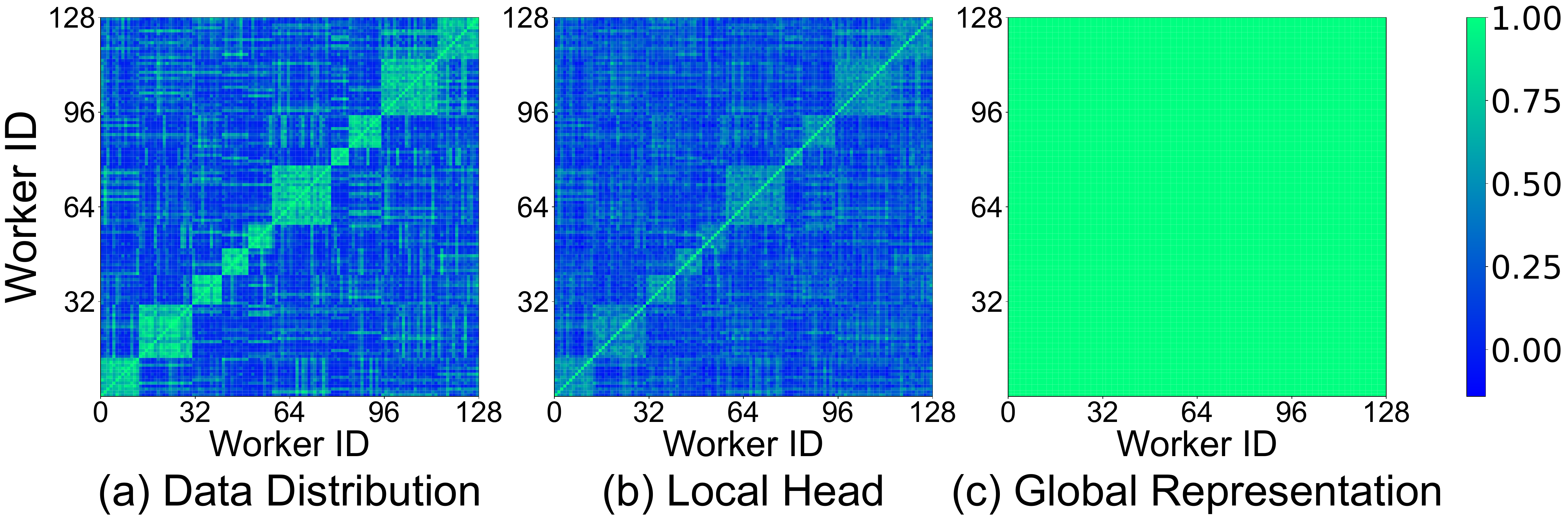}
\caption{Similarities between workers on (a) data distributions; (b) local heads; and (c) global representation.}
 \label{fig:similarity}
\end{figure}

\textbf{Learned Local Head and Global Representation.} To further advocate the benefits of \DePRL for producing personalized models via leveraging representation learning theory, we report the distance between learned local heads, global representation and task similarities. We measure the similarities by cos-similarity between data distributions, learned local heads, and learned global representation across workers.  As shown in Figure~\ref{fig:similarity} on non-IID partitioned CIFAR-100, \DePRL is able to accommodate the heterogeneities among workers not only with the learned local heads in alignment with local data distribution, but also with the same global representation.  This further validates our theoretical analysis.

\begin{figure}
    \centering
 \includegraphics[width=0.8\columnwidth]{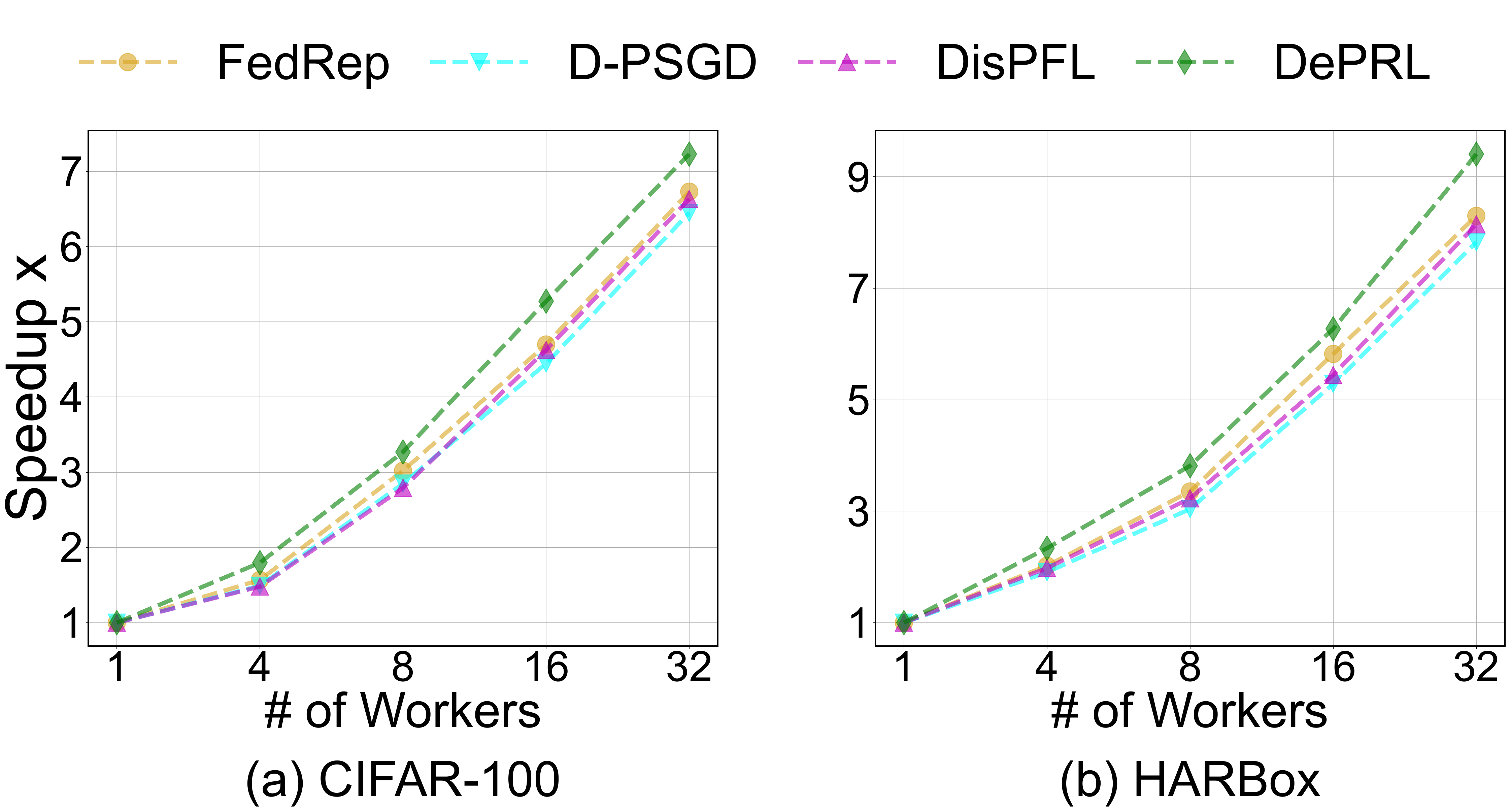}
\caption{Speedup with different number of workers.}
 \label{fig:linearspeedup}
\end{figure}

\textbf{Speedup.} We now validate our main theoretical result that \DePRL achieves a linear speedup of convergence. This often  means that the convergence time (measures how quickly the gradient norm converges to zero) should be linear in the number of workers. Specifically, we measure the convergence time of different algorithms, and compute their speedup with respect to that of a centralized setting as in D-PSGD \cite{lian2017can}. From Figure~\ref{fig:linearspeedup}, we observe that the speedup (convergence
time) is almost linearly increasing (decreasing) as the number of workers increases, which validates the linear speedup property of \DePRL.

\textbf{Generalization to New Workers.} We evaluate the effectiveness of global representation learned by \DePRL when generalizes it to new workers.  Specifically, when training all considered models using different datasets with $\alpha=0.3$, all initial 128 workers collaboratively learn a corresponding global representation $\pmb{\phi}$.  Then we encounter 64 new workers and partition the datasets across these new workers, which lead to significantly different datasets across workers compared to all initial workers.  Each new worker $i$ leverages the learned global representation by initial workers, and perform multiple local steps to learn its local head $\pmb{\theta}_i$. We then evaluate the test accuracy as above across these 64 new workers.  As shown in Table~\ref{tbl:generalization}, \DePRL significantly outperforms all baselines in the generalization performance across all communication graphs and varying levels of heterogeneity.

\begin{table}[t]
\centering
\caption{Generalization performance in terms of test accuracy.}
\scalebox{0.44}{
\begin{tabular}{|c|c|ccc|ccc|ccc|}
\hline
\multirow{2}{*}{\begin{tabular}[c]{@{}c@{}}Dataset\\ (Model)\end{tabular}}           & \multirow{2}{*}{$\pi$} & \multicolumn{3}{c|}{Ring}                                         & \multicolumn{3}{c|}{Random}                                       & \multicolumn{3}{c|}{FC}                                           \\ \cline{3-11} 
&                           & \multicolumn{1}{c|}{D-PSGD} & \multicolumn{1}{c|}{DisPFL} & DePRL & \multicolumn{1}{c|}{D-PSGD} & \multicolumn{1}{c|}{DisPFL} & DePRL & \multicolumn{1}{c|}{D-PSGD} & \multicolumn{1}{c|}{DisPFL} & DePRL \\ \hline\hline

\multirow{3}{*}{\begin{tabular}[c]{@{}c@{}}CIFAR-100\\ (ResNet-18)\end{tabular}}     
&0.1 &\multicolumn{1}{c|}{38.64\scriptsize{$\pm$0.1}} &\multicolumn{1}{c|}{34.67\scriptsize{$\pm$0.2}} &\textbf{53.58}\scriptsize{$\pm$0.3} &\multicolumn{1}{c|}{49.81\scriptsize{$\pm$0.2}} &\multicolumn{1}{c|}{42.93\scriptsize{$\pm$0.2}} &\textbf{53.78}\scriptsize{$\pm$0.2} &\multicolumn{1}{c|}{51.32\scriptsize{$\pm$0.2}} &\multicolumn{1}{c|}{47.50\scriptsize{$\pm$0.1}} &\textbf{54.16}\scriptsize{$\pm$0.2}   \\

&0.3 &\multicolumn{1}{c|}{27.75\scriptsize{$\pm$0.2}} &\multicolumn{1}{c|}{24.89\scriptsize{$\pm$0.1}} &\textbf{44.62}\scriptsize{$\pm$0.2} &\multicolumn{1}{c|}{42.27\scriptsize{$\pm$0.1}} &\multicolumn{1}{c|}{34.05\scriptsize{$\pm$0.1}} &\textbf{44.79}\scriptsize{$\pm$0.1} &\multicolumn{1}{c|}{43.42\scriptsize{$\pm$0.2}} &\multicolumn{1}{c|}{39.20\scriptsize{$\pm$0.1}} &\textbf{45.48}\scriptsize{$\pm$0.2}   \\

&0.5 &\multicolumn{1}{c|}{26.15\scriptsize{$\pm$0.2}} &\multicolumn{1}{c|}{22.93\scriptsize{$\pm$0.1}} &\textbf{42.27}\scriptsize{$\pm$0.2} &\multicolumn{1}{c|}{39.25\scriptsize{$\pm$0.1}} &\multicolumn{1}{c|}{31.11\scriptsize{$\pm$0.2}} &\textbf{42.65}\scriptsize{$\pm$0.2} &\multicolumn{1}{c|}{41.20\scriptsize{$\pm$0.1}} &\multicolumn{1}{c|}{37.26\scriptsize{$\pm$0.1}} &\textbf{42.68}\scriptsize{$\pm$0.1}    \\ \hline\hline

\multirow{3}{*}{\begin{tabular}[c]{@{}c@{}}CIFAR-10\\ (VGG-11)\end{tabular}}         
&0.1 &\multicolumn{1}{c|}{73.81\scriptsize{$\pm$0.2}} &\multicolumn{1}{c|}{70.85\scriptsize{$\pm$0.3}} &\textbf{75.93}\scriptsize{$\pm$0.2} &\multicolumn{1}{c|}{74.78\scriptsize{$\pm$0.3}} &\multicolumn{1}{c|}{73.62\scriptsize{$\pm$0.5}} &\textbf{80.22}\scriptsize{$\pm$0.2} &\multicolumn{1}{c|}{75.39\scriptsize{$\pm$0.3}} &\multicolumn{1}{c|}{73.71\scriptsize{$\pm$0.2}} &\textbf{80.48}\scriptsize{$\pm$0.2}  \\

&0.3 &\multicolumn{1}{c|}{57.34\scriptsize{$\pm$0.3}} &\multicolumn{1}{c|}{51.89\scriptsize{$\pm$0.4}} &\textbf{59.01}\scriptsize{$\pm$0.4} &\multicolumn{1}{c|}{58.86\scriptsize{$\pm$0.2}} &\multicolumn{1}{c|}{57.08\scriptsize{$\pm$0.3}} &\textbf{65.60}\scriptsize{$\pm$0.3} &\multicolumn{1}{c|}{59.68\scriptsize{$\pm$0.2}} &\multicolumn{1}{c|}{57.18\scriptsize{$\pm$0.4}} &\textbf{65.67}\scriptsize{$\pm$0.3}  \\

&0.5 &\multicolumn{1}{c|}{50.92\scriptsize{$\pm$0.2}} &\multicolumn{1}{c|}{47.36\scriptsize{$\pm$0.2}} &\textbf{52.72}\scriptsize{$\pm$0.2} &\multicolumn{1}{c|}{52.51\scriptsize{$\pm$0.3}} &\multicolumn{1}{c|}{49.75\scriptsize{$\pm$0.3}} &\textbf{63.04}\scriptsize{$\pm$0.2} &\multicolumn{1}{c|}{53.67\scriptsize{$\pm$0.2}} &\multicolumn{1}{c|}{49.81\scriptsize{$\pm$0.2}} &\textbf{63.29}\scriptsize{$\pm$0.3}    \\ \hline\hline

\multirow{3}{*}{\begin{tabular}[c]{@{}c@{}}Fashion\\ MNIST\\ (AlexNet)\end{tabular}} 
&0.1 &\multicolumn{1}{c|}{84.76\scriptsize{$\pm$0.3}} &\multicolumn{1}{c|}{83.72\scriptsize{$\pm$0.3}} &\textbf{87.34}\scriptsize{$\pm$0.2} &\multicolumn{1}{c|}{85.37\scriptsize{$\pm$0.4}} &\multicolumn{1}{c|}{85.44\scriptsize{$\pm$0.3}} &\textbf{88.48}\scriptsize{$\pm$0.2} &\multicolumn{1}{c|}{86.48\scriptsize{$\pm$0.3}} &\multicolumn{1}{c|}{85.65\scriptsize{$\pm$0.3}} &\textbf{88.86}\scriptsize{$\pm$0.2}   \\

&0.3 &\multicolumn{1}{c|}{74.84\scriptsize{$\pm$0.3}} &\multicolumn{1}{c|}{73.07\scriptsize{$\pm$0.2}} &\textbf{78.29}\scriptsize{$\pm$0.3} &\multicolumn{1}{c|}{78.54\scriptsize{$\pm$0.3}} &\multicolumn{1}{c|}{76.92\scriptsize{$\pm$0.2}} &\textbf{80.74}\scriptsize{$\pm$0.2} &\multicolumn{1}{c|}{79.16\scriptsize{$\pm$0.2}} &\multicolumn{1}{c|}{77.69\scriptsize{$\pm$0.2}} &\textbf{80.84}\scriptsize{$\pm$0.2}   \\

&0.5 &\multicolumn{1}{c|}{67.54\scriptsize{$\pm$0.3}} &\multicolumn{1}{c|}{65.64\scriptsize{$\pm$0.4}} &\textbf{71.14}\scriptsize{$\pm$0.3} &\multicolumn{1}{c|}{71.97\scriptsize{$\pm$0.2}} &\multicolumn{1}{c|}{70.31\scriptsize{$\pm$0.2}} &\textbf{77.37}\scriptsize{$\pm$0.3} &\multicolumn{1}{c|}{72.57\scriptsize{$\pm$0.4}} &\multicolumn{1}{c|}{70.61\scriptsize{$\pm$0.2}} &\textbf{77.52}\scriptsize{$\pm$0.2}    \\ \hline\hline

\multirow{3}{*}{\begin{tabular}[c]{@{}c@{}}HARBox\\ (DNN)\end{tabular}}              
&0.1 &\multicolumn{1}{c|}{51.07\scriptsize{$\pm$0.7}} &\multicolumn{1}{c|}{50.58\scriptsize{$\pm$0.6}} &\textbf{55.97}\scriptsize{$\pm$0.3} &\multicolumn{1}{c|}{51.23\scriptsize{$\pm$0.7}} &\multicolumn{1}{c|}{51.12\scriptsize{$\pm$0.7}} &\textbf{56.39}\scriptsize{$\pm$0.5} &\multicolumn{1}{c|}{51.84\scriptsize{$\pm$0.7}} &\multicolumn{1}{c|}{51.21\scriptsize{$\pm$0.6}} &\textbf{57.70}\scriptsize{$\pm$0.3}   \\

&0.3 &\multicolumn{1}{c|}{49.50\scriptsize{$\pm$0.3}} &\multicolumn{1}{c|}{48.97\scriptsize{$\pm$0.3}} &\textbf{55.86}\scriptsize{$\pm$0.3} &\multicolumn{1}{c|}{49.81\scriptsize{$\pm$0.3}} &\multicolumn{1}{c|}{49.49\scriptsize{$\pm$0.4}} &\textbf{56.11}\scriptsize{$\pm$0.3} &\multicolumn{1}{c|}{51.53\scriptsize{$\pm$0.5}} &\multicolumn{1}{c|}{49.55\scriptsize{$\pm$0.5}} &\textbf{56.39}\scriptsize{$\pm$0.3}   \\

&0.5 &\multicolumn{1}{c|}{48.24\scriptsize{$\pm$0.3}} &\multicolumn{1}{c|}{48.32\scriptsize{$\pm$0.3}} &\textbf{52.42}\scriptsize{$\pm$0.3} &\multicolumn{1}{c|}{48.28\scriptsize{$\pm$0.3}} &\multicolumn{1}{c|}{48.47\scriptsize{$\pm$0.2}} &\textbf{52.46}\scriptsize{$\pm$0.3} &\multicolumn{1}{c|}{48.82\scriptsize{$\pm$0.2}} &\multicolumn{1}{c|}{48.49\scriptsize{$\pm$0.2}} &\textbf{52.59}\scriptsize{$\pm$0.3}    \\ \hline
\end{tabular}
}
\label{tbl:generalization}
\end{table}

\section{Conclusions}\label{sec:con}

In this paper, we proposed a novel fully decentralized algorithm \DePRL by leveraging the representation learning theory to tackle the data heterogeneity for personalized decentralized learning. \DePRL learned a user-specific, and hence personalized set of local parameters for each worker, along with a global common representation parameter.  We proved that \DePRL achieves a linear speedup convergence. 
Extensive experiments also verified the superior performance of our proposed algorithm.  We note that our paper is the first to explore representation learning theory in personalized decentralized learning with a rigorous convergence analysis under general non-linear representations. This opens up several interesting directions for future research, including but not limited to adopting compression or quantization techniques to further reduce communication costs, and developing privacy-preserving decentralized algorithms over networks.    

\section*{Acknowledgements} 

This work was supported in part by the National Science Foundation (NSF) grants 2148309 and 2315614, and was supported in part by funds from OUSD R\&E, NIST, and industry partners as specified in the Resilient \& Intelligent NextG Systems (RINGS) program. This work was also supported in part by the U.S. Army Research Office (ARO) grant W911NF-23-1-0072, and the U.S. Department of Energy (DOE) grant DE-EE0009341. Any opinions, findings, and conclusions or recommendations expressed in this material are those of the authors and do not necessarily reflect the views of the funding agencies.

\bibliography{refs,refs_app}

\begin{thebibliography}{61}
\providecommand{\natexlab}[1]{#1}

\bibitem[{Arivazhagan et~al.(2019)Arivazhagan, Aggarwal, Singh, and
  Choudhary}]{arivazhagan2019federated}
Arivazhagan, M.~G.; Aggarwal, V.; Singh, A.~K.; and Choudhary, S. 2019.
\newblock Federated learning with personalization layers.
\newblock \emph{arXiv preprint arXiv:1912.00818}.

\bibitem[{Assran et~al.(2019)Assran, Loizou, Ballas, and
  Rabbat}]{assran2019stochastic}
Assran, M.; Loizou, N.; Ballas, N.; and Rabbat, M. 2019.
\newblock Stochastic gradient push for distributed deep learning.
\newblock In \emph{International Conference on Machine Learning}, 344--353.
  PMLR.

\bibitem[{Bengio, Courville, and Vincent(2013)}]{bengio2013representation}
Bengio, Y.; Courville, A.; and Vincent, P. 2013.
\newblock Representation learning: A review and new perspectives.
\newblock \emph{IEEE transactions on pattern analysis and machine
  intelligence}, 35(8): 1798--1828.

\bibitem[{Bergou et~al.(2022)Bergou, Burlachenko, Dutta, and
  Richt{\'a}rik}]{bergou2022personalized}
Bergou, E.~H.; Burlachenko, K.; Dutta, A.; and Richt{\'a}rik, P. 2022.
\newblock Personalized Federated Learning with Communication Compression.
\newblock \emph{arXiv preprint arXiv:2209.05148}.

\bibitem[{Borkar(2009)}]{borkar2009stochastic}
Borkar, V.~S. 2009.
\newblock \emph{Stochastic approximation: a dynamical systems viewpoint},
  volume~48.
\newblock Springer.

\bibitem[{Bottou, Curtis, and Nocedal(2018)}]{bottou2018optimization}
Bottou, L.; Curtis, F.~E.; and Nocedal, J. 2018.
\newblock Optimization methods for large-scale machine learning.
\newblock \emph{Siam Review}, 60(2): 223--311.

\bibitem[{Chayti et~al.(2021)Chayti, Karimireddy, Stich, Flammarion, and
  Jaggi}]{chayti2021linear}
Chayti, E.~M.; Karimireddy, S.~P.; Stich, S.~U.; Flammarion, N.; and Jaggi, M.
  2021.
\newblock Linear Speedup in Personalized Collaborative Learning.
\newblock \emph{arXiv preprint arXiv:2111.05968}.

\bibitem[{Chen et~al.(2021)Chen, Zhang, Shen, Zhao, and
  Luo}]{chen2021communication}
Chen, C.; Zhang, J.; Shen, L.; Zhao, P.; and Luo, Z. 2021.
\newblock Communication efficient primal-dual algorithm for nonconvex nonsmooth
  distributed optimization.
\newblock In \emph{International Conference on Artificial Intelligence and
  Statistics}, 1594--1602. PMLR.

\bibitem[{Chen et~al.(2018)Chen, Luo, Dong, Li, and He}]{chen2018federated}
Chen, F.; Luo, M.; Dong, Z.; Li, Z.; and He, X. 2018.
\newblock Federated meta-learning with fast convergence and efficient
  communication.
\newblock \emph{arXiv preprint arXiv:1802.07876}.

\bibitem[{Chen and Chao(2022)}]{hong2022fedrod}
Chen, H.-Y.; and Chao, W.-L. 2022.
\newblock On Bridging Generic and Personalized Federated Learning for Image
  Classification.
\newblock In \emph{ICLR}.

\bibitem[{Cheng, Chadha, and Duchi(2021)}]{cheng2021fine}
Cheng, G.; Chadha, K.; and Duchi, J. 2021.
\newblock Fine-tuning is fine in federated learning.
\newblock \emph{arXiv preprint arXiv:2108.07313}.

\bibitem[{Collins et~al.(2021)Collins, Hassani, Mokhtari, and
  Shakkottai}]{collins2021exploiting}
Collins, L.; Hassani, H.; Mokhtari, A.; and Shakkottai, S. 2021.
\newblock Exploiting shared representations for personalized federated
  learning.
\newblock In \emph{International Conference on Machine Learning}, 2089--2099.
  PMLR.

\bibitem[{Dai et~al.(2022)Dai, Shen, He, Tian, and Tao}]{dai2022dispfl}
Dai, R.; Shen, L.; He, F.; Tian, X.; and Tao, D. 2022.
\newblock DisPFL: Towards Communication-Efficient Personalized Federated
  Learning via Decentralized Sparse Training.
\newblock In \emph{International Conference on Machine Learning}.

\bibitem[{Fallah, Mokhtari, and Ozdaglar(2020)}]{fallah2020personalized}
Fallah, A.; Mokhtari, A.; and Ozdaglar, A. 2020.
\newblock Personalized federated learning: A meta-learning approach.
\newblock \emph{arXiv preprint arXiv:2002.07948}.

\bibitem[{He et~al.(2016)He, Zhang, Ren, and Sun}]{he2016deep}
He, K.; Zhang, X.; Ren, S.; and Sun, J. 2016.
\newblock {Deep Residual Learning for Image Recognition}.
\newblock In \emph{Proc. of IEEE CVPR}.

\bibitem[{Hsieh et~al.(2020)Hsieh, Phanishayee, Mutlu, and
  Gibbons}]{hsieh2020non}
Hsieh, K.; Phanishayee, A.; Mutlu, O.; and Gibbons, P. 2020.
\newblock {The Non-IID Data Quagmire of Decentralized Machine Learning}.
\newblock In \emph{Proc. of ICML}.

\bibitem[{Huang et~al.(2022)Huang, Liu, Shen, He, Lin, and
  Tao}]{huang2022achieving}
Huang, T.; Liu, S.; Shen, L.; He, F.; Lin, W.; and Tao, D. 2022.
\newblock Achieving Personalized Federated Learning with Sparse Local Models.
\newblock \emph{arXiv preprint arXiv:2201.11380}.

\bibitem[{Imteaj et~al.(2022)Imteaj, Mamun~Ahmed, Thakker, Wang, Li, and
  Amini}]{imteaj2022federated}
Imteaj, A.; Mamun~Ahmed, K.; Thakker, U.; Wang, S.; Li, J.; and Amini, M.~H.
  2022.
\newblock Federated learning for resource-constrained IoT devices: panoramas
  and state of the art.
\newblock \emph{Federated and Transfer Learning}, 7--27.

\bibitem[{Kairouz et~al.(2019)Kairouz, McMahan, Avent, Bellet, Bennis, Bhagoji,
  Bonawitz, Charles, Cormode, Cummings et~al.}]{kairouz2019advances}
Kairouz, P.; McMahan, H.~B.; Avent, B.; Bellet, A.; Bennis, M.; Bhagoji, A.~N.;
  Bonawitz, K.; Charles, Z.; Cormode, G.; Cummings, R.; et~al. 2019.
\newblock {Advances and Open Problems in Federated Learning}.
\newblock \emph{arXiv preprint arXiv:1912.04977}.

\bibitem[{Koloskova et~al.(2020)Koloskova, Loizou, Boreiri, Jaggi, and
  Stich}]{koloskova2020unified}
Koloskova, A.; Loizou, N.; Boreiri, S.; Jaggi, M.; and Stich, S. 2020.
\newblock A unified theory of decentralized sgd with changing topology and
  local updates.
\newblock In \emph{International Conference on Machine Learning}, 5381--5393.
  PMLR.

\bibitem[{Kong et~al.(2021)Kong, Lin, Koloskova, Jaggi, and
  Stich}]{kong2021consensus}
Kong, L.; Lin, T.; Koloskova, A.; Jaggi, M.; and Stich, S. 2021.
\newblock Consensus control for decentralized deep learning.
\newblock In \emph{International Conference on Machine Learning}, 5686--5696.
  PMLR.

\bibitem[{Kovalev et~al.(2021)Kovalev, Gasanov, Gasnikov, and
  Richt{\'a}rik}]{kovalev2021lower}
Kovalev, D.; Gasanov, E.; Gasnikov, A.; and Richt{\'a}rik, P. 2021.
\newblock Lower bounds and optimal algorithms for smooth and strongly convex
  decentralized optimization over time-varying networks.
\newblock \emph{Advances in Neural Information Processing Systems}, 34:
  22325--22335.

\bibitem[{Krizhevsky, Hinton et~al.(2009)}]{krizhevsky2009learning}
Krizhevsky, A.; Hinton, G.; et~al. 2009.
\newblock {Learning Multiple Layers of Features from Tiny Images}.

\bibitem[{Krizhevsky, Sutskever, and Hinton(2012)}]{krizhevsky2012imagenet}
Krizhevsky, A.; Sutskever, I.; and Hinton, G.~E. 2012.
\newblock {Imagenet Classification with Deep Convolutional Neural Networks}.
\newblock \emph{Proc. of NIPS}.

\bibitem[{Lalitha et~al.(2018)Lalitha, Shekhar, Javidi, and
  Koushanfar}]{lalitha2018fully}
Lalitha, A.; Shekhar, S.; Javidi, T.; and Koushanfar, F. 2018.
\newblock Fully decentralized federated learning.
\newblock In \emph{Third workshop on Bayesian Deep Learning (NeurIPS)}.

\bibitem[{LeCun, Bengio, and Hinton(2015)}]{lecun2015deep}
LeCun, Y.; Bengio, Y.; and Hinton, G. 2015.
\newblock Deep learning.
\newblock \emph{nature}, 521(7553): 436--444.

\bibitem[{Li et~al.(2021{\natexlab{a}})Li, Sun, Li, Pu, Li, and
  Chen}]{li2021hermes}
Li, A.; Sun, J.; Li, P.; Pu, Y.; Li, H.; and Chen, Y. 2021{\natexlab{a}}.
\newblock Hermes: an efficient federated learning framework for heterogeneous
  mobile clients.
\newblock In \emph{Proceedings of the 27th Annual International Conference on
  Mobile Computing and Networking}, 420--437.

\bibitem[{Li et~al.(2021{\natexlab{b}})Li, Sun, Zeng, Zhang, Li, and
  Chen}]{li2021fedmask}
Li, A.; Sun, J.; Zeng, X.; Zhang, M.; Li, H.; and Chen, Y. 2021{\natexlab{b}}.
\newblock Fedmask: Joint computation and communication-efficient personalized
  federated learning via heterogeneous masking.
\newblock In \emph{Proceedings of the 19th ACM Conference on Embedded Networked
  Sensor Systems}, 42--55.

\bibitem[{Li et~al.(2022)Li, Zeng, Zhang, and Cao}]{li2022pyramidfl}
Li, C.; Zeng, X.; Zhang, M.; and Cao, Z. 2022.
\newblock PyramidFL: A fine-grained client selection framework for efficient
  federated learning.
\newblock In \emph{Proceedings of the 28th Annual International Conference on
  Mobile Computing And Networking}, 158--171.

\bibitem[{Li et~al.(2021{\natexlab{c}})Li, Hu, Beirami, and
  Smith}]{li2021ditto}
Li, T.; Hu, S.; Beirami, A.; and Smith, V. 2021{\natexlab{c}}.
\newblock Ditto: Fair and robust federated learning through personalization.
\newblock In \emph{International Conference on Machine Learning}, 6357--6368.
  PMLR.

\bibitem[{Li et~al.(2020)Li, Huang, Yang, Wang, and Zhang}]{li2019convergence}
Li, X.; Huang, K.; Yang, W.; Wang, S.; and Zhang, Z. 2020.
\newblock On the Convergence of FedAvg on Non-IID Data.
\newblock In \emph{International Conference on Learning Representations}.

\bibitem[{Lian et~al.(2017)Lian, Zhang, Zhang, Hsieh, Zhang, and
  Liu}]{lian2017can}
Lian, X.; Zhang, C.; Zhang, H.; Hsieh, C.-J.; Zhang, W.; and Liu, J. 2017.
\newblock Can decentralized algorithms outperform centralized algorithms? a
  case study for decentralized parallel stochastic gradient descent.
\newblock \emph{Advances in Neural Information Processing Systems}, 30.

\bibitem[{Lian et~al.(2018)Lian, Zhang, Zhang, and Liu}]{lian2018asynchronous}
Lian, X.; Zhang, W.; Zhang, C.; and Liu, J. 2018.
\newblock {Asynchronous Decentralized Parallel Stochastic Gradient Descent}.
\newblock In \emph{Proc. of ICML}.

\bibitem[{Liang et~al.(2020)Liang, Liu, Ziyin, Allen, Auerbach, Brent,
  Salakhutdinov, and Morency}]{liang2020think}
Liang, P.~P.; Liu, T.; Ziyin, L.; Allen, N.~B.; Auerbach, R.~P.; Brent, D.;
  Salakhutdinov, R.; and Morency, L.-P. 2020.
\newblock Think locally, act globally: Federated learning with local and global
  representations.
\newblock \emph{arXiv preprint arXiv:2001.01523}.

\bibitem[{Lin et~al.(2021)Lin, Karimireddy, Stich, and Jaggi}]{lin2021quasi}
Lin, T.; Karimireddy, S.~P.; Stich, S.; and Jaggi, M. 2021.
\newblock Quasi-global Momentum: Accelerating Decentralized Deep Learning on
  Heterogeneous Data.
\newblock In \emph{International Conference on Machine Learning}, 6654--6665.
  PMLR.

\bibitem[{Liu et~al.(2022)Liu, Zhang, Khanduri, Lu, and Liu}]{liu2022interact}
Liu, Z.; Zhang, X.; Khanduri, P.; Lu, S.; and Liu, J. 2022.
\newblock INTERACT: achieving low sample and communication complexities in
  decentralized bilevel learning over networks.
\newblock In \emph{Proceedings of the Twenty-Third International Symposium on
  Theory, Algorithmic Foundations, and Protocol Design for Mobile Networks and
  Mobile Computing}, 61--70.

\bibitem[{Loizou and Richt{\'a}rik(2021)}]{loizou2021revisiting}
Loizou, N.; and Richt{\'a}rik, P. 2021.
\newblock Revisiting randomized gossip algorithms: General framework,
  convergence rates and novel block and accelerated protocols.
\newblock \emph{IEEE Transactions on Information Theory}, 67(12): 8300--8324.

\bibitem[{McMahan et~al.(2017)McMahan, Moore, Ramage, Hampson, and
  y~Arcas}]{mcmahan2017communication}
McMahan, B.; Moore, E.; Ramage, D.; Hampson, S.; and y~Arcas, B.~A. 2017.
\newblock {Communication-Efficient Learning of Deep Networks From Decentralized
  Data}.
\newblock In \emph{Proc. of AISTATS}.

\bibitem[{Nedic(2020)}]{nedic2020distributed}
Nedic, A. 2020.
\newblock Distributed gradient methods for convex machine learning problems in
  networks: Distributed optimization.
\newblock \emph{IEEE Signal Processing Magazine}, 37(3): 92--101.

\bibitem[{Nedi{\'c}, Olshevsky, and Rabbat(2018)}]{nedic2018network}
Nedi{\'c}, A.; Olshevsky, A.; and Rabbat, M.~G. 2018.
\newblock {Network Topology and Communication-Computation Tradeoffs in
  Decentralized Optimization}.
\newblock \emph{Proceedings of the IEEE}, 106(5): 953--976.

\bibitem[{Nedic and Ozdaglar(2009)}]{nedic2009distributed}
Nedic, A.; and Ozdaglar, A. 2009.
\newblock {Distributed Subgradient Methods for Multi-Agent Optimization}.
\newblock \emph{IEEE Transactions on Automatic Control}, 54(1): 48--61.

\bibitem[{Neglia et~al.(2020)Neglia, Xu, Towsley, and
  Calbi}]{neglia2020decentralized}
Neglia, G.; Xu, C.; Towsley, D.; and Calbi, G. 2020.
\newblock {Decentralized Gradient Methods: Does Topology Matter?}
\newblock In \emph{Proc. of AISTATS}.

\bibitem[{Ouyang et~al.(2021)Ouyang, Xie, Zhou, Huang, and
  Xing}]{ouyang2021clusterfl}
Ouyang, X.; Xie, Z.; Zhou, J.; Huang, J.; and Xing, G. 2021.
\newblock Clusterfl: a similarity-aware federated learning system for human
  activity recognition.
\newblock In \emph{Proceedings of the 19th Annual International Conference on
  Mobile Systems, Applications, and Services}, 54--66.

\bibitem[{Paszke et~al.(2017)Paszke, Gross, Chintala, Chanan, Yang, DeVito,
  Lin, Desmaison, Antiga, and Lerer}]{paszke2017automatic}
Paszke, A.; Gross, S.; Chintala, S.; Chanan, G.; Yang, E.; DeVito, Z.; Lin, Z.;
  Desmaison, A.; Antiga, L.; and Lerer, A. 2017.
\newblock Automatic differentiation in pytorch.
\newblock In \emph{NIPS-W}.

\bibitem[{Qiu et~al.(2022)Qiu, Li, Liu, Khanduri, Liu, Shroff, Bentley, and
  Turck}]{qiu2022diamond}
Qiu, P.; Li, Y.; Liu, Z.; Khanduri, P.; Liu, J.; Shroff, N.~B.; Bentley, E.~S.;
  and Turck, K. 2022.
\newblock DIAMOND: Taming Sample and Communication Complexities in
  Decentralized Bilevel Optimization.
\newblock \emph{arXiv preprint arXiv:2212.02376}.

\bibitem[{Reddi et~al.(2020)Reddi, Charles, Zaheer, Garrett, Rush,
  Kone{\v{c}}n{\`y}, Kumar, and McMahan}]{reddi2020adaptive}
Reddi, S.; Charles, Z.; Zaheer, M.; Garrett, Z.; Rush, K.; Kone{\v{c}}n{\`y},
  J.; Kumar, S.; and McMahan, H.~B. 2020.
\newblock Adaptive federated optimization.
\newblock \emph{arXiv preprint arXiv:2003.00295}.

\bibitem[{Simonyan and Zisserman(2015)}]{simonyan2015very}
Simonyan, K.; and Zisserman, A. 2015.
\newblock {Very Deep Convolutional Networks for Large-scale Image Recognition}.
\newblock In \emph{Proc. of ICLR}.

\bibitem[{Smith et~al.(2017)Smith, Chiang, Sanjabi, and
  Talwalkar}]{smith2017federated}
Smith, V.; Chiang, C.-K.; Sanjabi, M.; and Talwalkar, A.~S. 2017.
\newblock Federated multi-task learning.
\newblock \emph{Advances in neural information processing systems}, 30.

\bibitem[{T~Dinh, Tran, and Nguyen(2020)}]{t2020personalized}
T~Dinh, C.; Tran, N.; and Nguyen, J. 2020.
\newblock Personalized federated learning with moreau envelopes.
\newblock \emph{Advances in Neural Information Processing Systems}, 33:
  21394--21405.

\bibitem[{Tang et~al.(2018{\natexlab{a}})Tang, Gan, Zhang, Zhang, and
  Liu}]{tang2018communication}
Tang, H.; Gan, S.; Zhang, C.; Zhang, T.; and Liu, J. 2018{\natexlab{a}}.
\newblock {Communication Compression for Decentralized Training}.
\newblock In \emph{Proc. of NeurIPS}.

\bibitem[{Tang et~al.(2018{\natexlab{b}})Tang, Lian, Yan, Zhang, and
  Liu}]{tang2018d}
Tang, H.; Lian, X.; Yan, M.; Zhang, C.; and Liu, J. 2018{\natexlab{b}}.
\newblock {Decentralized Training Over Decentralized Data}.
\newblock In \emph{Proc. of ICML}.

\bibitem[{Tang et~al.(2020)Tang, Shi, Chu, Wang, and
  Li}]{tang2020communication}
Tang, Z.; Shi, S.; Chu, X.; Wang, W.; and Li, B. 2020.
\newblock {Communication-Efficient Distributed Deep Learning: A Comprehensive
  Survey}.
\newblock \emph{arXiv preprint arXiv:2003.06307}.

\bibitem[{Tsitsiklis, Bertsekas, and Athans(1986)}]{tsitsiklis1986distributed}
Tsitsiklis, J.; Bertsekas, D.; and Athans, M. 1986.
\newblock {Distributed Asynchronous Deterministic and Stochastic Gradient
  Optimization Algorithms}.
\newblock \emph{IEEE Transactions on Automatic Control}, 31(9): 803--812.

\bibitem[{Vanhaesebrouck, Bellet, and
  Tommasi(2017)}]{vanhaesebrouck2017decentralized}
Vanhaesebrouck, P.; Bellet, A.; and Tommasi, M. 2017.
\newblock Decentralized collaborative learning of personalized models over
  networks.
\newblock In \emph{Artificial Intelligence and Statistics}, 509--517. PMLR.

\bibitem[{Vogels, Hendrikx, and Jaggi(2022)}]{vogels2022beyond}
Vogels, T.; Hendrikx, H.; and Jaggi, M. 2022.
\newblock Beyond spectral gap: The role of the topology in decentralized
  learning.
\newblock \emph{arXiv preprint arXiv:2206.03093}.

\bibitem[{Wang et~al.(2020{\natexlab{a}})Wang, Yurochkin, Sun, Papailiopoulos,
  and Khazaeni}]{wang2020federated}
Wang, H.; Yurochkin, M.; Sun, Y.; Papailiopoulos, D.; and Khazaeni, Y.
  2020{\natexlab{a}}.
\newblock {Federated Learning with Matched Averaging}.
\newblock In \emph{Proc. of ICLR}.

\bibitem[{Wang et~al.(2020{\natexlab{b}})Wang, Liu, Liang, Joshi, and
  Poor}]{wang2020tackling}
Wang, J.; Liu, Q.; Liang, H.; Joshi, G.; and Poor, H.~V. 2020{\natexlab{b}}.
\newblock {Tackling the Objective Inconsistency Problem in Heterogeneous
  Federated Optimization}.
\newblock \emph{Proc. of NeurIPS}.

\bibitem[{Xiao, Rasul, and Vollgraf(2017)}]{xiao2017fashion}
Xiao, H.; Rasul, K.; and Vollgraf, R. 2017.
\newblock {Fashion-MNIST: A Novel Image Dataset for Benchmarking Machine
  Learning Algorithms}.
\newblock \emph{arXiv preprint arXiv:1708.07747}.

\bibitem[{Xiong et~al.(2023)Xiong, Yan, Wang, and Li}]{xiong2023straggler}
Xiong, G.; Yan, G.; Wang, S.; and Li, J. 2023.
\newblock Straggler-Resilient Decentralized Learning via Adaptive Asynchronous
  Updates.
\newblock \emph{arXiv preprint arXiv:2306.06559}.

\bibitem[{Yang, Fang, and Liu(2021)}]{yang2021achieving}
Yang, H.; Fang, M.; and Liu, J. 2021.
\newblock {Achieving Linear Speedup with Partial Worker Participation in
  Non-IID Federated Learning}.
\newblock In \emph{Proc. of ICLR}.

\bibitem[{Zhu et~al.(2022)Zhu, He, Zhang, Niu, Song, and Tao}]{zhu2022topology}
Zhu, T.; He, F.; Zhang, L.; Niu, Z.; Song, M.; and Tao, D. 2022.
\newblock Topology-aware generalization of decentralized sgd.
\newblock In \emph{International Conference on Machine Learning}, 27479--27503.
  PMLR.

\end{thebibliography}
\bibliographystyle{aaai24}

\newpage
\appendix
 \onecolumn
 \section{Related Work}\label{sec:related}

\textbf{Personalized Federated learning} targets at producing personalized models for each worker, and has attracted lots of attentions recently, e.g.,  multi-task learning \cite{smith2017federated}, meta learning \cite{chen2018federated,fallah2020personalized}, using methods including but not limited to local fine-tuning \cite{fallah2020personalized, cheng2021fine}, adding
regularization term \cite{li2021ditto,t2020personalized}, layer personalization \cite{arivazhagan2019federated}, and model compression \cite{li2021fedmask,huang2022achieving,bergou2022personalized}.  However, each worker's personalized model is still in full dimension.
More recently, representation learning method for FL has gained attentions \cite{arivazhagan2019federated,liang2020think,collins2021exploiting}.  However, the local heads and global body were jointly updated by local procedures in \citet{arivazhagan2019federated} without theoretical justification; and 
\citet{liang2020think} learned many local representations and a single
global head as opposed to a single global representation and many local heads in our setting.  The closest to ours is \citet{collins2021exploiting} in the parameter-server based setting with their theoretical analysis restricted to a linear representation model. In contrast, we consider a decentralized setting and our convergence analysis holds for general non-linear representations.

\textbf{Decentralized Learning} targets the same consensus model through peer-to-peer communication \cite{nedic2020distributed}, starting from the seminal work in control community \cite{tsitsiklis1986distributed} and more recently in machine learning community \cite{lian2017can,lian2018asynchronous,lalitha2018fully,tang2018communication,tang2018d,assran2019stochastic,hsieh2020non,neglia2020decentralized, koloskova2020unified, lin2021quasi,chen2021communication,kovalev2021lower,loizou2021revisiting,kong2021consensus,zhu2022topology,vogels2022beyond,xiong2023straggler}. To tackle  heterogeneous scenarios, a batch of personalized decentralized learning methods have been proposed \cite{vanhaesebrouck2017decentralized,chayti2021linear,dai2022dispfl}. However, in all these methods, each worker's subproblem is still full-dimensional and there is no notion of learning local parameters with reduced dimensions.  The most recent DisPFL \cite{dai2022dispfl} achieves superior performance with no convergence analysis, while \DePRL not only outperforms DisPFL but also with a strong convergence rate guarantee. 

\section{Additional Discussions on Shared Representations for Decentralized and PS Frameworks}

An illustrative example for such conventional decentralized frameworks with 3 workers is presented in Figure~\ref{fig:conventional-DL}.  As discussed in the system model, these conventional decentralized learning frameworks aim at learning a single shared model, in contrast to our model considered in this paper with an example illustrated in Figure~\ref{fig:example}. 

\begin{figure}[t]
\centering
   \includegraphics[width=1\textwidth]{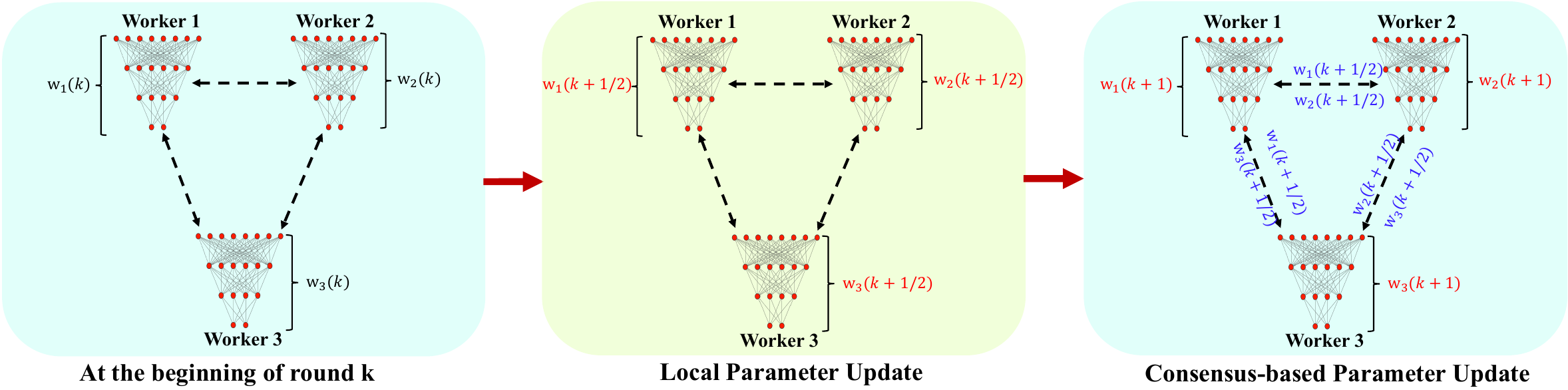}
\vspace{-0.1in}
\caption{An illustrative example of conventional decentralized learning framework, e.g., \cite{lian2017can} with a single shared model with 3 workers.
 }
        \label{fig:conventional-DL}
\end{figure}

\begin{figure}[t]
\centering
   \includegraphics[width=1\textwidth]{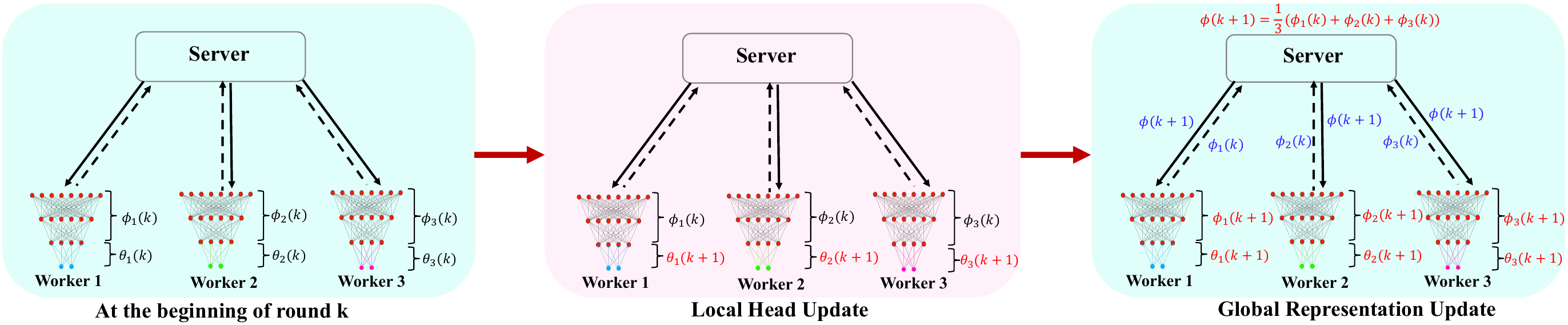}
\vspace{-0.1in}
\caption{An illustrative example of PS-based framework with shared representations in \cite{collins2021exploiting} with 3 workers.
 }
        \label{fig:FL-PRL}
\end{figure}

Furthermore, as discussed in Remark~\ref{remark-alg}, our model is inspired by \citet{collins2021exploiting} that is perhaps the first to exploit PS-based framework with a shared presentation.  An illustrative example of their algorithm with 3 workers is presented in Figure~\ref{fig:FL-PRL}.

\section{Proof of Theorem \ref{thm:loss_convergence}}
Under Assumption \ref{assumption-weight}, we present the following lemma from \citet{nedic2009distributed} for completeness. 
\begin{lemma}[Lemma 4 in \citet{nedic2009distributed}]
Assume that $\bP$ is doubly stochastic.  The difference between $1/N$ and any element of   $\bP^{k-s}$ can be bounded by
\begin{align}
\left|\frac{1}{N}-\bP^{k-s}(i,j)\right|\leq 2 \frac{(1+p^{-N})}{1-p^{N}}(1-p^{N})^{(k-s)/N},
\end{align}
where  $p$ is the smallest positive value of all consensus matrices, i.e., $p=\arg\min P_{i,j}$ with $P_{i,j}>0, \forall i,j.$
\label{lemma_bound_Phi}
\end{lemma}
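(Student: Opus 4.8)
The statement is the classical ergodicity estimate of \citet{nedic2009distributed} specialized to a fixed doubly stochastic $\bP$ whose graph $\cG$ is strongly connected, so the plan is to reconstruct its quantitative proof in this time-invariant case. Writing $m:=k-s$, the object of interest is the deviation of the doubly stochastic matrix $\bP^{m}$ from its limit $\tfrac1N\mathbf{1}\mathbf{1}^\top$, and the goal is to show this deviation contracts at the block rate $(1-p^N)^{1/N}$. I would proceed in three stages: (i) a uniform entrywise lower bound on $\bP^{N}$; (ii) conversion of that lower bound into a contraction of the Dobrushin coefficient of ergodicity; and (iii) iteration over blocks of $N$ steps together with the column-stochasticity of $\bP^{m}$ to reach the stated entrywise bound.

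First I would show $\bP^{N}(i,j)\ge p^{N}$ for every $i,j$. Since $\cG$ is strongly connected on $N$ nodes, there is a directed walk of length at most $N-1$ between any ordered pair; because each worker belongs to its own neighborhood ($i\in\cN_i$), the self-weights $P_{l,l}$ are strictly positive, hence $\ge p$, so this walk can be padded with self-loops to length exactly $N$. Every transition along the padded walk carries weight at least $p$, and since $(\bP^N)_{ij}$ is a sum of nonnegative walk-products it is bounded below by the contribution of this single walk, giving $\bP^{N}(i,j)\ge p^{N}=:\gamma$.

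Next I would turn this into a contraction. For the row-stochastic matrix $\bQ:=\bP^{N}$, whose entries are all $\ge\gamma$, the coefficient of ergodicity satisfies
\begin{align*}
\delta(\bQ):=\max_{i,i'}\tfrac12\sum_{j}\big|\bQ(i,j)-\bQ(i',j)\big|=1-\min_{i,i'}\sum_j\min\{\bQ(i,j),\bQ(i',j)\}\le 1-\gamma,
\end{align*}
where the last step retains only one column of the inner minimum. Submultiplicativity of $\delta(\cdot)$ under matrix products then yields $\delta(\bP^{tN})\le(1-\gamma)^{t}$ for every integer $t\ge0$.

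Finally I would assemble the bound. Fixing a column $j$ and using that $\bP^{m}$ is column-stochastic, $\tfrac1N=\tfrac1N\sum_{i'}\bP^{m}(i',j)$, so $|\bP^{m}(i,j)-\tfrac1N|\le\tfrac1N\sum_{i'}|\bP^{m}(i,j)-\bP^{m}(i',j)|\le 2\delta(\bP^{m})$. Writing $m=tN+r$ with $0\le r<N$ and using $t=\lfloor m/N\rfloor\ge m/N-1$ gives $\delta(\bP^{m})\le\delta(\bP^{tN})\le(1-\gamma)^{t}\le(1-\gamma)^{-1}(1-\gamma)^{m/N}$, hence $|\bP^{m}(i,j)-\tfrac1N|\le\frac{2}{1-p^N}\big((1-p^N)^{1/N}\big)^{m}$, which is already of the claimed form with a constant no larger than $\frac{2(1+p^{-N})}{1-p^N}$ (since $1\le 1+p^{-N}$); the slightly larger prefactor in the statement is exactly what the original time-varying argument of \citet{nedic2009distributed} produces, and holds a fortiori. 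The only place where graph structure is genuinely used is the entrywise lower bound of stage (i) --- that, rather than any estimate, is the conceptual heart --- while the main fiddly point is the constant bookkeeping in stage (iii), where the leftover $r<N$ steps and the deviation-to-ergodicity conversion must be tracked to land on the stated prefactor.
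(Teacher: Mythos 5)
Your proof is correct. Note that the paper itself offers no proof of this statement: it imports the bound verbatim as Lemma~4 of \citet{nedic2009distributed} ``for completeness,'' so there is no in-paper argument to match against. Your reconstruction therefore necessarily differs from the source it replaces, and it does so in a clean way: whereas the original Ned\'ic--Ozdaglar proof is carried out for time-varying weight matrices by directly tracking the spread $\max_j \Phi(k,s)(i,j)-\min_j\Phi(k,s)(i,j)$ of the products of transition matrices over blocks of $N$ steps (which is where the looser prefactor $2(1+p^{-N})/(1-p^N)$ comes from), you exploit the time-invariance of $\bP$ to route the argument through the Dobrushin coefficient of ergodicity: the entrywise lower bound $\bP^N(i,j)\ge p^N$ gives $\delta(\bP^N)\le 1-p^N$, submultiplicativity handles the block iteration, and double stochasticity converts $\delta(\bP^m)$ into the entrywise deviation from $1/N$. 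This buys a shorter proof and a strictly smaller constant $2/(1-p^N)\le 2(1+p^{-N})/(1-p^N)$, so the stated inequality holds a fortiori; the only implicit hypothesis you should flag is that the self-weights $P_{i,i}$ are strictly positive (and hence $\ge p$), which is needed for the self-loop padding in stage~(i) and is the standard assumption accompanying $i\in\cN_i$ in this setting.
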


{
For Lipschitz continuous gradient functions with two variables in Assumption~\ref{assumption-lipschitz}, we have: 
\begin{lemma}\label{lemma:2D-Lipschitz}
Under Assumption \ref{assumption-lipschitz}, we have 
\begin{align*}
    F_i(\pmb{\phi}, \pmb{\theta})-F_i(\pmb{\phi}^\prime, \pmb{\theta}^\prime)
   &\leq \nabla_{\pmb{\phi}}F_i(\pmb{\phi}^\prime,\pmb{\theta})^\intercal (\pmb{\phi}-\pmb{\phi}^\prime)+\frac{L}{2}\|\pmb{\phi}-\pmb{\phi}^\prime\|^2+\nabla_{\pmb{\theta}}F_i(\pmb{\phi}^\prime,\pmb{\theta}^\prime)^\intercal (\pmb{\theta}-\pmb{\theta}^\prime)+\frac{L}{2}\|\pmb{\theta}-\pmb{\theta}^\prime\|^2. 
\end{align*}
\end{lemma}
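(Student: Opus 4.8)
The plan is to reduce this two-variable quadratic upper bound to two applications of the standard single-variable descent lemma by inserting a carefully chosen intermediate point. The guide to that choice is the asymmetry on the right-hand side of the claim: the gradient in $\pmb{\phi}$ is evaluated at $(\pmb{\phi}^\prime,\pmb{\theta})$ (with the \emph{new} second argument $\pmb{\theta}$), whereas the gradient in $\pmb{\theta}$ is evaluated at $(\pmb{\phi}^\prime,\pmb{\theta}^\prime)$ (with both arguments old). This dictates the telescoping decomposition
\[
F_i(\pmb{\phi}, \pmb{\theta}) - F_i(\pmb{\phi}^\prime, \pmb{\theta}^\prime)
= \big[F_i(\pmb{\phi}, \pmb{\theta}) - F_i(\pmb{\phi}^\prime, \pmb{\theta})\big]
+ \big[F_i(\pmb{\phi}^\prime, \pmb{\theta}) - F_i(\pmb{\phi}^\prime, \pmb{\theta}^\prime)\big],
\]
so that each bracket varies exactly one argument while the argument held fixed already sits at the value appearing in the corresponding target gradient.

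First I would bound the first bracket. Fixing the second argument at $\pmb{\theta}$, define the single-variable map $g(\cdot):=F_i(\cdot,\pmb{\theta})$. Substituting $\pmb{\theta}^\prime=\pmb{\theta}$ into Assumption~\ref{assumption-lipschitz} annihilates the $\|\pmb{\theta}-\pmb{\theta}^\prime\|$ term and yields $\|\nabla g(\pmb{\phi})-\nabla g(\pmb{\phi}^\prime)\|\le L\|\pmb{\phi}-\pmb{\phi}^\prime\|$, i.e.\ $g$ has an $L$-Lipschitz gradient. The standard quadratic upper bound then gives
\[
F_i(\pmb{\phi}, \pmb{\theta}) - F_i(\pmb{\phi}^\prime, \pmb{\theta})
\le \nabla_{\pmb{\phi}} F_i(\pmb{\phi}^\prime,\pmb{\theta})^\intercal(\pmb{\phi}-\pmb{\phi}^\prime)
+ \frac{L}{2}\|\pmb{\phi}-\pmb{\phi}^\prime\|^2,
\]
which reproduces the first two terms of the claim with the gradient evaluated at precisely $(\pmb{\phi}^\prime,\pmb{\theta})$.

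Next I would repeat the argument for the second bracket. Fixing the first argument at $\pmb{\phi}^\prime$, define $h(\cdot):=F_i(\pmb{\phi}^\prime,\cdot)$; substituting $\pmb{\phi}=\pmb{\phi}^\prime$ into Assumption~\ref{assumption-lipschitz} gives $\|\nabla h(\pmb{\theta})-\nabla h(\pmb{\theta}^\prime)\|\le L\|\pmb{\theta}-\pmb{\theta}^\prime\|$, so $h$ likewise has an $L$-Lipschitz gradient and the descent lemma yields
\[
F_i(\pmb{\phi}^\prime, \pmb{\theta}) - F_i(\pmb{\phi}^\prime, \pmb{\theta}^\prime)
\le \nabla_{\pmb{\theta}} F_i(\pmb{\phi}^\prime,\pmb{\theta}^\prime)^\intercal(\pmb{\theta}-\pmb{\theta}^\prime)
+ \frac{L}{2}\|\pmb{\theta}-\pmb{\theta}^\prime\|^2.
\]
Summing the two bounds with the telescoping identity produces the claim. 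The single-variable descent lemma used twice is itself routine: applying the fundamental theorem of calculus to $t\mapsto g(\pmb{\phi}^\prime+t(\pmb{\phi}-\pmb{\phi}^\prime))$ on $[0,1]$, subtracting $\nabla g(\pmb{\phi}^\prime)^\intercal(\pmb{\phi}-\pmb{\phi}^\prime)$, and invoking Cauchy--Schwarz with the Lipschitz bound gives the $\tfrac{L}{2}\|\cdot\|^2$ remainder. I do not expect any substantive obstacle here; the only point requiring care is matching the intermediate point $(\pmb{\phi}^\prime,\pmb{\theta})$ to the asymmetric gradient-evaluation points on the right-hand side, after which each step reduces to a one-dimensional statement. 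The care is purely bookkeeping of which argument is frozen in each half.
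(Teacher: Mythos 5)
Your proposal is correct and follows essentially the same route as the paper: the identical telescoping through the intermediate point $(\pmb{\phi}^\prime,\pmb{\theta})$, followed by two applications of the standard single-variable descent lemma, each justified by specializing Assumption~\ref{assumption-lipschitz} to the case where one argument is held fixed. Your write-up is in fact slightly more explicit than the paper's about why the gradient-evaluation points on the right-hand side force this particular choice of intermediate point.
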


\begin{proof}
We rewrite $F_i(\pmb{\phi}, \pmb{\theta})-F_i(\pmb{\phi}^\prime, \pmb{\theta}^\prime)$ as follows:
\begin{align*}
   F_i(\pmb{\phi}, \pmb{\theta})-F_i(\pmb{\phi}^\prime, \pmb{\theta}^\prime)
    &= F_i(\pmb{\phi}, \pmb{\theta})- F_i(\pmb{\phi}^\prime, \pmb{\theta})+F_i(\pmb{\phi}^\prime, \pmb{\theta})-F_i(\pmb{\phi}^\prime,\pmb{\theta}^\prime)\\
    &\leq \nabla_{\pmb{\phi}}F_i(\pmb{\phi}^\prime,\pmb{\theta})^\intercal (\pmb{\phi}-\pmb{\phi}^\prime)+\frac{L}{2}\|\pmb{\phi}-\pmb{\phi}^\prime\|^2+\nabla_{\pmb{\theta}}F_i(\pmb{\phi}^\prime,\pmb{\theta}^\prime)^\intercal (\pmb{\theta}-\pmb{\theta}^\prime)+\frac{L}{2}\|\pmb{\theta}-\pmb{\theta}^\prime\|^2,
\end{align*}
where the inequality is due to the fact \cite{bottou2018optimization} that  $\|\nabla_{\pmb{\phi}} F_j(\pmb{\phi},\pmb{\theta})-\nabla_{\pmb{\phi}} F_j(\pmb{\phi}^\prime,\pmb{\theta})\|\leq L\|\pmb{\phi}-\pmb{\phi}^\prime\|$ and
$\|\nabla_{\pmb{\theta}} F_j(\pmb{\phi},\pmb{\theta})-\nabla_{\pmb{\theta}} F_j(\pmb{\phi},\pmb{\theta}^\prime)\|\leq L\|\pmb{\theta}-\pmb{\theta}^\prime\|$.
\end{proof}
}

To prove Theorem \ref{thm:loss_convergence}, we start with evaluating the drift of the global loss function, i.e., $f(\bar{\pmb{\phi}}(k+1),\{\pmb{\theta}_i(k+1)\}_{i=1}^N)-f(\bar{\pmb{\phi}}(k),\{\pmb{\theta}_i(k)\}_{i=1}^N)$. 
\begin{lemma}
The drift of the global loss function satisfies
\begin{align}\label{eq:global_iter2}
  & \mathbb{E}[f(\bar{\pmb{\phi}}(k+1),\{\pmb{\theta}_i(k+1)\}_{i=1}^N)]- \mathbb{E}[f(\bar{\pmb{\phi}}(k),\{\pmb{\theta}_i(k)\}_{i=1}^N)]\nonumber\displaybreak[0]\\
   {\leq} &\underset{C_1}{\underbrace{\frac{1}{N}\sum_{i=1}^N\mathbb{E}\Big\langle \nabla_{\pmb{\phi}}F_i(\bar{\pmb{\phi}}(k),\pmb{\theta}_i(k+1)), \bar{\pmb{\phi}}(k+1)-\bar{\pmb{\phi}}(k)\Big\rangle}}+\underset{C_2}{\underbrace{\frac{1}{N}\sum_{i=1}^N\frac{L}{2}\mathbb{E}[\|\bar{\pmb{\phi}}(k+1)-\bar{\pmb{\phi}}(k)\|^2]}}\nonumber\displaybreak[1]\\
   +& \underset{C_3}{\underbrace{\frac{1}{N}\sum_{i=1}^N\mathbb{E}\Big\langle \nabla_{\pmb{\theta}}F_i(\bar{\pmb{\phi}}(k),\pmb{\theta}_i(k)), \ \pmb{\theta}_i(k+1) -\ \pmb{\theta}_i(k) \Big\rangle}}+\underset{C_4}{\underbrace{\frac{1}{N}\sum_{i=1}^N\frac{L}{2}\mathbb{E}[\|{\pmb{\theta}_i}(k+1)-{\pmb{\theta}_i}(k)\|^2]}}.  
\end{align}
\end{lemma}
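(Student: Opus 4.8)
The plan is to exploit the additive structure of the global loss, $f(\pmb{\phi}, \{\pmb{\theta}_i\}_{i=1}^N) = \frac{1}{N}\sum_{i=1}^N F_i(\pmb{\phi}, \pmb{\theta}_i)$, together with the two-variable descent inequality already established in Lemma~\ref{lemma:2D-Lipschitz}. Because the loss is evaluated at the common consensus representation $\bar{\pmb{\phi}}(k)$ and the worker-specific head $\pmb{\theta}_i(k)$, the drift splits cleanly as
\begin{equation*}
f(\bar{\pmb{\phi}}(k+1),\{\pmb{\theta}_i(k+1)\}_{i=1}^N) - f(\bar{\pmb{\phi}}(k),\{\pmb{\theta}_i(k)\}_{i=1}^N) = \frac{1}{N}\sum_{i=1}^N \big[ F_i(\bar{\pmb{\phi}}(k+1), \pmb{\theta}_i(k+1)) - F_i(\bar{\pmb{\phi}}(k), \pmb{\theta}_i(k)) \big],
\end{equation*}
so it suffices to bound each per-worker increment and then average.

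First I would apply Lemma~\ref{lemma:2D-Lipschitz} to each summand with the identification $\pmb{\phi} = \bar{\pmb{\phi}}(k+1)$, $\pmb{\phi}' = \bar{\pmb{\phi}}(k)$, $\pmb{\theta} = \pmb{\theta}_i(k+1)$, and $\pmb{\theta}' = \pmb{\theta}_i(k)$. This immediately yields four terms: a $\pmb{\phi}$-inner product with gradient evaluated at $(\bar{\pmb{\phi}}(k), \pmb{\theta}_i(k+1))$, a quadratic $\frac{L}{2}\|\bar{\pmb{\phi}}(k+1) - \bar{\pmb{\phi}}(k)\|^2$, a $\pmb{\theta}$-inner product with gradient at $(\bar{\pmb{\phi}}(k), \pmb{\theta}_i(k))$, and a quadratic $\frac{L}{2}\|\pmb{\theta}_i(k+1) - \pmb{\theta}_i(k)\|^2$. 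Summing over $i$, dividing by $N$, and taking expectation then produces exactly $C_1, C_2, C_3, C_4$.

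The crux — and the only point requiring care — is matching the gradient arguments, which is dictated by the particular order of the telescoping decomposition inside Lemma~\ref{lemma:2D-Lipschitz}: one perturbs the representation variable first while holding the head at its \emph{new} value $\pmb{\theta}_i(k+1)$, then perturbs the head while holding the representation at its \emph{old} value $\bar{\pmb{\phi}}(k)$. This ordering is consistent with the algorithmic update sequence, in which heads are advanced to $\pmb{\theta}_i(k+1)$ before the representation is updated, and is precisely what makes $C_1$ carry $\nabla_{\pmb{\phi}}F_i(\bar{\pmb{\phi}}(k), \pmb{\theta}_i(k+1))$ while $C_3$ carries $\nabla_{\pmb{\theta}}F_i(\bar{\pmb{\phi}}(k), \pmb{\theta}_i(k))$. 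I expect no genuine analytic obstacle here: the statement is a direct corollary of the componentwise Lipschitz-gradient (descent) inequality applied termwise, and the expectation is taken only at the final step, where it preserves the deterministic pointwise inequality by monotonicity. The real difficulty is deferred to the subsequent lemmas that must bound $C_1$ through $C_4$ individually — in particular relating $\bar{\pmb{\phi}}(k+1)-\bar{\pmb{\phi}}(k)$ and $\pmb{\theta}_i(k+1)-\pmb{\theta}_i(k)$ back to the stochastic updates and controlling the consensus error $\|\pmb{\phi}_i(k)-\bar{\pmb{\phi}}(k)\|^2$.
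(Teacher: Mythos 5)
Your proposal is correct and follows essentially the same route as the paper: the paper likewise writes $f$ as the average of the $F_i$, applies the two-variable descent inequality of Lemma~\ref{lemma:2D-Lipschitz} termwise with exactly the identification $\pmb{\phi}=\bar{\pmb{\phi}}(k+1)$, $\pmb{\phi}'=\bar{\pmb{\phi}}(k)$, $\pmb{\theta}=\pmb{\theta}_i(k+1)$, $\pmb{\theta}'=\pmb{\theta}_i(k)$, and then averages and takes expectation. Your observation about the telescoping order (perturb $\pmb{\phi}$ with the head held at its new value, then perturb $\pmb{\theta}$ with the representation held at its old value) is precisely the point that fixes the gradient arguments in $C_1$ and $C_3$, and matches the paper.
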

\begin{proof}
According to the definition of the global loss function $f$, we have 
\begin{align}\label{eq:global_iteration}\nonumber
    &\mathbb{E}[f(\bar{\pmb{\phi}}(k+1),\{\pmb{\theta}_i(k+1)\}_{i=1}^N)]\\
    =&\frac{1}{N}\sum_{i=1}^N \mathbb{E}[F_i(\bar{\pmb{\phi}}(k+1),\pmb{\theta}_i(k+1))]\nonumber \allowdisplaybreaks\\
    \leq& \frac{1}{N}\sum_{i=1}^N \mathbb{E}[\nabla_{\pmb{\phi}}F_i(\bar{\pmb{\phi}}(k),\pmb{\theta}_i(k+1))^\intercal (\bar{\pmb{\phi}}(k+1)-\bar{\pmb{\phi}}(k))]+\frac{1}{N}\sum_{i=1}^N\frac{L}{2}\mathbb{E}[\|\bar{\pmb{\phi}}(k+1)-\bar{\pmb{\phi}}(k)\|^2]\nonumber\allowdisplaybreaks\\
    &+\frac{1}{N}\sum_{i=1}^N \mathbb{E}[\nabla_{\pmb{\theta}}F_i(\bar{\pmb{\phi}}(k),\pmb{\theta}_i(k))^\intercal ({\pmb{\theta}}_i(k+1)-{\pmb{\theta}_i}(k))]+\frac{1}{N}\sum_{i=1}^N\frac{L}{2}\mathbb{E}[\|{\pmb{\theta}}_i(k+1)-{\pmb{\theta}}_i(k)\|^2]\nonumber\allowdisplaybreaks\\
    &+\frac{1}{N}\sum_{i=1}^N \mathbb{E}[F_i(\bar{\pmb{\phi}}(k),\pmb{\theta}_i(k))],
\end{align}
where the inequality follows from Lemma \ref{lemma:2D-Lipschitz}. The desired result holds by moving $\frac{1}{N}\sum_{i=1}^N \mathbb{E}[F_i(\bar{\pmb{\phi}}(k),\pmb{\theta}_i(k))]$ into the left-hand side of the inequality and replacing it by $\mathbb{E}[f(\bar{\pmb{\phi}}(k),\{\pmb{\theta}_i(k)\}_{i=1}^N)]$. 
\end{proof}

In the following, we bound  $C_1, C_2, C_3,$ and $C_4$, respectively. 
\begin{lemma}\label{lem:C1}
$C_1$ can be bounded as follows:
\begin{align}
    C_1&\leq {\frac{\beta L^2}{2N}\sum_{i=1}^N\mathbb{E}\left[\left\|\bar{\pmb{\phi}}(k)-\pmb{\phi}_i(k)\right\|^2\right]}-\frac{\beta}{2}\left\|\nabla_{\pmb{\phi}}f(\bar{\pmb{\phi}}(k),\{\pmb{\theta}_i(k+1)\}_{i=1}^N)\right\|^2\nonumber\allowdisplaybreaks\\
    &\qquad\qquad-\frac{\beta}{2N^2}\mathbb{E}\left[\left\|\sum_{i=1}^N g_{\pmb{\phi}}(\pmb{\phi}_i(k),\pmb{\theta}_i(k+1))\right\|^2\right].
\end{align}
\end{lemma}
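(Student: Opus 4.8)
The plan is to rewrite the inner product defining $C_1$ as a descent term plus a residual that collapses to the consensus error. First I would compute the drift of the \emph{averaged} representation. Averaging the consensus step \eqref{eq:global_model_update2} over all workers and invoking the double stochasticity of $\bP$ (Assumption~\ref{assumption-weight}), so that $\sum_{i=1}^N P_{i,j}=1$, yields $\bar{\pmb{\phi}}(k+1)=\frac{1}{N}\sum_{j=1}^N\pmb{\phi}_j(k+1/2)$. Substituting the local representation update \eqref{eq:global_model_update} together with \eqref{eq:local-subgradient2} then gives the clean identity
\begin{align*}
\bar{\pmb{\phi}}(k+1)-\bar{\pmb{\phi}}(k)=-\frac{\beta}{N}\sum_{j=1}^N g_{\pmb{\phi}}(\pmb{\phi}_j(k),\pmb{\theta}_j(k+1)).
\end{align*}
Plugging this into the definition of $C_1$ and recognizing, through \eqref{eq:partial}, that $\frac{1}{N}\sum_{i=1}^N\nabla_{\pmb{\phi}}F_i(\bar{\pmb{\phi}}(k),\pmb{\theta}_i(k+1))=\nabla_{\pmb{\phi}}f(\bar{\pmb{\phi}}(k),\{\pmb{\theta}_i(k+1)\}_{i=1}^N)$, I would obtain $C_1=-\beta\,\mathbb{E}\big\langle \nabla_{\pmb{\phi}}f(\bar{\pmb{\phi}}(k),\{\pmb{\theta}_i(k+1)\}_{i=1}^N),\,\frac{1}{N}\sum_{j=1}^N g_{\pmb{\phi}}(\pmb{\phi}_j(k),\pmb{\theta}_j(k+1))\big\rangle$.

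Next I would apply the three-point (polarization) identity $-\langle a,b\rangle=-\tfrac12\|a\|^2-\tfrac12\|b\|^2+\tfrac12\|a-b\|^2$ with $a=\nabla_{\pmb{\phi}}f(\bar{\pmb{\phi}}(k),\{\pmb{\theta}_i(k+1)\}_{i=1}^N)$ and $b=\frac{1}{N}\sum_{j}g_{\pmb{\phi}}(\pmb{\phi}_j(k),\pmb{\theta}_j(k+1))$. The term $-\tfrac{\beta}{2}\|a\|^2$ reproduces the negative partial-gradient term in the claim, and $-\tfrac{\beta}{2}\|b\|^2$ reproduces exactly $-\tfrac{\beta}{2N^2}\mathbb{E}\big\|\sum_{i}g_{\pmb{\phi}}(\pmb{\phi}_i(k),\pmb{\theta}_i(k+1))\big\|^2$. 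It then remains to show the residual $\tfrac{\beta}{2}\mathbb{E}\|a-b\|^2$ is dominated by the consensus error. Here I would first use the unbiasedness of the stochastic gradient (Assumption~\ref{assumption-gradient}), conditioning on the history up to and including the head updates $\pmb{\theta}_i(k+1)$, so that the stochastic fluctuation of $b$ around its conditional mean $\frac{1}{N}\sum_j\nabla_{\pmb{\phi}}F_j(\pmb{\phi}_j(k),\pmb{\theta}_j(k+1))$ decouples and the only surviving deterministic part is the mismatch between gradients evaluated at the consensus point $\bar{\pmb{\phi}}(k)$ versus the local iterates $\pmb{\phi}_j(k)$. Writing that mismatch as $\frac{1}{N}\sum_j\big[\nabla_{\pmb{\phi}}F_j(\bar{\pmb{\phi}}(k),\pmb{\theta}_j(k+1))-\nabla_{\pmb{\phi}}F_j(\pmb{\phi}_j(k),\pmb{\theta}_j(k+1))\big]$, the $L$-Lipschitz gradient bound (Assumption~\ref{assumption-lipschitz}, or the one-variable consequence used in Lemma~\ref{lemma:2D-Lipschitz}) bounds each summand by $L\|\bar{\pmb{\phi}}(k)-\pmb{\phi}_j(k)\|$, and a Jensen/Cauchy--Schwarz step converts the squared average of norms into $\frac{L^2}{N}\sum_j\|\bar{\pmb{\phi}}(k)-\pmb{\phi}_j(k)\|^2$, giving precisely the first term $\frac{\beta L^2}{2N}\sum_i\mathbb{E}\|\bar{\pmb{\phi}}(k)-\pmb{\phi}_i(k)\|^2$.

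The main obstacle, and the reason this step differs from single-model decentralized analyses, is the coupling highlighted in the proof sketch: the full partial gradient $a$ is evaluated at the consensus representation $\bar{\pmb{\phi}}(k)$, whereas the stochastic gradients driving the update live at the \emph{local} iterates $\pmb{\phi}_i(k)$ and moreover depend on the already-updated heads $\pmb{\theta}_i(k+1)$. The crux is therefore to cleanly separate two effects inside $\|a-b\|^2$: the zero-mean stochastic fluctuation, which I neutralize via unbiasedness so that no cross term with the deterministic $a$ survives, and the deterministic representation drift, which Lipschitz continuity collapses into the average consensus error $\frac1N\sum_i\|\bar{\pmb{\phi}}(k)-\pmb{\phi}_i(k)\|^2$. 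This is exactly the quantity that later lemmas control through the doubly stochastic matrix $\bP$ (via Lemma~\ref{lemma_bound_Phi}), so reducing $C_1$ to it is what makes the subsequent telescoping argument go through; note that, consistent with the stated bound, this reduction needs neither a bounded-gradient assumption nor the global-variability bound at this stage.
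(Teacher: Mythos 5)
Your proposal follows essentially the same route as the paper's proof of Lemma~\ref{lem:C1}: the averaged-update identity $\bar{\pmb{\phi}}(k+1)-\bar{\pmb{\phi}}(k)=-\frac{\beta}{N}\sum_{j} g_{\pmb{\phi}}(\pmb{\phi}_j(k),\pmb{\theta}_j(k+1))$ from double stochasticity, the polarization identity $\langle a,b\rangle=\frac{1}{2}[\|a\|^2+\|b\|^2-\|a-b\|^2]$, unbiasedness to reduce the residual to the deterministic gradient mismatch between $\bar{\pmb{\phi}}(k)$ and the local iterates, and Cauchy--Schwarz plus the Lipschitz assumption to collapse that mismatch into $\frac{\beta L^2}{2N}\sum_i\mathbb{E}\|\bar{\pmb{\phi}}(k)-\pmb{\phi}_i(k)\|^2$. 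The one subtle point --- that after conditioning, $\mathbb{E}\|a-b\|^2$ decomposes into the deterministic mismatch \emph{plus} the variance of $b$ around its conditional mean, so retaining $-\frac{\beta}{2N^2}\mathbb{E}\|\sum_i g_{\pmb{\phi}}\|^2$ rather than its conditional-mean counterpart silently absorbs that variance --- is treated with exactly the same looseness in the paper's own step $(a_2)$, and is harmless for how the lemma is subsequently used.
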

\begin{proof}
$C_1$ can be bounded as follows
\begin{align}\nonumber
C_1&=\mathbb{E}\Bigg\langle \frac{1}{N}\sum_{i=1}^N\nabla_{\pmb{\phi}}F_i(\bar{\pmb{\phi}}(k),\pmb{\theta}_i(k+1)), \bar{\pmb{\phi}}(k+1)-\bar{\pmb{\phi}}(k)\Bigg\rangle\nonumber\allowdisplaybreaks\\
    &\overset{(a_1)}{=}-\beta \mathbb{E}\Big\langle \nabla_{\pmb{\phi}}f(\bar{\pmb{\phi}}(k),\{\pmb{\theta}_i(k+1)\}_{i=1}^N), \frac{1}{N}\sum_{i=1}^N g_{\pmb{\phi}}(\pmb{\phi}_i(k),\pmb{\theta}_i(k+1))\Big\rangle\nonumber\allowdisplaybreaks\\
    &\overset{(a_2)}{=}\frac{-\beta}{2}\Bigg(\left\|\nabla_{\pmb{\phi}}f(\bar{\pmb{\phi}}(k),\{\pmb{\theta}_i(k+1)\}_{i=1}^N)\right\|^2+\frac{1}{N^2}\mathbb{E}\left[\left\|\sum_{i=1}^N g_{\pmb{\phi}}(\pmb{\phi}_i(k),\pmb{\theta}_i(k+1))\right\|^2\right]\nonumber\allowdisplaybreaks\\
    &\quad\quad\quad-\frac{1}{N^2}\mathbb{E}\left[\left\|\sum_{i=1}^N(\nabla_{\pmb{\phi}}F_i(\bar{\pmb{\phi}}(k),\pmb{\theta}_i(k+1))-\nabla_{\pmb{\phi}}F_i({\pmb{\phi}_i}(k),\pmb{\theta}_i(k+1)))\right\|^2\right]\Bigg)\nonumber\allowdisplaybreaks \\
    &{=}\frac{\beta}{2N^2}\mathbb{E}\left[\left\|\sum_{i=1}^N(\nabla_{\pmb{\phi}}F_i(\bar{\pmb{\phi}}(k),\pmb{\theta}_i(k+1))-\nabla_{\pmb{\phi}}F_i({\pmb{\phi}_i}(k),\pmb{\theta}_i(k+1)))\right\|^2\right]\nonumber\allowdisplaybreaks\\
    &\quad\quad\quad-\frac{\beta}{2}\left\|\nabla_{\pmb{\phi}}f(\bar{\pmb{\phi}}(k),\{\pmb{\theta}_i(k+1)\}_{i=1}^N)\right\|^2-\frac{\beta}{2N^2}\mathbb{E}\left[\left\|\sum_{i=1}^N g_{\pmb{\phi}}(\pmb{\phi}_i(k),\pmb{\theta}_i(k+1))\right\|^2\right]\nonumber\allowdisplaybreaks\\
    &\overset{(a_3)}{\leq}\underset{D_1}{\underbrace{\frac{\beta}{2N}\sum_{i=1}^N\mathbb{E}\left[\left\|(\nabla_{\pmb{\phi}}F_i(\bar{\pmb{\phi}}(k),\pmb{\theta}_i(k+1))-\nabla_{\pmb{\phi}}F_i({\pmb{\phi}_i}(k),\pmb{\theta}_i(k+1)))\right\|^2\right]}}\nonumber\allowdisplaybreaks\\
    &\quad\quad\quad-\frac{\beta}{2}\left\|\nabla_{\pmb{\phi}}f(\bar{\pmb{\phi}}(k),\{\pmb{\theta}_i(k+1)\}_{i=1}^N)\right\|^2-\frac{\beta}{2N^2}\mathbb{E}\left[\left\|\sum_{i=1}^N g_{\pmb{\phi}}(\pmb{\phi}_i(k),\pmb{\theta}_i(k+1))\right\|^2\right]\nonumber\allowdisplaybreaks\\
    &\overset{(a_4)}{\leq}{\frac{\beta L^2}{2N}\sum_{i=1}^N\mathbb{E}\left[\left\|\bar{\pmb{\phi}}(k)-\pmb{\phi}_i(k)\right\|^2\right]}-\frac{\beta}{2}\left\|\nabla_{\pmb{\phi}}f(\bar{\pmb{\phi}}(k),\{\pmb{\theta}_i(k+1)\}_{i=1}^N)\right\|^2\nonumber\allowdisplaybreaks\\
    &\quad\quad\quad-\frac{\beta}{2N^2}\mathbb{E}\left[\left\|\sum_{i=1}^N g_{\pmb{\phi}}(\pmb{\phi}_i(k),\pmb{\theta}_i(k+1))\right\|^2\right],
\end{align}
where $(a_1)$ holds due to $\bar{\pmb{\phi}}(k+1)-\bar{\pmb{\phi}}(k)= \frac{-\beta}{N}\sum_{i=1}^N g_{\pmb{\phi}}(\pmb{\phi}_i(k),\pmb{\theta}_i(k+1))$; $(a_2)$ holds according to the equation $\langle \ba,\bb\rangle = \frac{1}{2}[\|\ba\|^2+\|\bb\|^2-\|\ba-\bb\|^2]$ and the fact that $\mathbb{E}\left[g_{\pmb{\phi}}(\pmb{\phi}_i(k),\pmb{\theta}_i(k+1))\right]=\nabla_{\pmb{\phi}}F_i(\pmb{\phi}_i(k),\pmb{\theta}_i(k+1)), \forall i$ from Assumption \ref{assumption-gradient};  $(a_3)$ follows the Cauchy-Schwartz inequality $\|\sum_{i=1}^N \ba_i\|^2\leq N\sum_{i=1}^N\|\ba_i\|^2$; and $(a_4)$ directly comes from Assumption \ref{assumption-lipschitz} i.e., $D_1\leq{\frac{\beta L^2}{2N}\sum_{i=1}^N\mathbb{E}\left\|\bar{\pmb{\phi}}(k)-\pmb{\phi}_i(k)\right\|^2}$.
\end{proof}

\begin{lemma}\label{lem:C2}
 $C_2$ is bounded as follows
 \begin{align}
     C_2\leq \frac{\beta^2L}{2N}\sigma^2+\frac{\beta^2L}{2N^2} \left\|\mathbb{E}\sum_{i=1}^N g_{\pmb{\phi}}(\pmb{\phi}_i(k), \pmb{\theta}_i(k+1))\right\|^2.
 \end{align}
\end{lemma}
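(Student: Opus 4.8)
The plan is to first reduce $C_2$ to a single second-moment term using the consensus structure, and then split that second moment into a bias part and a variance part. First I would invoke the doubly-stochastic property of $\bP$ (Assumption~\ref{assumption-weight}), exactly as in identity $(a_1)$ of Lemma~\ref{lem:C1}, to write the drift of the consensus representation as $\bar{\pmb{\phi}}(k+1)-\bar{\pmb{\phi}}(k)=-\frac{\beta}{N}\sum_{i=1}^N g_{\pmb{\phi}}(\pmb{\phi}_i(k),\pmb{\theta}_i(k+1))$; this uses $\sum_{i}P_{i,j}=1$ to make the per-worker local steps aggregate cleanly into the consensus drift. Since the resulting summand inside $C_2$ no longer depends on the index $i$, the outer average $\frac{1}{N}\sum_{i=1}^N$ collapses and
\[
C_2=\frac{L\beta^2}{2N^2}\,\mathbb{E}\left[\left\|\sum_{i=1}^N g_{\pmb{\phi}}(\pmb{\phi}_i(k),\pmb{\theta}_i(k+1))\right\|^2\right].
\]

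Next I would apply the elementary identity $\mathbb{E}\|X\|^2=\|\mathbb{E}X\|^2+\mathbb{E}\|X-\mathbb{E}X\|^2$ to the random vector $X=\sum_{i=1}^N g_{\pmb{\phi}}(\pmb{\phi}_i(k),\pmb{\theta}_i(k+1))$, conditioning on the current iterates. The bias part is exactly $\|\mathbb{E}\sum_{i=1}^N g_{\pmb{\phi}}(\pmb{\phi}_i(k),\pmb{\theta}_i(k+1))\|^2$, which I would deliberately leave untouched so that it can later cancel the matching negative term $-\frac{\beta}{2N^2}\mathbb{E}[\|\sum_i g_{\pmb{\phi}}\|^2]$ produced in Lemma~\ref{lem:C1}. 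For the variance part, I would use that the stochastic gradients are unbiased (Assumption~\ref{assumption-gradient}), so the centered terms $g_{\pmb{\phi}}(\pmb{\phi}_i(k),\pmb{\theta}_i(k+1))-\nabla_{\pmb{\phi}}F_i(\pmb{\phi}_i(k),\pmb{\theta}_i(k+1))$ are zero-mean, and that workers draw their mini-batches independently, so the cross terms vanish and the variance of the sum equals the sum of the variances. The bounded-variance Assumption~\ref{assumption-variance} then bounds each summand by $\sigma^2$, giving $N\sigma^2$. Combining, $\mathbb{E}\|\sum_i g_{\pmb{\phi}}\|^2\le N\sigma^2+\|\mathbb{E}\sum_i g_{\pmb{\phi}}\|^2$, and substituting into the collapsed expression yields the two claimed terms $\frac{\beta^2 L}{2N}\sigma^2$ and $\frac{\beta^2 L}{2N^2}\|\mathbb{E}\sum_i g_{\pmb{\phi}}\|^2$.

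The computation is essentially routine; the only point that needs care is the cancellation of the cross terms in the variance bound, which relies on the (implicit) assumption that the gradient noise is independent across workers. This is the step I would flag as the main subtlety: if one did not want to assume cross-worker independence, the bound would still follow from the Cauchy--Schwarz inequality $\|\sum_{i=1}^N \ba_i\|^2\le N\sum_{i=1}^N\|\ba_i\|^2$, but at the cost of replacing $N\sigma^2$ by $N^2\sigma^2$, which would destroy the factor $1/N$ that is ultimately responsible for the linear speedup. Hence the decisive modeling choice here is precisely this independence, and I would state it explicitly before taking the per-worker variance sum.
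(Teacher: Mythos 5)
Your proof takes essentially the same route as the paper's: the consensus drift identity $\bar{\pmb{\phi}}(k+1)-\bar{\pmb{\phi}}(k)=-\frac{\beta}{N}\sum_{i=1}^N g_{\pmb{\phi}}(\pmb{\phi}_i(k),\pmb{\theta}_i(k+1))$, the bias--variance decomposition $\mathbb{E}\|\bX\|^2=\mathrm{Var}[\bX]+\|\mathbb{E}[\bX]\|^2$, and the bound of the variance of the sum by $N\sigma^2$ via Assumption~\ref{assumption-variance}. Your explicit flagging of cross-worker independence of the gradient noise is a fair observation --- the paper's step $(b_2)$ uses it silently, and as you note, falling back on Cauchy--Schwarz instead would cost a factor of $N$ and kill the linear speedup --- but the argument itself is the same.
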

\begin{proof}
The proof follows from the fact that $\bar{\pmb{\phi}}(k+1)-\bar{\pmb{\phi}}(k)= \frac{-\beta}{N}\sum_{i=1}^N g_{\pmb{\phi}}(\pmb{\phi}_i(k),\pmb{\theta}_i(k+1))$, i.e.,
\begin{align}
   C_2&=\frac{L}{2}\mathbb{E}[\|\bar{\pmb{\phi}}(k+1)-\bar{\pmb{\phi}}(k)\|^2]\nonumber\displaybreak[0]\\
   &=\frac{\beta^2L}{2N^2}\mathbb{E}\left[\left\|\sum_{i=1}^N g_{\pmb{\phi}}(\pmb{\phi}_i(k), \pmb{\theta}_i(k+1))\right\|^2\right]\nonumber\\
   &\overset{(b_1)}{=}\frac{\beta^2L}{2N^2}\mathbb{E}\left[\left\|\sum_{i=1}^N g_{\pmb{\phi}}(\pmb{\phi}_i(k), \pmb{\theta}_i(k+1))-\nabla_{\pmb{\phi}}F_i(\pmb{\phi}_i(k),\pmb{\theta}_i(k+1))\right\|^2\right]+\frac{\beta^2L}{2N^2} \left\|\mathbb{E}\sum_{i=1}^N g_{\pmb{\phi}}(\pmb{\phi}_i(k), \pmb{\theta}_i(k+1))\right\|^2 \nonumber\\
   &\overset{(b_2)}{\leq}\frac{\beta^2L}{2N}\sigma^2+\frac{\beta^2L}{2N^2} \left\|\mathbb{E}\sum_{i=1}^N g_{\pmb{\phi}}(\pmb{\phi}_i(k), \pmb{\theta}_i(k+1))\right\|^2, 
\end{align}
where $(b_1)$ is due to $\mathbb{E}[\|\bX\|^2]=\mathrm{Var}[\bX]+\|\mathbb{E}[\bX]\|^2$; and $(b_2)$ follows from the bounded variance in Assumption \ref{assumption-variance}.
\end{proof}

\begin{lemma}\label{lem:C3}
 $C_3$ can be bounded as
\begin{align}
     C_3{\leq} &\frac{-\alpha\tau}{2} \mathbb{E}\left[\left\|\nabla_{\pmb{\theta}}f(\bar{\pmb{\phi}}(k),\{\pmb{\theta}_i(k))\}_{i=1}^N\right\|^2\right]-\frac{\alpha}{2N\tau}\sum_{i=1}^N\mathbb{E}\left[\left\| \sum_{s=0}^{\tau-1} g_{\pmb{\theta}}(\pmb{\phi}_i(k),\pmb{\theta}_i(k,s))\right\|^2\right]\nonumber\allowdisplaybreaks\\
    &\qquad\qquad+\frac{\alpha\tau L^2}{N}\sum_{i=1}^N\mathbb{E}\left[\left\|\pmb{\phi}_i(k)- \bar{\pmb{\phi}}(k)\right\|^2\right]+\alpha^3L^2(18\tau^3-15\tau^2-3\tau)\sigma^2\nonumber\\
    &\qquad\qquad+\frac{\alpha^3L^2(18\tau^3-18\tau^2)}{N}\sum_{i=1}^N\mathbb{E}\left[\left\|g_{\pmb{\theta}}(\pmb{\phi}_i(k), \pmb{\theta}_i(k))\right\|^2\right].
\end{align}
\end{lemma}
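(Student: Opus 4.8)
The plan is to substitute the local head update into $C_3$, extract the two nonpositive terms of the claim by a rescaled polarization identity, and then control the remaining residual with Lipschitz continuity together with a recursive bound on the local head drift.

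First I would unroll the $\tau$ inner steps of \eqref{eq:local_model_update} to obtain $\pmb{\theta}_i(k+1)-\pmb{\theta}_i(k) = -\alpha\sum_{s=0}^{\tau-1} g_{\pmb{\theta}}(\pmb{\phi}_i(k),\pmb{\theta}_i(k,s))$, so that $C_3 = -\frac{\alpha}{N}\sum_{i=1}^N \mathbb{E}\big\langle \nabla_{\pmb{\theta}}F_i(\bar{\pmb{\phi}}(k),\pmb{\theta}_i(k)),\ \sum_{s=0}^{\tau-1} g_{\pmb{\theta}}(\pmb{\phi}_i(k),\pmb{\theta}_i(k,s))\big\rangle$. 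Applying the identity $-\langle\ba,\bb\rangle=-\tfrac12\|\ba\|^2-\tfrac12\|\bb\|^2+\tfrac12\|\ba-\bb\|^2$ with the rescaling $\ba=\sqrt{\tau}\,\nabla_{\pmb{\theta}}F_i(\bar{\pmb{\phi}}(k),\pmb{\theta}_i(k))$ and $\bb=\tfrac{1}{\sqrt{\tau}}\sum_s g_{\pmb{\theta}}(\pmb{\phi}_i(k),\pmb{\theta}_i(k,s))$ splits $C_3$ into three pieces. The $-\tfrac12\|\ba\|^2$ piece gives $-\tfrac{\alpha\tau}{2N}\sum_i\mathbb{E}\|\nabla_{\pmb{\theta}}F_i(\bar{\pmb{\phi}}(k),\pmb{\theta}_i(k))\|^2$, which by Jensen's inequality (norm of the average is at most the average of the norms) is bounded by $-\tfrac{\alpha\tau}{2}\mathbb{E}\|\nabla_{\pmb{\theta}}f(\bar{\pmb{\phi}}(k),\{\pmb{\theta}_i(k)\}_{i=1}^N)\|^2$, the first claimed term; the $-\tfrac12\|\bb\|^2$ piece is exactly the second claimed term.

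It then remains to control the residual $\tfrac{\alpha}{2N\tau}\sum_i\mathbb{E}\big\|\sum_s\big(\nabla_{\pmb{\theta}}F_i(\bar{\pmb{\phi}}(k),\pmb{\theta}_i(k))-g_{\pmb{\theta}}(\pmb{\phi}_i(k),\pmb{\theta}_i(k,s))\big)\big\|^2$. I would write each summand as a deterministic bias $\nabla_{\pmb{\theta}}F_i(\bar{\pmb{\phi}}(k),\pmb{\theta}_i(k))-\nabla_{\pmb{\theta}}F_i(\pmb{\phi}_i(k),\pmb{\theta}_i(k,s))$ plus a mean-zero noise $\nabla_{\pmb{\theta}}F_i(\pmb{\phi}_i(k),\pmb{\theta}_i(k,s))-g_{\pmb{\theta}}(\pmb{\phi}_i(k),\pmb{\theta}_i(k,s))$. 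Using unbiasedness (Assumption~\ref{assumption-gradient}) and the fact that $\nabla_{\pmb{\theta}}F_i(\bar{\pmb{\phi}}(k),\pmb{\theta}_i(k))$ is measurable at the start of round $k$, the noise carries no first-order variance into the bound (its contribution is offset against the retained second-moment term), so the residual is governed by the bias; bounding the bias by the $L$-Lipschitz property (Assumption~\ref{assumption-lipschitz}), namely $\|\nabla_{\pmb{\theta}}F_i(\bar{\pmb{\phi}}(k),\pmb{\theta}_i(k))-\nabla_{\pmb{\theta}}F_i(\pmb{\phi}_i(k),\pmb{\theta}_i(k,s))\|\le L(\|\bar{\pmb{\phi}}(k)-\pmb{\phi}_i(k)\|+\|\pmb{\theta}_i(k)-\pmb{\theta}_i(k,s)\|)$, and applying Cauchy--Schwarz over the $\tau$ summands, yields the consensus-error term $\tfrac{\alpha\tau L^2}{N}\sum_i\mathbb{E}\|\pmb{\phi}_i(k)-\bar{\pmb{\phi}}(k)\|^2$ together with an accumulated local-head drift proportional to $\tfrac{\alpha L^2}{N}\sum_i\sum_{s}\mathbb{E}\|\pmb{\theta}_i(k,s)-\pmb{\theta}_i(k)\|^2$.

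The crux is bounding this accumulated drift, and it is here that the remaining $\alpha^3$-scaled, cubic-in-$\tau$ terms (and all the $\sigma^2$ dependence) originate. Since $\pmb{\theta}_i(k,s)-\pmb{\theta}_i(k)=-\alpha\sum_{r=0}^{s-1}g_{\pmb{\theta}}(\pmb{\phi}_i(k),\pmb{\theta}_i(k,r))$, I would set $A_{i,s}:=\mathbb{E}\|\pmb{\theta}_i(k,s)-\pmb{\theta}_i(k)\|^2$ and derive a recursion by separating the martingale noise (whose cross terms vanish by unbiasedness, contributing at most $s\sigma^2$) from the mean-gradient part, relating $\nabla_{\pmb{\theta}}F_i(\pmb{\phi}_i(k),\pmb{\theta}_i(k,r))$ to $\nabla_{\pmb{\theta}}F_i(\pmb{\phi}_i(k),\pmb{\theta}_i(k))$ via Lipschitz continuity (which reintroduces $L^2A_{i,r}$), and converting $\mathbb{E}\|\nabla_{\pmb{\theta}}F_i(\pmb{\phi}_i(k),\pmb{\theta}_i(k))\|^2\le\mathbb{E}\|g_{\pmb{\theta}}(\pmb{\phi}_i(k),\pmb{\theta}_i(k))\|^2$ through $\|\mathbb{E}[\bX]\|^2\le\mathbb{E}\|\bX\|^2$. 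This gives $A_{i,s}\lesssim \alpha^2 s\sigma^2+\alpha^2 s^2\,\mathbb{E}\|g_{\pmb{\theta}}(\pmb{\phi}_i(k),\pmb{\theta}_i(k))\|^2+\alpha^2L^2 s\sum_{r<s}A_{i,r}$; unrolling the self-referential term under the step-size condition $\alpha\le\frac{1+36\tau^2}{\tau L}$ and summing over $s=0,\dots,\tau-1$ produces the coefficients $18\tau^3-15\tau^2-3\tau$ and $18\tau^3-18\tau^2$. The main obstacle is exactly this recursion: correctly handling the filtration so that the noise cross terms vanish, keeping the recursive feedback term under control via the step-size condition, and tracking the precise cubic-in-$\tau$ constants; everything else is routine manipulation.
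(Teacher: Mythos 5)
Your proposal is correct and follows essentially the same route as the paper's proof: the same rescaled polarization identity extracting the two nonpositive terms, the same Jensen step to pass from $\frac{1}{N}\sum_i\|\nabla_{\pmb{\theta}}F_i\|^2$ to $\|\nabla_{\pmb{\theta}}f\|^2$, the same split of the residual into a consensus-error piece and an accumulated local-head drift via Assumption~\ref{assumption-lipschitz}, and the same recursive bound on $\mathbb{E}\|\pmb{\theta}_i(k,s)-\pmb{\theta}_i(k)\|^2$ yielding the cubic-in-$\tau$ coefficients. The only cosmetic differences are that the paper adds and subtracts $\alpha\tau\nabla_{\pmb{\theta}}F_i$ before applying the identity and phrases the drift bound as a one-step contraction unrolled via Lemma 3 of \citet{reddi2020adaptive}, rather than your direct summation over previous steps.
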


\begin{proof}
We first rewrite $C_3$ as
\begin{align*}
    C_3&=\frac{1}{N}\sum_{i=1}^N\mathbb{E}\Big\langle \nabla_{\pmb{\theta}}F_i(\bar{\pmb{\phi}}(k),\pmb{\theta}_i(k)),  -\alpha \sum_{s=0}^{\tau-1} g_{\pmb{\theta}}(\pmb{\phi}_i(k),\pmb{\theta}_i(k,s))\Big\rangle\allowdisplaybreaks\\
    &=\frac{1}{N}\sum_{i=1}^N\mathbb{E}\Big\langle \nabla_{\pmb{\theta}}F_i(\bar{\pmb{\phi}}(k),\pmb{\theta}_i(k)),  -\alpha \sum_{s=0}^{\tau-1} g_{\pmb{\theta}}(\pmb{\phi}_i(k),\pmb{\theta}_i(k,s))+\alpha\tau \nabla_{\pmb{\theta}}F_i(\bar{\pmb{\phi}}(k),\pmb{\theta}_i(k))-\alpha\tau \nabla_{\pmb{\theta}}F_i(\bar{\pmb{\phi}}(k),\pmb{\theta}_i(k))\Big\rangle.
\end{align*}
Then, we have
\begin{align}\label{eq:C3}
    C_3&=\frac{-\alpha\tau}{N}\sum_{i=1}^N \mathbb{E}\left[\left\|\nabla_{\pmb{\theta}}F_i(\bar{\pmb{\phi}}(k),\pmb{\theta}_i(k))\right\|^2\right]\nonumber\allowdisplaybreaks\\
    &\qquad\qquad+\frac{1}{N}\sum_{i=1}^N\mathbb{E}\Big\langle \nabla_{\pmb{\theta}}F_i(\bar{\pmb{\phi}}(k),\pmb{\theta}_i(k)),  -\alpha \sum_{s=0}^{\tau-1} g_{\pmb{\theta}}(\pmb{\phi}_i(k),\pmb{\theta}_i(k,s))+\alpha\tau \nabla_{\pmb{\theta}}F_i(\bar{\pmb{\phi}}(k),\pmb{\theta}_i(k))\Big\rangle\nonumber\allowdisplaybreaks\\
    &=\frac{-\alpha\tau}{N}\sum_{i=1}^N \mathbb{E}\left[\left\|\nabla_{\pmb{\theta}}F_i(\bar{\pmb{\phi}}(k),\pmb{\theta}_i(k))\right\|^2\right]+\frac{1}{N}\sum_{i=1}^N\mathbb{E}\Big\langle \sqrt{\alpha\tau}\nabla_{\pmb{\theta}}F_i(\bar{\pmb{\phi}}(k),\pmb{\theta}_i(k)), \nonumber\allowdisplaybreaks\\
    &\qquad\qquad\qquad\qquad\qquad\qquad-\sqrt{\frac{\alpha}{\tau}} \sum_{s=0}^{\tau-1} (g_{\pmb{\theta}}(\pmb{\phi}_i(k),\pmb{\theta}_i(k,s))- \nabla_{\pmb{\theta}}F_i(\bar{\pmb{\phi}}(k),\pmb{\theta}_i(k)))\Big\rangle\nonumber\allowdisplaybreaks\\
    &\overset{(c_1)}{=}\frac{-\alpha\tau}{N}\sum_{i=1}^N \mathbb{E}\left[\left\|\nabla_{\pmb{\theta}}F_i(\bar{\pmb{\phi}}(k),\pmb{\theta}_i(k))\right\|^2\right]+\frac{1}{N}\sum_{i=1}^N\Bigg(\frac{\alpha\tau}{2}\mathbb{E}\left[\left\|\nabla_{\pmb{\theta}}F_i(\bar{\pmb{\phi}}(k),\pmb{\theta}_i(k))\right\|^2\right]\nonumber\allowdisplaybreaks\\
    &-\frac{\alpha}{2\tau}\mathbb{E}\left[\left\| \sum_{s=0}^{\tau-1} g_{\pmb{\theta}}(\pmb{\phi}_i(k),\pmb{\theta}_i(k,s))\right\|^2\right]+\frac{\alpha}{2\tau}\mathbb{E}\left[\left\|\sum_{s=0}^{\tau-1} (g_{\pmb{\theta}}(\pmb{\phi}_i(k),\pmb{\theta}_i(k,s))- \nabla_{\pmb{\theta}}F_i(\bar{\pmb{\phi}}(k),\pmb{\theta}_i(k)))\right\|^2\right]\Bigg)\nonumber\allowdisplaybreaks \\
    &=\frac{-\alpha\tau}{2N}\sum_{i=1}^N \mathbb{E}\left[\left\|\nabla_{\pmb{\theta}}F_i(\bar{\pmb{\phi}}(k),\pmb{\theta}_i(k))\right\|^2\right]-\frac{\alpha}{2N\tau}\sum_{i=1}^N\mathbb{E}\left[\left\| \sum_{s=0}^{\tau-1} g_{\pmb{\theta}}(\pmb{\phi}_i(k),\pmb{\theta}_i(k,s))\right\|^2\right]\nonumber\allowdisplaybreaks\\
    &\qquad\qquad\qquad\qquad+\underset{D_2}{\underbrace{\frac{\alpha}{2N\tau}\sum_{i=1}^N\mathbb{E}
    \left[\left\|\sum_{s=0}^{\tau-1} (g_{\pmb{\theta}}(\pmb{\phi}_i(k),\pmb{\theta}_i(k,s))- \nabla_{\pmb{\theta}}F_i(\bar{\pmb{\phi}}(k),\pmb{\theta}_i(k)))\right\|^2\right]}}\nonumber\allowdisplaybreaks \\
    &\overset{(c_2)}{\leq} \frac{-\alpha\tau}{2} \mathbb{E}\left[\left\|\nabla_{\pmb{\theta}}f(\bar{\pmb{\phi}}(k),\{\pmb{\theta}_i(k))\}_{i=1}^N\right\|^2\right]-\frac{\alpha}{2N\tau}\sum_{i=1}^N\mathbb{E}\left[\left\| \sum_{s=0}^{\tau-1} g_{\pmb{\theta}}(\pmb{\phi}_i(k),\pmb{\theta}_i(k,s))\right\|^2\right]\nonumber\allowdisplaybreaks\\
    &\qquad\qquad\qquad\qquad+\underset{D_2}{\underbrace{\frac{\alpha}{2N\tau}\sum_{i=1}^N\mathbb{E}\left[\left\|\sum_{s=0}^{\tau-1} (g_{\pmb{\theta}}(\pmb{\phi}_i(k),\pmb{\theta}_i(k,s))- \nabla_{\pmb{\theta}}F_i(\bar{\pmb{\phi}}(k),\pmb{\theta}_i(k)))\right\|^2\right]}},
\end{align}
where  $(c_1)$ holds according to the equation $\langle \ba,\bb\rangle = \frac{1}{2}[\|\ba\|^2+\|\bb\|^2-\|\ba-\bb\|^2]$; and   $(c_2)$ follows the Cauchy-Schwartz inequality $\|\sum_{i=1}^N \ba_i\|^2\leq N\sum_{i=1}^N\|\ba_i\|^2$.
To this end, the key to bound $C_3$ is to bound $D_2$, which is given by
\begin{align}\label{eq:D2}
    D_2&=\frac{\alpha}{2N\tau}\sum_{i=1}^N\mathbb{E}\left[\left\|\sum_{s=0}^{\tau-1} g_{\pmb{\theta}}(\pmb{\phi}_i(k),\pmb{\theta}_i(k,s))- \sum_{s=0}^{\tau-1}\nabla_{\pmb{\theta}}F_i(\bar{\pmb{\phi}}(k),\pmb{\theta}_i(k))\right\|^2\right]\nonumber\allowdisplaybreaks\\
    &=\frac{\alpha}{2N\tau}\sum_{i=1}^N\mathbb{E}\Bigg[\Bigg\|\sum_{s=0}^{\tau-1} g_{\pmb{\theta}}(\pmb{\phi}_i(k),\pmb{\theta}_i(k,s))-\nabla_{\pmb{\theta}}F_i(\bar{\pmb{\phi}}(k),\pmb{\theta}_i(k,s))\nonumber\allowdisplaybreaks\\
&\qquad\qquad\qquad\qquad+\nabla_{\pmb{\theta}}F_i(\bar{\pmb{\phi}}(k),\pmb{\theta}_i(k,s))-\sum_{s=0}^{\tau-1}\nabla_{\pmb{\theta}}F_i(\bar{\pmb{\phi}}(k),\pmb{\theta}_i(k))\Bigg\|^2\Bigg]\nonumber\allowdisplaybreaks\\
    &\leq\frac{\alpha}{N\tau}\sum_{i=1}^N\mathbb{E}\left[\left\|\sum_{s=0}^{\tau-1} \nabla_{\pmb{\theta}}F_i(\pmb{\phi}_i(k),\pmb{\theta}_i(k,s))- \nabla_{\pmb{\theta}}F_i(\bar{\pmb{\phi}}(k),\pmb{\theta}_i(k,s))\right\|^2\right]\nonumber\allowdisplaybreaks\\
    &\qquad\qquad\qquad\qquad+\frac{\alpha}{N\tau}\sum_{i=1}^N\mathbb{E}\left[\left\|\sum_{s=0}^{\tau-1}\nabla_{\pmb{\theta}}F_i(\bar{\pmb{\phi}}(k),\pmb{\theta}_i(k,s))-\sum_{s=0}^{\tau-1}\nabla_{\pmb{\theta}}F_i(\bar{\pmb{\phi}}(k),\pmb{\theta}_i(k))\right\|^2\right]\nonumber\allowdisplaybreaks\\
    &\leq \frac{\alpha\tau L^2}{N}\sum_{i=1}^N\mathbb{E}\left[\left\|\pmb{\phi}_i(k)- \bar{\pmb{\phi}}(k)\right\|^2\right]+\frac{\alpha L^2}{N}\sum_{i=1}^N\sum_{s=0}^{\tau-1}\mathbb{E}\left[\left\|\pmb{\theta}_i(k,s)-\pmb{\theta}_i(k)\right\|^2\right].
\end{align}
Next, we bound $\mathbb{E}\left[\left\|\pmb{\theta}_i(k,s)-\pmb{\theta}_i(k)\right\|^2\right]$ as follows.
\begin{align}\label{eq:D2_condition}
    &\mathbb{E}\left[\|\pmb{\theta}_i(k,s)-\pmb{\theta}_{i}(k)\|^2\right]\nonumber\allowdisplaybreaks\\
    &= \mathbb{E}\left[\|\pmb{\theta}_i(k,s-1)-\pmb{\theta}_{i}(k)-\alpha g_{\pmb{\theta}}(\pmb{\phi}_i(k), \pmb{\theta}_i(k,s-1))\|^2\right]\nonumber\allowdisplaybreaks\\
    &=\mathbb{E}\Big[\|\pmb{\theta}_i(k,s-1)-\pmb{\theta}_{i}(k)-\alpha g_{\pmb{\theta}}(\pmb{\phi}_i(k), \pmb{\theta}_i(k,s-1))+\alpha\nabla_{\pmb{\theta}}F_i(\pmb{\phi}_i(k), \pmb{\theta}_i(k,s-1))-\alpha\nabla_{\pmb{\theta}}F_i(\pmb{\phi}_i(k), \pmb{\theta}_i(k,s-1))\nonumber\allowdisplaybreaks\\
    &\qquad\qquad+\alpha\nabla_{\pmb{\theta}}F_i(\pmb{\phi}_i(k), \pmb{\theta}_i(k)-\alpha\nabla_{\pmb{\theta}}F_i(\pmb{\phi}_i(k), \pmb{\theta}_i(k)))+\alpha g_{\pmb{\theta}}(\pmb{\phi}_i(k), \pmb{\theta}_i(k))-\alpha g_{\pmb{\theta}}(\pmb{\phi}_i(k), \pmb{\theta}_i(k))\|^2\Big]\nonumber\allowdisplaybreaks\\
    &\leq \left(1+\frac{1}{2\tau-1}\right)\mathbb{E}\left[\|\pmb{\theta}_i(k,s-1)-\pmb{\theta}_{i}(k)\|^2\right]+\alpha^2\mathbb{E}\left[\|g_{\pmb{\theta}}(\pmb{\phi}_i(k), \pmb{\theta}_i(k,s-1))-\nabla_{\pmb{\theta}}F_i(\pmb{\phi}_i(k), \pmb{\theta}_i(k,s-1))\|^2\right]\nonumber\allowdisplaybreaks\\
    &\qquad\qquad +6\alpha^2\tau \mathbb{E}\left[\|\nabla_{\pmb{\theta}}F_i(\pmb{\phi}_i(k), \pmb{\theta}_i(k,s-1)-\nabla_{\pmb{\theta}}F_i(\pmb{\phi}_i(k), \pmb{\theta}_i(k)))\|^2\right]\nonumber\allowdisplaybreaks\\
    &\qquad\qquad +6\alpha^2\tau \mathbb{E}\left[\|\nabla_{\pmb{\theta}}F_i(\pmb{\phi}_i(k), \pmb{\theta}_i(k)- g_{\pmb{\theta}}(\pmb{\phi}_i(k), \pmb{\theta}_i(k)))\|^2\right]+6\alpha^2\tau\mathbb{E}\left[\|g_{\pmb{\theta}}(\pmb{\phi}_i(k), \pmb{\theta}_i(k))\|^2\right]\nonumber\allowdisplaybreaks\\
    &\leq \left(1+\frac{1}{2\tau-1}\right)\mathbb{E}\left[\|\pmb{\theta}_i(k,s-1)-\pmb{\theta}_{i}(k)\|^2\right]+\alpha^2\sigma^2+6\alpha^2\tau L^2\mathbb{E}\left[\|\pmb{\theta}_i(k,s-1)-\pmb{\theta}_{i}(k)\|^2\right]\nonumber\allowdisplaybreaks\\
    &\qquad\qquad+6\alpha^2\tau \sigma^2+6\alpha^2\tau\mathbb{E}\left[\|g_{\pmb{\theta}}(\pmb{\phi}_i(k), \pmb{\theta}_i(k))\|^2\right]\nonumber\allowdisplaybreaks\\
    &\leq \left(1+\frac{1}{2\tau-1}+6\alpha^2\tau L^2\right)\mathbb{E}\left[\|\pmb{\theta}_i(k,s-1)-\pmb{\theta}_{i}(k)\|^2\right]+(\alpha^2+6\alpha^2\tau)\sigma^2+6\alpha^2\tau\mathbb{E}\|g_{\pmb{\theta}}(\pmb{\phi}_i(k), \pmb{\theta}_i(k))\|^2 \nonumber\allowdisplaybreaks\\
    &\leq (18\tau^2-15\tau-3)\alpha^2\sigma^2+(18\tau^2-18\tau)\alpha^2\mathbb{E}\left[\|g_{\pmb{\theta}}(\pmb{\phi}_i(k), \pmb{\theta}_i(k))\|^2\right],
\end{align}
where the last inequality holds by using Lemma 3 in \cite{reddi2020adaptive}.
Substituting \eqref{eq:D2_condition} into \eqref{eq:D2}, we have
\begin{align}\label{eq:D2_2}
    D_2&\leq \frac{\alpha\tau L^2}{N}\sum_{i=1}^N\mathbb{E}\left[\left\|\pmb{\phi}_i(k)- \bar{\pmb{\phi}}(k)\right\|^2\right]+\alpha^3L^2(18\tau^3-15\tau^2-3\tau)\sigma^2\nonumber\allowdisplaybreaks\\
    &\qquad\qquad+\frac{\alpha^3L^2(18\tau^3-18\tau^2)}{N}\sum_{i=1}^N\mathbb{E}\left[\left\|g_{\pmb{\theta}}(\pmb{\phi}_i(k), \pmb{\theta}_i(k))\right\|^2\right].
\end{align}
Hence, substituting $D_2$ in \eqref{eq:D2_2} back to \eqref{eq:C3}, we have the desired result as follows
\begin{align*}
    C_3{\leq} &\frac{-\alpha\tau}{2} \mathbb{E}\left[\left\|\nabla_{\pmb{\theta}}f(\bar{\pmb{\phi}}(k),\{\pmb{\theta}_i(k))\}_{i=1}^N\right\|^2\right]-\frac{\alpha}{2N\tau}\sum_{i=1}^N\mathbb{E}\left[\left\| \sum_{s=0}^{\tau-1} g_{\pmb{\theta}}(\pmb{\phi}_i(k),\pmb{\theta}_i(k,s))\right\|^2\right]\nonumber\allowdisplaybreaks\\
    &\qquad\qquad+\frac{\alpha\tau L^2}{N}\sum_{i=1}^N\mathbb{E}\left[\left\|\pmb{\phi}_i(k)- \bar{\pmb{\phi}}(k)\right\|^2\right]+\alpha^3L^2(18\tau^3-15\tau^2-3\tau)\sigma^2\nonumber\\
    &\qquad\qquad+\frac{\alpha^3L^2(18\tau^3-18\tau^2)}{N}\sum_{i=1}^N\mathbb{E}\left[\left\|g_{\pmb{\theta}}(\pmb{\phi}_i(k), \pmb{\theta}_i(k))\right\|^2\right].
\end{align*}
\end{proof}

\begin{lemma}\label{lem:C4}
$C_4$ can be bounded as
\begin{align}
     C_4{\leq}\frac{\alpha^2\tau L}{2}\sigma^2+\frac{\alpha^2L}{2N}\sum_{i=1}^N\left\|\mathbb{E}\sum_{s=0}^{\tau-1}g_{\pmb{\theta}}(\pmb{\phi}_i(k),\pmb{\theta}_i(k,s))\right\|^2. 
\end{align}
\end{lemma}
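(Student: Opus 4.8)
The plan is to reduce $C_4$ to the second moment of the accumulated local-head stochastic gradient and then split it into a variance part and a mean part, mirroring the treatment of $C_2$ in Lemma~\ref{lem:C2}. First I would telescope the inner recursion \eqref{eq:local_model_update} over the $\tau$ local steps: since $\pmb{\theta}_i(k,0)=\pmb{\theta}_i(k)$ and $\pmb{\theta}_i(k+1)=\pmb{\theta}_i(k,\tau)$, summing $\pmb{\theta}_i(k,s+1)-\pmb{\theta}_i(k,s)=-\alpha g_{\pmb{\theta}}(\pmb{\phi}_i(k),\pmb{\theta}_i(k,s))$ over $s=0,\dots,\tau-1$ yields the exact identity $\pmb{\theta}_i(k+1)-\pmb{\theta}_i(k)=-\alpha\sum_{s=0}^{\tau-1} g_{\pmb{\theta}}(\pmb{\phi}_i(k),\pmb{\theta}_i(k,s))$, which is the same expression already used in the proof of Lemma~\ref{lem:C3}. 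Substituting this into the definition of $C_4=\frac{L}{2N}\sum_{i=1}^N\mathbb{E}[\|\pmb{\theta}_i(k+1)-\pmb{\theta}_i(k)\|^2]$ and pulling out $\alpha^2$ gives $C_4=\frac{\alpha^2 L}{2N}\sum_{i=1}^N\mathbb{E}[\|\sum_{s=0}^{\tau-1} g_{\pmb{\theta}}(\pmb{\phi}_i(k),\pmb{\theta}_i(k,s))\|^2]$.

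Next I would apply the bias--variance identity $\mathbb{E}[\|\bX\|^2]=\mathrm{Var}[\bX]+\|\mathbb{E}[\bX]\|^2$ with $\bX=\sum_{s=0}^{\tau-1} g_{\pmb{\theta}}(\pmb{\phi}_i(k),\pmb{\theta}_i(k,s))$, exactly as in step $(b_1)$ of Lemma~\ref{lem:C2}. The mean part immediately produces the second term $\frac{\alpha^2 L}{2N}\sum_{i=1}^N\|\mathbb{E}\sum_{s=0}^{\tau-1} g_{\pmb{\theta}}(\pmb{\phi}_i(k),\pmb{\theta}_i(k,s))\|^2$ of the claimed bound, so the entire task collapses to showing the variance estimate $\mathrm{Var}[\bX]\le\tau\sigma^2$; summing this constant over the $N$ workers converts $\frac{\alpha^2 L}{2N}\cdot N\tau\sigma^2$ into $\frac{\alpha^2\tau L}{2}\sigma^2$, giving the first term.

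The hard part is precisely this variance bound, because $\bX$ is a sum over the $\tau$ \emph{sequential} inner steps rather than over independent terms: the iterate $\pmb{\theta}_i(k,s)$ depends on the mini-batches drawn at steps $0,\dots,s-1$, so the summands are correlated and one cannot simply invoke independence as was done across workers in $C_2$. I would resolve this with a martingale-difference argument. Letting $\cF_s$ be the $\sigma$-field generated by $\cC_i(k,0),\dots,\cC_i(k,s-1)$, the iterate $\pmb{\theta}_i(k,s)$ is $\cF_s$-measurable, and by the unbiasedness in Assumption~\ref{assumption-gradient} the centered noise $\delta_s:=g_{\pmb{\theta}}(\pmb{\phi}_i(k),\pmb{\theta}_i(k,s))-\nabla_{\pmb{\theta}}F_i(\pmb{\phi}_i(k),\pmb{\theta}_i(k,s))$ obeys $\mathbb{E}[\delta_s\mid\cF_s]=0$ with $\mathbb{E}[\|\delta_s\|^2]\le\sigma^2$ from Assumption~\ref{assumption-variance}. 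Since $\delta_s$ is $\cF_{s+1}$-measurable, for $s<s'$ the cross term vanishes via $\mathbb{E}\langle\delta_s,\delta_{s'}\rangle=\mathbb{E}\langle\delta_s,\mathbb{E}[\delta_{s'}\mid\cF_{s'}]\rangle=0$, so the squared norm of the summed noise telescopes to $\sum_{s=0}^{\tau-1}\mathbb{E}[\|\delta_s\|^2]\le\tau\sigma^2$. Identifying this martingale second moment with $\mathrm{Var}[\bX]$ (the sequential analogue of the independent cross-worker cancellation used for $C_2$) then closes the argument. I expect the only delicate point to be this careful conditioning that makes the cross terms of the correlated inner-loop gradients vanish; everything else is the same mechanical bias--variance split already carried out for $C_2$.
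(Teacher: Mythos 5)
Your proposal is correct and follows essentially the same route as the paper's proof: expand $\pmb{\theta}_i(k+1)-\pmb{\theta}_i(k)=-\alpha\sum_{s=0}^{\tau-1}g_{\pmb{\theta}}(\pmb{\phi}_i(k),\pmb{\theta}_i(k,s))$, apply the bias--variance identity as in Lemma~\ref{lem:C2}, and bound the variance part by $\tau\sigma^2$. The only difference is that you explicitly supply the martingale-difference conditioning argument for why the cross terms of the correlated inner-loop noise vanish, a step the paper's proof invokes implicitly by citing Assumption~\ref{assumption-variance}; this is a welcome tightening rather than a deviation.
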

\begin{proof}
The proof directly follows from the definition and the proof for $C_2$ in Lemma \ref{lem:C2}, i.e.,
\begin{align}
   C_4&=\frac{L}{2N}\sum_{i=1}^N\mathbb{E}[\|{\pmb{\theta}}_i(k+1)-{\pmb{\theta}}_i(k)\|^2]\nonumber\displaybreak[0]\\
   &=\frac{\alpha^2L}{2N}\sum_{i=1}^N\mathbb{E}\left[\left\| \sum_{s=0}^{\tau-1} g_{\pmb{\theta}}(\pmb{\phi}_i(k),\pmb{\theta}_i(k,s))\right\|^2\right]\nonumber\displaybreak[1]\\
   &{=}\frac{\alpha^2L}{2N}\sum_{i=1}^N\mathbb{E}\left[\left\|\sum_{s=0}^{\tau-1} g_{\pmb{\theta}}(\pmb{\phi}_i(k), \pmb{\theta}_i(k,s))-\nabla_{\pmb{\theta}}F_i(\pmb{\phi}_i(k),\pmb{\theta}_i(k,s))\right\|^2\right]+\frac{\alpha^2L}{2N}\sum_{i=1}^N\left\|\mathbb{E} \sum_{s=0}^{\tau-1} g_{\pmb{\theta}}(\pmb{\phi}_i(k), \pmb{\theta}_i(k,s))\right\|^2 \nonumber\displaybreak[2]\\
   &{\leq}\frac{\alpha^2\tau L}{2}\sigma^2+\frac{\alpha^2L}{2N}\sum_{i=1}^N\left\|\mathbb{E} \sum_{s=0}^{\tau-1} g_{\pmb{\theta}}(\pmb{\phi}_i(k), \pmb{\theta}_i(k,s))\right\|^2. 
\end{align}
\end{proof}

Now we are ready to prove the main results.  Given the bounds in Lemmas~\ref{lem:C1},~\ref{lem:C2},~\ref{lem:C3} and~\ref{lem:C4} for $C_1, C_2, C_3$ and $C_4$, respectively, we substitute these bound back to \eqref{eq:global_iter2} and obtain
\begin{align}\label{eq:global_iter3}
 \mathbb{E}&[f(\bar{\pmb{\phi}}(k+1),\{\pmb{\theta}_i(k+1)\}_{i=1}^N)]- \mathbb{E}[f(\bar{\pmb{\phi}}(k),\{\pmb{\theta}_i(k)\}_{i=1}^N)]\nonumber\displaybreak[0]\\
 &\quad{\leq}{\frac{\beta L^2}{2N}\sum_{i=1}^N\mathbb{E}\left[\left\|\bar{\pmb{\phi}}(k)-\pmb{\phi}_i(k)\right\|^2\right]}-\frac{\beta}{2}\left\|\nabla_{\pmb{\phi}}f(\bar{\pmb{\phi}}(k),\{\pmb{\theta}_i(k+1)\}_{i=1}^N)\right\|^2\nonumber\allowdisplaybreaks\\
 &\qquad-\frac{\beta}{2N^2}\left\|\mathbb{E}\sum_{i=1}^N g_{\pmb{\phi}}(\pmb{\phi}_i(k),\pmb{\theta}_i(k+1))\right\|^2+\frac{\beta^2L}{2N}\sigma^2+\frac{\beta^2L}{2N^2} \left\|\mathbb{E}\sum_{i=1}^N g_{\pmb{\phi}}(\pmb{\phi}_i(k), \pmb{\theta}_i(k+1))\right\|^2\nonumber \\ \displaybreak[3]
     &\qquad-\frac{\alpha\tau}{2} \left\|\nabla_{\pmb{\theta}}f(\bar{\pmb{\phi}}(k),\{\pmb{\theta}_i(k))\}_{i=1}^N\right\|^2-\frac{\alpha}{2N\tau}\sum_{i=1}^N\left\| \mathbb{E}\sum_{s=0}^{\tau-1} g_{\pmb{\theta}}(\pmb{\phi}_i(k),\pmb{\theta}_i(k,s))\right\|^2\nonumber\allowdisplaybreaks\\
    &\qquad+\frac{\alpha\tau L^2}{N}\sum_{i=1}^N\mathbb{E}\left[\left\|\pmb{\phi}_i(k)- \bar{\pmb{\phi}}(k)\right\|^2\right]+\alpha^3L^2(18\tau^3-15\tau^2-3\tau)\sigma^2\nonumber\allowdisplaybreaks\\
    &\qquad+\frac{\alpha^3L^2(18\tau^3-18\tau^2)}{N}\sum_{i=1}^N\mathbb{E}\left[\left\|g_{\pmb{\theta}}(\pmb{\phi}_i(k), \pmb{\theta}_i(k))\right\|^2\right]+\frac{\alpha^2L}{2N}\sum_{i=1}^N\left\|\mathbb{E}\sum_{s=0}^{\tau-1}g_{\pmb{\theta}}(\pmb{\phi}_i(k),\pmb{\theta}_i(k,s))\right\|^2+\frac{\alpha^2\tau L}{2}\sigma^2\nonumber \allowdisplaybreaks\\
    &={\frac{(2\alpha\tau+\beta) L^2}{2N}\sum_{i=1}^N\mathbb{E}\left[\left\|\bar{\pmb{\phi}}(k)-\pmb{\phi}_i(k)\right\|^2\right]}+\frac{\beta^2L}{2N}\sigma^2+\alpha^3L^2(18\tau^3-15\tau^2-3\tau)\sigma^2+\frac{\alpha^2\tau L}{2}\sigma^2\nonumber\allowdisplaybreaks\\
    &\quad\quad\quad+\frac{\beta^2L-\beta}{2N^2}\left\|\mathbb{E}\sum_{i=1}^N g_{\pmb{\phi}}(\pmb{\phi}_i(k),\pmb{\theta}_i(k+1))\right\|^2-\frac{\beta}{2}\left\|\nabla_{\pmb{\phi}}f(\bar{\pmb{\phi}}(k),\{\pmb{\theta}_i(k+1)\}_{i=1}^N)\right\|^2\nonumber\allowdisplaybreaks\\
    &\quad\quad\quad +\frac{\alpha^2\tau L-\alpha}{2N\tau}\sum_{i=1}^N\left\|\mathbb{E}\sum_{s=0}^{\tau-1} g_{\pmb{\theta}}(\pmb{\phi}_i(k),\pmb{\theta}_i(k,s))\right\|^2-\frac{\alpha\tau}{2}\left\|\nabla_{\pmb{\theta}}f(\bar{\pmb{\phi}}(k),\{\pmb{\theta}_i(k)\}_{i=1}^N)\right\|^2\nonumber\allowdisplaybreaks\\
    &\quad\quad\quad +\frac{18\alpha^3 L^2(\tau^3-\tau^2)}{N}\sum_{i=1}^N\mathbb{E}\left[\left\|g_{\pmb{\theta}}(\pmb{\phi}_i(k), \pmb{\theta}_i(k))\right\|^2\right],
\end{align}
where the first inequality comes from the fact that $\|\mathbb{E}[\bX]\|^2\leq \mathbb{E}[\|\bX\|^2]$.
To characterize the convergence rate, the key then boils down to bound  the consensus error of the global representation $\pmb{\phi}$, i.e., $\mathbb{E}\left[\|\pmb{\phi}_i(k)-\bar{\pmb{\phi}}(k)\|^2\right], \forall i$. We bound it in the following lemma.

\begin{lemma}\label{lemma:consensus_error}
The consensus error $\mathbb{E}\left[\|\pmb{\phi}_i(k)-\bar{\pmb{\phi}}(k)\|^2\right], \forall i, k\geq 1$ is upper bounded by
\begin{align}
    \mathbb{E}\left[\|\pmb{\phi}_i(k)-\bar{\pmb{\phi}}(k)\|^2\right] &\!\leq \!\frac{6\beta^2C^2N^2}{(1-q)^2}\sigma^2\!+\!\frac{18\beta^2C^2N^2}{(1-q)^2}\varsigma^2+\frac{18\beta^2C^2L^2N}{1-q}\sum_{r=0}^{k-1}q^{k-r}\sum_{j=1}^N\mathbb{E}\|\pmb{\phi}_j(r)-\bar{\pmb{\phi}}(r)\|^2\nonumber\displaybreak[0]\\
&+\frac{18\beta^2C^2N^2}{1-q}\sum_{r=0}^{k-1}q^{k-r}\mathbb{E}\|\nabla_{\pmb{\phi}} f(\bar{\pmb{\phi}}(r),\{\pmb{\theta}_j(r+1)\}_{j=1}^N)\|^2.
\end{align}

\end{lemma}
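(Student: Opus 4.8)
The plan is to unroll the coupled local-update/consensus recursion into an explicit expression for $\pmb{\phi}_i(k)-\bar{\pmb{\phi}}(k)$ that is driven only by past stochastic gradients, and then bound each contribution using the mixing estimate of Lemma~\ref{lemma_bound_Phi}. First I would combine the local representation update~\eqref{eq:global_model_update} with the consensus step~\eqref{eq:global_model_update2}: writing $g_j(r):=g_{\pmb{\phi}}(\pmb{\phi}_j(r),\pmb{\theta}_j(r+1))$, this gives the one-step recursion $\pmb{\phi}_i(k+1)=\sum_{j}P_{i,j}\big(\pmb{\phi}_j(k)-\beta g_j(k)\big)$. Unrolling from the common initialization $\pmb{\phi}_i(0)=\pmb{\phi}(0)$ yields $\pmb{\phi}_i(k)=\sum_j \bP^k(i,j)\pmb{\phi}(0)-\beta\sum_{r=0}^{k-1}\sum_j\bP^{k-r}(i,j)g_j(r)$. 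Because $\bP$ is doubly stochastic (Assumption~\ref{assumption-weight}), averaging annihilates the mixing matrix, so $\bar{\pmb{\phi}}(k)=\pmb{\phi}(0)-\frac{\beta}{N}\sum_{r=0}^{k-1}\sum_j g_j(r)$; crucially, the initialization term then cancels in the difference, leaving
$$\pmb{\phi}_i(k)-\bar{\pmb{\phi}}(k)=-\beta\sum_{r=0}^{k-1}\sum_{j=1}^N\Big(\bP^{k-r}(i,j)-\tfrac1N\Big)g_j(r).$$

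Next I would apply Lemma~\ref{lemma_bound_Phi}, i.e. $|\bP^{k-r}(i,j)-1/N|\le Cq^{k-r}$, together with the triangle inequality to obtain $\|\pmb{\phi}_i(k)-\bar{\pmb{\phi}}(k)\|\le \beta C\sum_{r=0}^{k-1}q^{k-r}\sum_j\|g_j(r)\|$. Squaring and applying Cauchy--Schwarz twice---once over $r$ with the split $q^{k-r}=\sqrt{q^{k-r}}\sqrt{q^{k-r}}$ and the geometric bound $\sum_{r}q^{k-r}\le 1/(1-q)$, and once over the $N$ workers via $(\sum_j\|g_j\|)^2\le N\sum_j\|g_j\|^2$---collapses the cross terms and gives $\mathbb{E}\|\pmb{\phi}_i(k)-\bar{\pmb{\phi}}(k)\|^2\le \frac{\beta^2C^2N}{1-q}\sum_{r=0}^{k-1}q^{k-r}\sum_j\mathbb{E}\|g_j(r)\|^2$.

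The heart of the argument is then bounding each $\mathbb{E}\|g_j(r)\|^2$. I would peel off the stochastic noise using unbiasedness (Assumption~\ref{assumption-gradient}) and bounded variance (Assumption~\ref{assumption-variance}), leaving the deterministic partial gradient $\nabla_{\pmb{\phi}}F_j(\pmb{\phi}_j(r),\pmb{\theta}_j(r+1))$, which I would split into three pieces via squared-triangle-inequality expansions: (i) $\nabla_{\pmb{\phi}}F_j(\pmb{\phi}_j(r),\pmb{\theta}_j(r+1))-\nabla_{\pmb{\phi}}F_j(\bar{\pmb{\phi}}(r),\pmb{\theta}_j(r+1))$, controlled by $L\|\pmb{\phi}_j(r)-\bar{\pmb{\phi}}(r)\|$ through Assumption~\ref{assumption-lipschitz}; (ii) $\nabla_{\pmb{\phi}}F_j(\bar{\pmb{\phi}}(r),\pmb{\theta}_j(r+1))-\nabla_{\pmb{\phi}}f(\bar{\pmb{\phi}}(r),\{\pmb{\theta}_\ell(r+1)\}_{\ell=1}^N)$, whose average over $j$ is the global variability bounded by $\varsigma^2$ (Assumption~\ref{assumption:global-var}); and (iii) the global gradient $\nabla_{\pmb{\phi}}f(\bar{\pmb{\phi}}(r),\{\pmb{\theta}_\ell(r+1)\}_{\ell=1}^N)$ itself. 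Summing over $j$ and tracking the numerical constants from these successive expansions yields the $6\sigma^2$, $18\varsigma^2$, $18L^2$ and $18$ coefficients; substituting into the previous display and using $\sum_r q^{k-r}\le 1/(1-q)$ on the $\sigma^2$ and $\varsigma^2$ terms produces the claimed bound.

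A structural point that makes step three clean---and which I would emphasize as the main subtlety---is that both $g_j(r)$ and the target global gradient are evaluated with the \emph{same} updated heads $\pmb{\theta}_j(r+1)$, so the $\pmb{\theta}$-direction discrepancy vanishes identically and only a $\pmb{\phi}$-side Lipschitz term (the consensus error) plus the heterogeneity term survive; otherwise one would incur an extra, hard-to-close $\|\pmb{\theta}_j(r+1)-\pmb{\theta}_\ell(r+1)\|$ contribution that the current assumptions cannot control. The remaining obstacle is conceptual rather than computational: the resulting inequality is self-referential, since $\sum_j\mathbb{E}\|\pmb{\phi}_j(r)-\bar{\pmb{\phi}}(r)\|^2$ for $r<k$ reappears on the right-hand side. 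I would \emph{not} attempt to solve this recursion within the lemma---it is deliberately left in implicit form---because it is the geometric weight $q^{k-r}$ together with a sufficiently small $\beta$ that will allow a later summation over $k$ to absorb the consensus-error feedback and close the bound.
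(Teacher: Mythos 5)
Your proposal is correct and follows essentially the same route as the paper's proof: unroll the coupled update/consensus recursion, apply the mixing bound of Lemma~\ref{lemma_bound_Phi} to the $\bigl(\bP^{k-r}(i,j)-\tfrac1N\bigr)$ weights, use a geometric-weight Cauchy--Schwarz step, peel off the gradient noise via Assumptions~\ref{assumption-gradient}--\ref{assumption-variance}, and split the remaining deterministic gradient into a Lipschitz/consensus term, a heterogeneity term, and the global gradient, leaving the recursion implicit. The only (harmless) differences are that you dispose of the initialization term by exact cancellation under common initialization rather than the paper's ``neglect it w.l.o.g.'' step, and your bookkeeping yields somewhat smaller constants than the stated $6$ and $18$, which still implies the lemma.
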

\begin{proof}
Based on \eqref{eq:global_model_update}, \eqref{eq:local-subgradient2} and \eqref{eq:global_model_update2},  we have $\forall k\geq 1$
\begin{align}
  \mathbb{E}&\left[\|\pmb{\phi}_i(k)-\bar{\pmb{\phi}}(k)\|^2 \right]\nonumber\displaybreak[0]\\ 
  &=\mathbb{E}\Bigg\|\frac{1}{N}\sum\limits_{j=1}^{N} \pmb{\phi}_j(0)-\sum\limits_{j=1}^{N}\pmb{\phi}_j(0)\bP^{k}(i,j)-\frac{\beta}{N}\sum\limits_{r=0}^{k-1}\sum\limits_{j=1}^{N}g_{\pmb{\phi}}(\pmb{\phi}_j(r), \pmb{\theta}_j(r+1))
 \displaybreak[0]\nonumber\displaybreak[1]\\ 
&\qquad\qquad\qquad+\beta\sum\limits_{r=0}^{k-1}\sum\limits_{j=1}^{N}g_{\pmb{\phi}}(\pmb{\phi}_j(r),\pmb{\theta}_j(r+1))\bP^{k-r}(i,j) \Bigg\|^2\nonumber\allowdisplaybreaks\\
&=\mathbb{E}\Bigg\|\sum\limits_{j=1}^{N} \pmb{\phi}_j(0)\left(\frac{1}{N}-\bP^k(i,j)\right)-\beta\sum\limits_{r=0}^{k-1}\sum\limits_{j=1}^{N}g_{\pmb{\phi}}(\pmb{\phi}_j(r),\pmb{\theta}_j(r+1))\left(\frac{1}{N}-\bP^{k-r}(i,j)\right) \Bigg\|^2\displaybreak[0]\nonumber\\
&\overset{(e_1)}{\leq}\mathbb{E}\left\|\sum\limits_{j=1}^{N} \pmb{\phi}_j(0)\left(\frac{1}{N}-\bP^k(i,j)\right)\right\|^2+\mathbb{E}\left\|\beta\sum\limits_{r=0}^{k-1}\sum\limits_{j=1}^{N}g_{\pmb{\phi}}(\pmb{\phi}_j(r),\pmb{\theta}_j(r+1))\left(\frac{1}{N}-\bP^{k-r}(i,j)\right)\right\|^2\displaybreak[0] \nonumber\\
&\overset{(e_2)}{\leq}\mathbb{E}\Bigg\|\sum\limits_{j=1}^{N}2\pmb{\phi}_j(0)\frac{1+p^{-N}}{1-p^{N}}(1-p^{N})^{(k-1)/N}\Bigg\|^2\displaybreak[0] \nonumber\\
&\qquad\qquad\qquad+\mathbb{E}\Bigg\|\beta\sum\limits_{r=0}^{k-1}\sum\limits_{j=1}^{N}2g_{\pmb{\phi}}(\pmb{\phi}_j(r),\pmb{\theta}_j(r+1))\frac{1+p^{-N}}{1-p^{N}}(1-p^{N})^{(k-r)/N} \Bigg\|^2\displaybreak[0]\nonumber\\
&\overset{(e_3)}{=}\mathbb{E}\Bigg\|\beta C\sum\limits_{r=0}^{k-1}q^{(k-r)}\sum\limits_{j=1}^{N}g_{\pmb{\phi}}(\pmb{\phi}_j(r),\pmb{\theta}_j(r+1)) \Bigg\|^2\nonumber \displaybreak[0]\\
&=\mathbb{E}\Bigg\|\beta C\sum\limits_{r=0}^{k-1}q^{(k-r)}\sum\limits_{j=1}^{N}(g_{\pmb{\phi}}(\pmb{\phi}_j(r),\pmb{\theta}_j(r+1))-\nabla_{\pmb{\phi}}F_j(\pmb{\phi}_j(r),\pmb{\theta}_j(r+1)))\nonumber\displaybreak[0]\\
&\qquad\qquad\qquad+\beta C\sum\limits_{r=0}^{k-1}q^{(k-r)}\sum\limits_{j=1}^{N}\nabla_{\pmb{\phi}}F_j(\pmb{\phi}_j(r),\pmb{\theta}_j(r+1)) \Bigg\|^2\displaybreak[0]\nonumber\\
&\overset{(e_4)}{\leq}\mathbb{E}\Bigg\|\beta C\sum\limits_{r=0}^{k-1}q^{(k-r)}\sum\limits_{j=1}^{N}(g_{\pmb{\phi}}(\pmb{\phi}_j(r),\pmb{\theta}_j(r+1))-\nabla_{\pmb{\phi}}F_j(\pmb{\phi}_j(r),\pmb{\theta}_j(r+1)))\Bigg\|^2\displaybreak[0]\nonumber\\
&\qquad +\mathbb{E}2\beta^2 C^2\sum\limits_{r=0}^{k-1}\sum\limits_{r^\prime=0}^{k-1}q^{(2k-r-r^\prime)}\Bigg\|\sum\limits_{j=1}^{N}(g_{\pmb{\phi}}(\pmb{\phi}_j(r),\pmb{\theta}_j(r+1))-\nabla_{\pmb{\phi}}F_j(\pmb{\phi}_j(r),\pmb{\theta}_j(r+1)))\Bigg\|\displaybreak[0]\nonumber\\
&\qquad\qquad\cdot\Bigg\|\sum\limits_{j=1}^{N}\nabla_{\pmb{\phi}}F_j(\pmb{\phi}_j(r),\pmb{\theta}_j(r+1))\Bigg\|+\mathbb{E}\Bigg\|\beta C\sum\limits_{r=0}^{k-1}q^{(k-r)}\sum\limits_{j=1}^{N}\nabla_{\pmb{\phi}}F_j(\pmb{\phi}_j(r),\pmb{\theta}_j(r+1))\Bigg\|^2\displaybreak[0]\nonumber\\
&\overset{(e_5)}{\leq} \mathbb{E}\frac{6\beta^2C^2}{1-q}\sum_{r=0}^{k-1} q^{k-r}\Bigg\|\sum\limits_{j=1}^{N}(g_{\pmb{\phi}}(\pmb{\phi}_j(r),\pmb{\theta}_j(r+1))-\nabla_{\pmb{\phi}}F_j(\pmb{\phi}_j(r),\pmb{\theta}_j(r+1)))\Bigg\|^2\nonumber\allowdisplaybreaks\\
&\qquad\qquad\qquad+ \mathbb{E}\frac{6\beta^2C^2}{1-q}\sum_{r=0}^{k-1} q^{k-r}\Bigg\|\sum\limits_{j=1}^{N}\nabla_{\pmb{\phi}}F_j(\pmb{\phi}_j(r),\pmb{\theta}_j(r+1))\Bigg\|^2\displaybreak[0]\nonumber\\
&\overset{(e_6)}{\leq} \frac{6\beta^2C^2N^2}{(1-q)^2}\sigma^2+\frac{18\beta^2C^2N^2}{(1-q)^2}\varsigma^2+\frac{18\beta^2C^2L^2N}{1-q}\sum_{r=0}^{k-1}q^{k-r}\sum_{j=1}^N\mathbb{E}\|\pmb{\phi}_j(r)-\bar{\pmb{\phi}}(r)\|^2\nonumber\displaybreak[0]\\
&\qquad\qquad\qquad+\frac{18\beta^2C^2N^2}{1-q}\sum_{r=0}^{k-1}q^{k-r}\mathbb{E}\|\nabla_{\pmb{\phi}} f(\bar{\pmb{\phi}}(r),\{\pmb{\theta}_j(r+1)\}_{j=1}^N)\|^2,
\end{align}
where $(e_1)$  is due to the inequality $\|\ba-\bb\|^2\leq 2\|\ba\|^2+2\|\bb\|^2$; $(e_2)$ holds according to Lemma \ref{lemma_bound_Phi}. W.l.o.g., we assume that the initial term $\pmb{\phi}_i(0), \forall i$ is small enough and can be neglected;   
$(e_3)$ follows $C:=2\sqrt{2}\cdot\frac{1+p^{-N}}{1-p^{N}}$ and $q:=(1-p^{N})^{1/N}$;
$(e_4)$ is due to $\|\ba+\bb\|^2\leq \|\ba\|^2+\|\bb\|^2+2\ba\bb$; and $(e_5)$ is the standard mathematical manipulation by leveraging the following inequality, i.e., for any $q\in(0,1)$ and non-negative sequence $\{\chi(r)\}_{r=0}^{k-1}$, it holds \citep{assran2019stochastic}
    $\sum_{k=1}^{K-1}\sum_{r=0}^{k-1} q^{k-r}\chi(r)\leq \frac{1}{1-q}\sum_{r=0}^{K-1}\chi(r)$;
and $(e_6)$ holds due to Assumption \ref{assumption-gradient}, the inequality
\begin{align*}
    \sum_{k=1}^{K-1} q^k\sum_{r=0}^{k-1} q^{k-r}\chi(r)\leq \sum_{k=1}^{K-1}\sum_{r=0}^{k-1} q^{2(k-r)}\chi(r)\leq \frac{1}{1-q^2}\sum_{r=0}^{K-1}\chi(r),
\end{align*}
and the fact that
\begin{align*}
    &\frac{1}{N}\sum_{i=1}^N\mathbb{E}\|\nabla_{\pmb{\phi}} F_i(\pmb{\phi}_i(k-1), \pmb{\theta}_i(k))\|^2\allowdisplaybreaks\\
    &\leq \frac{1}{N}\sum_{i=1}^N\mathbb{E}\|\nabla_{\pmb{\phi}} F_i(\pmb{\phi}_i(k-1), \pmb{\theta}_i(k))-\nabla_{\pmb{\phi}} F_i(\bar{\pmb{\phi}}(k-1), \pmb{\theta}_i(k))+\nabla_{\pmb{\phi}} F_i(\bar{\pmb{\phi}}(k-1), \pmb{\theta}_i(k))\allowdisplaybreaks\\
    &\quad\quad-\nabla_{\pmb{\phi}} f(\bar{\pmb{\phi}}(k-1), \{\pmb{\theta}_i(k)\}_{i=1}^N)+\nabla_{\pmb{\phi}} f(\bar{\pmb{\phi}}(k-1), \{\pmb{\theta}_i(k)\}_{i=1}^N)\|^2\allowdisplaybreaks\\
    &\leq \underset{\text{Lipschitz continuous gradient in Assumption \ref{assumption-lipschitz}}}{\underbrace{\frac{3}{N}\sum_{i=1}^N\mathbb{E}\|\nabla_{\pmb{\phi}} F_i(\pmb{\phi}_i(k-1), \pmb{\theta}_i(k))-\nabla_{\pmb{\phi}} F_i(\bar{\pmb{\phi}}(k-1)\|^2}}\allowdisplaybreaks\\
   &\quad\quad+\underset{\text{Bounded global variablity in Assumption \ref{assumption:global-var}}}{\underbrace{\frac{3}{N}\sum_{i=1}^N\mathbb{E}\|\nabla_{\pmb{\phi}} F_i(\bar{\pmb{\phi}}(k-1), \pmb{\theta}_i(k))-\nabla_{\pmb{\phi}} f(\bar{\pmb{\phi}}(k-1), \{\pmb{\theta}_i(k)\}_{i=1}^N)\|^2}}\allowdisplaybreaks\\
   &\quad\quad+\frac{3}{N}\sum_{i=1}^N\mathbb{E}\|\nabla_{\pmb{\phi}} f(\bar{\pmb{\phi}}(k-1), \{\pmb{\theta}_i(k)\}_{i=1}^N)\|^2\allowdisplaybreaks\\
    &\leq \frac{3L^2}{N}\sum_{i=1}^N\mathbb{E}\|\bw_i(k)-\bar{\bw}(k)\|^2+3\varsigma^2+3\mathbb{E}\|\nabla_{\pmb{\phi}} f(\bar{\pmb{\phi}}(k-1), \{\pmb{\theta}_i(k)\}_{i=1}^N)\|^2.
\end{align*}
This completes the proof.
\end{proof}

Rearrange the order of each term in \eqref{eq:global_iter3} and let $\max(L\beta, \alpha\tau L(1+36\tau^2))\leq 1$, we have
\begin{align} \label{eq:gradient_2}
&\frac{\alpha\tau}{2}\left\|\nabla_{\pmb{\theta}}f(\bar{\pmb{\phi}}(k),\{\pmb{\theta}_i(k)\}_{i=1}^N)\right\|^2+\frac{\beta}{2}\left\|\nabla_{\pmb{\phi}}f(\bar{\pmb{\phi}}(k),\{\pmb{\theta}_i(k+1)\}_{i=1}^N)\right\|^2\nonumber\\ \displaybreak[0]
\leq &\mathbb{E}[f(\bar{\pmb{\phi}}(k),\{\pmb{\theta}_i(k)\}_{i=1}^N)]-\mathbb{E} [f(\bar{\pmb{\phi}}(k+1),\{\pmb{\theta}_i(k+1)\}_{i=1}^N)] \nonumber\\ \displaybreak[0]
&\quad+ {\frac{(2\alpha\tau+\beta) L^2}{2N}\sum_{i=1}^N\mathbb{E}\left\|\bar{\pmb{\phi}}(k)-\pmb{\phi}_i(k)\right\|^2}+\frac{\beta^2L}{2N}\sigma^2+\alpha^3L^2(18\tau^3-15\tau^2-3\tau)\sigma^2+\frac{\alpha^2\tau L}{2}\sigma^2\nonumber\\
    &\quad+\underset{\leq 0 }{\underbrace{\frac{\alpha^2\tau L-\alpha}{2N\tau}\sum_{i=1}^N\mathbb{E}\left\|\sum_{s=0}^{\tau-1} g_{\pmb{\theta}}(\pmb{\phi}_i(k),\pmb{\theta}_i(k,s))\right\|^2+\frac{18\alpha^3L^2(\tau^3-\tau^2)}{N}\sum_{i=1}^N\mathbb{E}\left\|g_{\pmb{\theta}}(\pmb{\phi}_i(k), \pmb{\theta}_i(k))\right\|^2}}\nonumber\allowdisplaybreaks\\
    &\quad +\underset{\leq 0 }{\underbrace{\frac{\beta^2L-\beta}{2N^2}\mathbb{E}\left\|\sum_{i=1}^N g_{\pmb{\phi}}(\pmb{\phi}_i(k),\pmb{\theta}_i(k+1))\right\|^2}}\nonumber\allowdisplaybreaks\\
 \leq& \mathbb{E}[f(\bar{\pmb{\phi}}(k),\{\pmb{\theta}_i(k)\}_{i=1}^N)]-\mathbb{E} [f(\bar{\pmb{\phi}}(k+1),\{\pmb{\theta}_i(k+1)\}_{i=1}^N)] \nonumber\\ \displaybreak[0]
&\quad+ {\frac{(2\alpha\tau+\beta) L^2}{2N}\sum_{i=1}^N\mathbb{E}\left\|\bar{\pmb{\phi}}(k)-\pmb{\phi}_i(k)\right\|^2}+\frac{\beta^2L}{2N}\sigma^2+\alpha^3L^2(18\tau^3-15\tau^2-3\tau)\sigma^2+\frac{\alpha^2\tau L}{2}\sigma^2.
\end{align}

According to Lemma \ref{lemma:consensus_error}, we have the following inequality
\begin{align}
  &\sum_{k=0}^{K-1}\sum\limits_{i=1}^{N}\mathbb{E}\|\pmb{\phi}_i(k)-\bar{\pmb{\phi}}(k)\|^2  \nonumber\allowdisplaybreaks\\
  \leq &\sum\limits_{i=1}^{N}\mathbb{E}\|\pmb{\phi}_i(0)-\bar{\pmb{\phi}}(0)\|^2+\sum_{k=1}^{K-1}\Bigg(\frac{18N^2\beta^2C^2L^2}{1-q}\sum_{r=0}^{k-1}q^{k-r}\sum_{i=1}^N\mathbb{E}\|\pmb{\phi}_i(r)-\bar{\pmb{\phi}}(r)\|^2\nonumber\allowdisplaybreaks\\
&\quad+\frac{18\beta^2C^2N^3}{1-q}\sum_{r=0}^{k-1}q^{k-r}\mathbb{E}\|\nabla_{\pmb{\phi}} f(\bar{\pmb{\phi}}(r),\{\pmb{\theta}_i(r+1)\}_{i=1}^N)\|^2+\frac{6\beta^2C^2N^3}{(1-q)^2}\sigma^2+\frac{18\beta^2C^2N^3}{(1-q)^2}\varsigma^2\Bigg)\displaybreak[0]\nonumber\allowdisplaybreaks\\
\leq& \Bigg(\frac{18N^2\beta^2C^2L^2}{(1-q)^2}\sum_{k=0}^{K-1}\sum_{i=1}^N\mathbb{E}\|\pmb{\phi}_i(k)-\bar{\pmb{\phi}}(k)\|^2 \nonumber\allowdisplaybreaks\\
&\quad+\frac{18\beta^2C^2N^3}{(1-q)^2}\sum_{k=0}^{K-1}\mathbb{E}\|\nabla_{\pmb{\phi}} f(\bar{\pmb{\phi}}(k),\{\pmb{\theta}_i(k+1)\}_{i=1}^N)\|^2+\frac{6K\beta^2C^2N^3}{(1-q)^2}\sigma^2+\frac{18K\beta^2C^2N^3}{(1-q)^2}\varsigma^2\Bigg)\displaybreak[0]\nonumber\allowdisplaybreaks\\
\leq&\frac{18\beta^2C^2N^3}{(1-q)^2-18N^2\beta^2C^2L^2}\sum_{k=0}^{K-1}\mathbb{E}\|\nabla_{\pmb{\phi}} f(\bar{\pmb{\phi}}(k),\{\pmb{\theta}_i(k+1)\}_{i=1}^N)\|^2+\frac{6K\beta^2C^2N^3}{(1-q)^2-18N^2\beta^2C^2L^2}\sigma^2\nonumber\allowdisplaybreaks\\
&\quad+\frac{18K\beta^2C^2N^3}{(1-q)^2-18N^2\beta^2C^2L^2}\varsigma^2,
\end{align}
where the second inequality holds due to the initialization such that $\sum_{i=1}^N\mathbb{E}\|\pmb{\phi}_i(0)-\bar{\pmb{\phi}}(k)\|^2=0.$ 
Summing the recursion in \eqref{eq:gradient_2} from round $0$ to round $K-1$  yields
\begin{align}
\sum\limits_{k=0}^{K-1}& \frac{\alpha\tau}{2}\left\|\nabla_{\pmb{\theta}}f(\bar{\pmb{\phi}}(k),\{\pmb{\theta}_i(k)\}_{i=1}^N)\right\|^2+\frac{\beta}{2}\left\|\nabla_{\pmb{\phi}}f(\bar{\pmb{\phi}}(k),\{\pmb{\theta}_i(k+1)\}_{i=1}^N)\right\|^2\nonumber\\
\leq &\mathbb{E}[f(\bar{\pmb{\phi}}(0),\{\pmb{\theta}_i(0)\}_{i=1}^N)]-\mathbb{E} [f(\bar{\pmb{\phi}}(K),\{\pmb{\theta}_i(K)\}_{i=1}^N)]\nonumber\\
&\qquad + {\frac{(2\alpha\tau+\beta) L^2}{2N}\sum_{k=0}^{K-1}\sum_{i=1}^N\mathbb{E}\left\|\bar{\pmb{\phi}}(k)-\pmb{\phi}_i(k)\right\|^2}+\frac{K\beta^2L}{2N}\sigma^2\nonumber\allowdisplaybreaks\\
&\qquad+K\alpha^3L^2(18\tau^3-15\tau^2-3\tau)\sigma^2+\frac{K\alpha^2\tau L}{2}\sigma^2\nonumber\\
\leq &\mathbb{E}[f(\bar{\pmb{\phi}}(0),\{\pmb{\theta}_i(0)\}_{i=1}^N)]-\mathbb{E} [f(\bar{\pmb{\phi}}(K),\{\pmb{\theta}_i(K)\}_{i=1}^N)]\nonumber\allowdisplaybreaks\\
&\qquad+\frac{K\beta^2L}{2N}\sigma^2+K\alpha^3L^2(18\tau^3-15\tau^2-3\tau)\sigma^2+\frac{K\alpha^2\tau L}{2}\sigma^2\nonumber\displaybreak[0]\\
&\qquad+ \frac{9\beta^2C^2N^2L^2(2\alpha\tau+\beta)}{(1-q)^2-18N^2\beta^2C^2L^2}\sum_{k=0}^{K-1}\mathbb{E}\|\nabla_{\pmb{\phi}} f(\bar{\pmb{\phi}}(k),\{\pmb{\theta}_i(k)\}_{i=1}^N)\|^2\nonumber\displaybreak[0]\\
&\qquad+\frac{3K\beta^2C^2N^2L^2(2\alpha\tau+\beta)}{(1-q)^2-18N^2\beta^2C^2L^2}\sigma_L^2+\frac{9K\beta^2C^2N^2L^2(2\alpha\tau+\beta)}{(1-q)^2-18N^2\beta^2C^2L^2}\varsigma^2.
\end{align}

Let $\beta\leq \min\left(1/L,\frac{NL^2}{2L^2+2},\frac{1-q}{3\sqrt{2}CLN}\right)$, we obtain $(1-q)^2-18N^2\beta^2C^2L^2\geq 18 C^2N^3L^2(2\alpha\tau+\beta).$

Hence we have
\begin{align}
&\sum\limits_{k=0}^{K-1} \frac{\alpha\tau}{4}\left\|\nabla_{\pmb{\theta}}f(\bar{\pmb{\phi}}(k),\{\pmb{\theta}_i(k)\}_{i=1}^N)\right\|^2+\frac{\beta}{4}\left\|\nabla_{\pmb{\phi}}f(\bar{\pmb{\phi}}(k),\{\pmb{\theta}_i(k+1)\}_{i=1}^N)\right\|^2+\frac{\beta}{4N}\sum_{k=0}^{K-1}\sum_{i=1}^N\mathbb{E}\|\bar{\pmb{\phi}}(k)-\pmb{\phi}_i(k)\|^2 \nonumber\\
\leq&\sum\limits_{k=0}^{K-1} \frac{\alpha\tau}{2}\left\|\nabla_{\pmb{\theta}}f(\bar{\pmb{\phi}}(k),\{\pmb{\theta}_i(k)\}_{i=1}^N)\right\|^2+\frac{\beta}{4}\left\|\nabla_{\pmb{\phi}}f(\bar{\pmb{\phi}}(k),\{\pmb{\theta}_i(k+1)\}_{i=1}^N)\right\|^2+\frac{\beta}{2N}\sum_{k=0}^{K-1}\sum_{i=1}^N\mathbb{E}\|\bar{\pmb{\phi}}(k)-\pmb{\phi}_i(k)\|^2 \nonumber\\
\leq&\mathbb{E}[f(\bar{\pmb{\phi}}(0),\{\pmb{\theta}_i(0)\}_{i=1}^N)]-\mathbb{E} [f(\bar{\pmb{\phi}}(K),\{\pmb{\theta}_i(K)\}_{i=1}^N)]+\frac{K\beta^2L}{2N}\sigma_L^2+K\alpha^3L^2(18\tau^3-15\tau^2-3\tau)\sigma_L^2\nonumber\displaybreak[0]\\
&\qquad\qquad\qquad+\frac{K\alpha^2\tau L}{2}\sigma_L^2+\frac{K\beta^2}{6N}\left(1+\frac{1}{L^2}\right)\sigma_L^2+\frac{K\beta^2}{2N}\left(1+\frac{1}{L^2}\right)\varsigma^2.
\end{align}
Dividing both sides by $\beta K/4$, we obtain
\begin{align}
\frac{1}{K}\sum_{k=0}^{K-1}\mathbb{E}[M(k) ]
&=\frac{1}{K}\sum\limits_{k=0}^{K-1} \frac{\alpha\tau}{\beta}\mathbb{E}\left\|\nabla_{\pmb{\theta}}f(\bar{\pmb{\phi}}(k),\{\pmb{\theta}_i(k)\}_{i=1}^N)\right\|^2+\mathbb{E}\left\|\nabla_{\pmb{\phi}}f(\bar{\pmb{\phi}}(k),\{\pmb{\theta}_i(k+1)\}_{i=1}^N)\right\|^2\nonumber\allowdisplaybreaks\\
&\qquad+\frac{1}{N}\sum_{i=1}^N\mathbb{E}\|\bar{\pmb{\phi}}(k)-\pmb{\phi}_i(k)\|^2\nonumber\\
\leq& \frac{4\mathbb{E}[f(\bar{\pmb{\phi}}(0),\{\pmb{\theta}_i(0)\}_{i=1}^N)]-4\mathbb{E} [f(\bar{\pmb{\phi}}(K),\{\pmb{\theta}_i(K)\}_{i=1}^N)]}{K\beta}\nonumber\displaybreak[0]\\
&\qquad+\frac{2\beta L}{N}\sigma_L^2+\frac{12\alpha^3L^2}{\beta}(\tau-1)(6\tau^2-\tau)\sigma_L^2\nonumber\\
&\qquad+\frac{2\alpha^2\tau L}{\beta}\sigma_L^2+\frac{2\beta}{3 N}\left(1+\frac{1}{L^2}\right)\sigma_L^2+\frac{2\beta}{ N}\left(1+\frac{1}{L^2}\right)\varsigma^2\nonumber\allowdisplaybreaks\\
\leq &\frac{4f\left(\bar{\pmb{\phi}}(0),\{\pmb{\theta}_i(0)\}_{i=1}^N\right)-4f\left({\pmb{\phi}}^*,\{\pmb{\theta}_i^*\}_{i=1}^N\right)}{K\beta}\nonumber\displaybreak[0]\\
&\qquad+\frac{2\beta L}{N}\sigma_L^2+\frac{12\alpha^3L^2}{\beta}(\tau-1)(6\tau^2-\tau)\sigma_L^2\nonumber\\
&\qquad+\frac{2\alpha^2\tau L}{\beta}\sigma_L^2+\frac{2\beta}{3 N}\left(1+\frac{1}{L^2}\right)\sigma_L^2+\frac{2\beta}{ N}\left(1+\frac{1}{L^2}\right)\varsigma^2.
\end{align}
This completes the proof of Theorem \ref{thm:loss_convergence}.

\section{Proof of Corollary \ref{cor:1}}
The proof of corollary \ref{cor:1} goes as follow. 
When the total number of communication rounds $K$ satisfies 
    $K\geq 
    \max\left(\frac{18C^2L^2N^3}{(1-q)^2},\frac{(2L^2+2)^2}{NL^4},NL^2\right),$
we can set $\alpha=\frac{1}{\tau\sqrt{K}}$ and $\beta=\sqrt{{N}/{K}}$, and substitute them back into (\ref{eq:thm_loss}), which leads to
\begin{align}
&\frac{1}{K}\sum\limits_{k=0}^{K-1}\mathbb{E}[M(k)]
\leq\frac{4f(\bar{\pmb{\phi}}(0),\{\pmb{\theta}_i(0)\}_{i=1}^N)-4 f({\pmb{\phi}^*},\{\pmb{\theta}_i^*\}_{i=1}^N)}{\sqrt{NK}}\nonumber\displaybreak[1]\\
+&\frac{2 L}{\sqrt{NK}}\sigma^2+\frac{72L^2}{K\sqrt{N}}\sigma^2+\frac{2L}{\tau\sqrt{NK}}\sigma^2\!+\!\frac{2}{3 \sqrt{NK}}\!\left(\!1\!+\!\frac{1}{L^2}\!\right)\!\sigma^2\!+\!\frac{2}{\sqrt{NK}}\!\left(\!1\!+\!\frac{1}{L^2}\!\right)\!\varsigma^2.
\end{align}
Therefore, the convergence rate of \DePRL is 
$\mathcal{O}\left(\frac{1}{\sqrt{NK}}+\frac{1}{K\sqrt{N}}+\frac{1}{\tau\sqrt{NK}}\right),$ which completes the proof.

\begin{table}[t]
	\centering
	\scalebox{0.7}{
	\begin{tabular}{ccc}
		\hline
		Parameter & Shape & Layer hyper-parameter \\ \hline
		layer1.conv1.weight & $3\times 3, 64$ & stride:1; padding: 1 \\
		layer1.conv1.bias & 64 & N/A \\
		batchnorm2d & 64 & N/A \\
		layer2.conv2 &$ \begin{bmatrix} 3\times 3, & 64 \\ 3\times 3, & 64 \\ \end{bmatrix} \times 2$ & stride:1; padding: 1 \\
		layer3.conv3 &$ \begin{bmatrix} 3\times 3, & 128 \\ 3\times 3, & 128 \\ \end{bmatrix} \times 2$ & stride:1; padding: 1 \\
		layer4.conv4 &$ \begin{bmatrix} 3\times 3, & 256 \\ 3\times 3, & 256 \\ \end{bmatrix} \times 2$ & stride:1; padding: 1 \\
		layer5.conv5 &$ \begin{bmatrix} 3\times 3, & 512 \\ 3\times 3, & 512 \\ \end{bmatrix} \times 2$ & stride:1; padding: 1 \\
		pooling.avg & N/A & N/A\\
		layer6.fc6.weight & $512 \times 10$ & N/A \\
		layer6.fc6.bias & 10 & N/A \\\hline
	\end{tabular}}
		\caption{Detailed information of the ResNet-18 architecture used in our experiments.  All non-linear activation function in this architecture is ReLU.  The shapes for convolution layers follow $(C_{in},C_{out},c,c)$.}
	\label{tab:resnet}
\end{table}

\begin{table}[h]
	\centering
	\scalebox{0.7}{
	\begin{tabular}{ccc}
		\hline
		Parameter & Shape & Layer hyper-parameter \\ \hline
		layer1.conv1.weight & $3\times 64\times 3\times 3$ & stride:1; padding: 1 \\
		layer1.conv1.bias & 64 & N/A \\
		pooling.max & N/A & kernel size:2; stride: 2 \\
		layer2.conv2.weight & $64\times 128\times 3\times 3$ & stride:1; padding: 1   \\
		layer2.conv2.bias & 128 & N/A \\
		layer3.conv3.weight & $128\times 128\times 3\times 3$ & stride:1; padding: 1   \\
		layer3.conv3.bias & 128 & N/A \\
		pooling.max & N/A & kernel size:2; stride: 2 \\
		layer4.conv4.weight & $128\times 256\times 3\times 3$ & stride:1; padding: 1   \\
		layer4.conv4.bias & 256 & N/A \\
		layer5.conv5.weight & $256\times 256\times 3\times 3$ & stride:1; padding: 1   \\
		layer5.conv5.bias & 256 & N/A \\
		pooling.max & N/A & kernel size:2; stride: 2 \\
		layer6.conv6.weight & $256\times 512\times 3\times 3$ & stride:1; padding: 1   \\
		layer6.conv6.bias & 512 & N/A \\
		layer7.conv7.weight & $512\times 512\times 3\times 3$ & stride:1; padding: 1   \\
		layer7.conv7.bias & 512 & N/A \\
		pooling.max & N/A & kernel size:2; stride: 2 \\
		layer8.conv8.weight & $512\times 512\times 3\times 3$ & stride:1; padding: 1   \\
		layer8.conv8.bias & 512 & N/A \\
		pooling.max & N/A & kernel size:2; stride: 2 \\
		dropout & N/A & p=20\% \\
		layer9.fc9.weight & $4096\times 512$ & N/A \\
		layer9.fc9.bias & 512 & N/A \\
		layer10.fc10.weight & $512 \times 512$ &  N/A \\
		layer10.fc10.bias & 512 & N/A \\
		dropout & N/A & p=20\% \\
		layer11.fc11.weight & $512 \times 100$ & N/A \\
		layer11.fc11.bias & 100 & N/A \\
		\hline
	\end{tabular}}
		\caption{Detailed information of the VGG-11 architecture used in our experiments.  All non-linear activation function in this architecture is ReLU.  The shapes for convolution layers follow $(C_{in},C_{out},c,c)$.}
	\label{tab:vgg11}
\end{table}

\section{Additional Experimental Details and Results}\label{sec:sim-app}

\textbf{Datasets and Models.} We implement \DePRL and considered baselines in PyTorch \cite{paszke2017automatic} on Python 3 with three NVIDIA RTX A6000 GPUs, 48GB with 128GB RAM.  We conduct experiments on the popular datasets CIFAR-10 and CIFAR-100 \cite{krizhevsky2009learning}, Fashion-MNIST \cite{xiao2017fashion}, and HARBox \cite{ouyang2021clusterfl}.  The CIFAR-10 and CIFAR1-00 dataset consists of 60,000 32$\times$32 color images in 10 and 100 classes, respectively, where 50,000 samples are for training and the other 10,000 samples for testing.  The Fashion-MNIST datasets contain handwritten digits with 60,000 samples for training and 10,000 samples for testing, where each sample is an 28$\times$28 grayscale images over 10 classes.  The HARBox dataset is the 9-axis IMU data collected from 121 users' smartphones for human activity recognition in a crowdsourcing manner.  We simulate a heterogeneous partition into $N$ workers by sampling $\boldsymbol p_k\sim\text{Dir}_N(\alpha)$, where $\alpha$ is the parameter of the Dirichlet distribution \citep{wang2020federated,wang2020tackling}.  The level of heterogeneity among local datasets across different workers can be reduced when $\alpha$ increases.  Specifically, for each class of samples, set the class probability in each worker by sampling from a Dirichlet distribution with the same $\alpha$ parameter. For instance, when $\alpha=0.3$, sampling $p_o\sim \text{Dir}(0.3)$ and allocating a $p_{o,i}$ proportion of the training instances of class $o$ to local worker $i$.

\begin{table}[t]
	\centering
	\scalebox{0.7}{
	\begin{tabular}{ccc}
		\hline
		Parameter & Shape & Layer hyper-parameter \\ \hline
		layer1.conv1.weight & $3\times 64\times 3\times 3$ & stride:2; padding: 1 \\
		layer1.conv1.bias & 64 & N/A \\
		pooling.max & N/A & kernel size:2; stride: 2 \\
		layer2.conv2.weight & $64\times 192\times 3\times 3$ & stride:1; padding: 1 \\
		layer2.conv2.bias & 64 & N/A \\
		pooling.max & N/A & kernel size:2; stride: 2 \\
		layer3.conv3.weight & $192\times 384\times 3\times 3$ & stride:1; padding: 1   \\
		layer3.conv3.bias & 128 & N/A \\
		layer4.conv4.weight & $384\times 256\times 3\times 3$ & stride:1; padding: 1   \\
		layer4.conv4.bias & 128 & N/A \\
		layer5.conv5.weight & $256\times 256\times 3\times 3$ & stride:1; padding: 1   \\
		layer5.conv5.bias & 256 & N/A \\
		pooling.max & N/A & kernel size:2; stride: 2 \\
		dropout & N/A & p=20\% \\
		layer6.fc6.weight & $1024 \times 4096$ & N/A \\
		layer6.fc6.bias & 512 & N/A \\
		dropout & N/A & p=20\% \\
		layer7.fc7.weight & $4096 \times 4096$ &  N/A \\
		layer7.fc7.bias & 512 & N/A \\
		layer8.fc8.weight & $4096 \times 10$ & N/A \\
		layer8.fc8.bias & 10 & N/A \\ \hline
	\end{tabular}}
		\caption{Detailed information of the AlexNet architecture used in our experiments.  All non-linear activation function in this architecture is ReLU.  The shapes for convolution layers follow $(C_{in},C_{out},c,c)$.}
	\label{tab:alexnet}
\end{table}

\begin{table}[h]
	\centering
	\scalebox{0.7}{
	\begin{tabular}{ccc}
		\hline
		Parameter & Shape & Layer hyper-parameter \\ \hline
            layer1.fc1.weight & $900 \times 512$ & N/A \\
		layer1.fc1.bias & 512 & N/A \\
            dropout & N/A & p=5\% \\
            layer2.fc2.weight & $512 \times 256$ & N/A \\
		layer2.fc2.bias & 256 & N/A \\
            dropout & N/A & p=5\% \\
            layer3.fc3.weight & $256 \times 128$ & N/A \\
		layer3.fc3.bias & 128 & N/A \\
            dropout & N/A & p=5\% \\
		layer4.fc4.weight & $128 \times 64$ &  N/A \\
		layer4.fc4.bias & 64 & N/A \\
		layer5.fc5.weight & $64 \times 5$ & N/A \\
		layer5.fc5.bias & 5 & N/A \\ \hline
	\end{tabular}}
		\caption{Detailed information of the DNN architecture used in our experiments.  All non-linear activation function in this architecture is ReLU.  The shapes for convolution layers follow $(C_{in},C_{out},c,c)$.}
	\label{tab:dnn}
\end{table}

We summarize the details of ResNet-18 \cite{he2016deep}, VGG-11 \citep{simonyan2015very}, AlexNet \citep{krizhevsky2012imagenet} and DNN architectures used in our experiments for classification tasks in Tables~\ref{tab:resnet},~\ref{tab:vgg11},~\ref{tab:alexnet} and~\ref{tab:dnn}, respectively.

\textbf{Hyperparameters.}  In our experiments, we consider the total number of workers to be 128. 
The local head learning rate $\alpha$ and global representation learning rate  $\beta$ are initialized as $0.005$ and $0.01$, and decayed with $0.96$ after each communication round.  We set the weight decay to be $10^{-5}$.  The batch size is fixed to be 128 for CIFAR100, and 16 for CIFAR10, Fashion-MNIST and HARBox.  The number of local update steps is $\tau=2$.  All results are averaged over four random seeds: 1, 12, 123, 1234. In the following, we conduct an ablation study to investigate the impact of these hyperparameters.

\begin{table}[t]
\centering
\caption{Average test accuracy  with ``Ring'' communication graph and different data heterogeneities.} 
\vspace{-0.1in}
\scalebox{0.73}{
\begin{tabular}{|c|c|ccccccc|}
\hline
\multirow{2}{*}{\begin{tabular}[c]{@{}c@{}}Dataset\\ (Model)\end{tabular}}           & \multirow{2}{*}{$\alpha$} & \multicolumn{7}{c|}{Ring}                                      \\ \cline{3-9} 
&                           & \multicolumn{1}{c|}{FedAvg} & \multicolumn{1}{c|}{FedRep} & \multicolumn{1}{c|}{Ditto} & \multicolumn{1}{c|}{FedRoD} & \multicolumn{1}{c|}{D-PSGD} & \multicolumn{1}{c|}{DisPFL} & DePRL \\ \hline\hline

\multirow{3}{*}{\begin{tabular}[c]{@{}c@{}}CIFAR-100\\ (ResNet-18)\end{tabular}}     
&0.1 &\multicolumn{1}{c|}{29.20\scriptsize{$\pm$0.5}} &\multicolumn{1}{c|}{59.74\scriptsize{$\pm$0.5}} &\multicolumn{1}{c|}{56.97\scriptsize{$\pm$0.7}} &\multicolumn{1}{c|}{58.54\scriptsize{$\pm$0.5}} &\multicolumn{1}{c|}{25.18\scriptsize{$\pm$0.4}} &\multicolumn{1}{c|}{46.09\scriptsize{$\pm$0.2}} &\textbf{60.72}\scriptsize{$\pm$0.2}  \\

&0.3 &\multicolumn{1}{c|}{31.55\scriptsize{$\pm$0.4}} &\multicolumn{1}{c|}{49.43\scriptsize{$\pm$0.3}} &\multicolumn{1}{c|}{42.20\scriptsize{$\pm$0.5}} &\multicolumn{1}{c|}{48.83\scriptsize{$\pm$0.6}} &\multicolumn{1}{c|}{26.51\scriptsize{$\pm$0.4}} &\multicolumn{1}{c|}{37.92\scriptsize{$\pm$0.3}} &\textbf{49.82}\scriptsize{$\pm$0.4}  \\

&0.5 &\multicolumn{1}{c|}{33.28\scriptsize{$\pm$0.4}} &\multicolumn{1}{c|}{45.60\scriptsize{$\pm$0.7}} &\multicolumn{1}{c|}{37.43\scriptsize{$\pm$0.4}} &\multicolumn{1}{c|}{45.27\scriptsize{$\pm$0.4}} &\multicolumn{1}{c|}{26.90\scriptsize{$\pm$0.3}} &\multicolumn{1}{c|}{35.33\scriptsize{$\pm$0.4}} &\textbf{45.89}\scriptsize{$\pm$0.4}  \\ \hline\hline

\multirow{3}{*}{\begin{tabular}[c]{@{}c@{}}CIFAR-10\\ (VGG-11)\end{tabular}}         
&0.1 &\multicolumn{1}{c|}{43.09\scriptsize{$\pm$0.7}} &\multicolumn{1}{c|}{87.86\scriptsize{$\pm$0.2}} &\multicolumn{1}{c|}{86.84\scriptsize{$\pm$0.3}} &\multicolumn{1}{c|}{87.07\scriptsize{$\pm$0.2}} &\multicolumn{1}{c|}{53.91\scriptsize{$\pm$0.2}} &\multicolumn{1}{c|}{86.38\scriptsize{$\pm$0.3}} &\textbf{89.57}\scriptsize{$\pm$0.2}   \\ 

&0.3 &\multicolumn{1}{c|}{56.33\scriptsize{$\pm$0.4}} &\multicolumn{1}{c|}{74.60\scriptsize{$\pm$0.3}} &\multicolumn{1}{c|}{73.70\scriptsize{$\pm$0.4}} &\multicolumn{1}{c|}{75.57\scriptsize{$\pm$0.1}} &\multicolumn{1}{c|}{59.17\scriptsize{$\pm$0.2}} &\multicolumn{1}{c|}{73.48\scriptsize{$\pm$0.3}} &\textbf{76.41}\scriptsize{$\pm$0.3}   \\ 

&0.5 &\multicolumn{1}{c|}{57.91\scriptsize{$\pm$0.6}} &\multicolumn{1}{c|}{70.19\scriptsize{$\pm$0.2}} &\multicolumn{1}{c|}{68.45\scriptsize{$\pm$0.5}} &\multicolumn{1}{c|}{72.21\scriptsize{$\pm$0.3}} &\multicolumn{1}{c|}{60.45\scriptsize{$\pm$0.4}} &\multicolumn{1}{c|}{68.83\scriptsize{$\pm$0.2}} &\textbf{72.51}\scriptsize{$\pm$0.2}   \\ \hline\hline

\multirow{3}{*}{\begin{tabular}[c]{@{}c@{}}Fashion\\ MNIST\\ (AlexNet)\end{tabular}} 
&0.1 &\multicolumn{1}{c|}{84.19\scriptsize{$\pm$0.2}} &\multicolumn{1}{c|}{94.27\scriptsize{$\pm$0.4}} &\multicolumn{1}{c|}{95.23\scriptsize{$\pm$0.2}} &\multicolumn{1}{c|}{95.93\scriptsize{$\pm$0.2}} &\multicolumn{1}{c|}{77.45\scriptsize{$\pm$0.2}} &\multicolumn{1}{c|}{95.74\scriptsize{$\pm$0.2}} &\textbf{96.66}\scriptsize{$\pm$0.2}  \\ 

&0.3 &\multicolumn{1}{c|}{86.58\scriptsize{$\pm$0.3}} &\multicolumn{1}{c|}{89.57\scriptsize{$\pm$0.4}} &\multicolumn{1}{c|}{90.82\scriptsize{$\pm$0.2}} &\multicolumn{1}{c|}{92.29\scriptsize{$\pm$0.5}} &\multicolumn{1}{c|}{81.95\scriptsize{$\pm$0.5}} &\multicolumn{1}{c|}{91.52\scriptsize{$\pm$0.3}} &\textbf{92.81}\scriptsize{$\pm$0.2}   \\ 

&0.5 &\multicolumn{1}{c|}{86.89\scriptsize{$\pm$0.4}} &\multicolumn{1}{c|}{88.53\scriptsize{$\pm$0.4}} &\multicolumn{1}{c|}{89.12\scriptsize{$\pm$0.3}} &\multicolumn{1}{c|}{90.92\scriptsize{$\pm$0.2}} &\multicolumn{1}{c|}{84.63\scriptsize{$\pm$0.2}} &\multicolumn{1}{c|}{89.49\scriptsize{$\pm$0.2}} &\textbf{91.36}\scriptsize{$\pm$0.3}   \\ \hline\hline

\multirow{3}{*}{\begin{tabular}[c]{@{}c@{}}HARBox\\ (DNN)\end{tabular}}              
&0.1 &\multicolumn{1}{c|}{48.34\scriptsize{$\pm$0.2}} &\multicolumn{1}{c|}{90.08\scriptsize{$\pm$0.1}} &\multicolumn{1}{c|}{88.62\scriptsize{$\pm$0.1}} &\multicolumn{1}{c|}{91.23\scriptsize{$\pm$0.2}} &\multicolumn{1}{c|}{54.90\scriptsize{$\pm$0.7}} &\multicolumn{1}{c|}{90.96\scriptsize{$\pm$0.1}} &\textbf{92.07}\scriptsize{$\pm$0.1}    \\ 

&0.3 &\multicolumn{1}{c|}{50.80\scriptsize{$\pm$0.2}} &\multicolumn{1}{c|}{76.93\scriptsize{$\pm$0.2}} &\multicolumn{1}{c|}{75.56\scriptsize{$\pm$0.1}} &\multicolumn{1}{c|}{78.02\scriptsize{$\pm$0.2}} &\multicolumn{1}{c|}{55.41\scriptsize{$\pm$0.7}} &\multicolumn{1}{c|}{80.02\scriptsize{$\pm$0.2}} &\textbf{80.85}\scriptsize{$\pm$0.1}  \\ 

&0.5 &\multicolumn{1}{c|}{52.60\scriptsize{$\pm$0.2}} &\multicolumn{1}{c|}{70.54\scriptsize{$\pm$0.2}} &\multicolumn{1}{c|}{67.63\scriptsize{$\pm$0.1}} &\multicolumn{1}{c|}{72.59\scriptsize{$\pm$0.1}} &\multicolumn{1}{c|}{56.66\scriptsize{$\pm$0.7}} &\multicolumn{1}{c|}{74.47\scriptsize{$\pm$0.1}} &\textbf{75.84}\scriptsize{$\pm$0.1}   \\ \hline
\end{tabular}}
\label{tbl:final-accuracy-ring}
\end{table}

\begin{table}[t]
\centering
\caption{Generalization performance with ``Ring'' communication graph.} 
\vspace{-0.1in}
\scalebox{0.73}{
\begin{tabular}{|c|c|ccccccc|}
\hline
\multirow{2}{*}{\begin{tabular}[c]{@{}c@{}}Dataset\\ (Model)\end{tabular}}           & \multirow{2}{*}{$\alpha$} & \multicolumn{7}{c|}{Ring}                                    \\ \cline{3-9} 
&                           & \multicolumn{1}{c|}{FedAvg} & \multicolumn{1}{c|}{FedRep} & \multicolumn{1}{c|}{Ditto} & \multicolumn{1}{c|}{FedRoD} & \multicolumn{1}{c|}{D-PSGD} & \multicolumn{1}{c|}{DisPFL} & DePRL \\ \hline\hline

\multirow{3}{*}{\begin{tabular}[c]{@{}c@{}}CIFAR-100\\ (ResNet-18)\end{tabular}}     
&0.1 &\multicolumn{1}{c|}{49.64\scriptsize{$\pm$0.2}} &\multicolumn{1}{c|}{52.71\scriptsize{$\pm$0.1}} &\multicolumn{1}{c|}{42.22\scriptsize{$\pm$0.2}} &\multicolumn{1}{c|}{50.72\scriptsize{$\pm$0.1}} &\multicolumn{1}{c|}{38.64\scriptsize{$\pm$0.1}} &\multicolumn{1}{c|}{34.67\scriptsize{$\pm$0.2}} &\textbf{53.58}\scriptsize{$\pm$0.3}   \\

&0.3 &\multicolumn{1}{c|}{41.62\scriptsize{$\pm$0.2}} &\multicolumn{1}{c|}{44.06\scriptsize{$\pm$0.2}} &\multicolumn{1}{c|}{31.63\scriptsize{$\pm$0.2}} &\multicolumn{1}{c|}{42.43\scriptsize{$\pm$0.1}} &\multicolumn{1}{c|}{27.75\scriptsize{$\pm$0.2}} &\multicolumn{1}{c|}{24.89\scriptsize{$\pm$0.1}} &\textbf{44.62}\scriptsize{$\pm$0.2}   \\

&0.5 &\multicolumn{1}{c|}{39.19\scriptsize{$\pm$0.2}} &\multicolumn{1}{c|}{41.26\scriptsize{$\pm$0.1}} &\multicolumn{1}{c|}{30.29\scriptsize{$\pm$0.2}} &\multicolumn{1}{c|}{40.22\scriptsize{$\pm$0.2}} &\multicolumn{1}{c|}{26.15\scriptsize{$\pm$0.2}} &\multicolumn{1}{c|}{22.93\scriptsize{$\pm$0.1}} &\textbf{42.27}\scriptsize{$\pm$0.2}    \\ \hline\hline

\multirow{3}{*}{\begin{tabular}[c]{@{}c@{}}CIFAR-10\\ (VGG-11)\end{tabular}}         
&0.1 &\multicolumn{1}{c|}{72.30\scriptsize{$\pm$0.2}} &\multicolumn{1}{c|}{74.67\scriptsize{$\pm$0.2}} &\multicolumn{1}{c|}{72.66\scriptsize{$\pm$0.3}} &\multicolumn{1}{c|}{73.49\scriptsize{$\pm$0.2}} &\multicolumn{1}{c|}{73.81\scriptsize{$\pm$0.2}} &\multicolumn{1}{c|}{70.85\scriptsize{$\pm$0.3}} &\textbf{75.93}\scriptsize{$\pm$0.2}   \\

&0.3 &\multicolumn{1}{c|}{55.47\scriptsize{$\pm$0.3}} &\multicolumn{1}{c|}{57.94\scriptsize{$\pm$0.4}} &\multicolumn{1}{c|}{55.45\scriptsize{$\pm$0.3}} &\multicolumn{1}{c|}{56.55\scriptsize{$\pm$0.2}} &\multicolumn{1}{c|}{57.34\scriptsize{$\pm$0.3}} &\multicolumn{1}{c|}{51.89\scriptsize{$\pm$0.4}} &\textbf{59.01}\scriptsize{$\pm$0.4}   \\

&0.5 &\multicolumn{1}{c|}{47.84\scriptsize{$\pm$0.2}} &\multicolumn{1}{c|}{51.56\scriptsize{$\pm$0.3}} &\multicolumn{1}{c|}{48.64\scriptsize{$\pm$0.2}} &\multicolumn{1}{c|}{49.07\scriptsize{$\pm$0.3}} &\multicolumn{1}{c|}{50.92\scriptsize{$\pm$0.2}} &\multicolumn{1}{c|}{47.36\scriptsize{$\pm$0.2}} &\textbf{52.72}\scriptsize{$\pm$0.2}   \\ \hline\hline

\multirow{3}{*}{\begin{tabular}[c]{@{}c@{}}Fashion\\ MNIST\\ (AlexNet)\end{tabular}} 
&0.1 &\multicolumn{1}{c|}{83.78\scriptsize{$\pm$0.3}} &\multicolumn{1}{c|}{86.12\scriptsize{$\pm$0.3}} &\multicolumn{1}{c|}{84.24\scriptsize{$\pm$0.3}} &\multicolumn{1}{c|}{84.56\scriptsize{$\pm$0.3}} &\multicolumn{1}{c|}{84.76\scriptsize{$\pm$0.3}} &\multicolumn{1}{c|}{83.72\scriptsize{$\pm$0.3}} &\textbf{87.34}\scriptsize{$\pm$0.2}   \\

&0.3 &\multicolumn{1}{c|}{75.75\scriptsize{$\pm$0.2}} &\multicolumn{1}{c|}{77.28\scriptsize{$\pm$0.5}} &\multicolumn{1}{c|}{75.76\scriptsize{$\pm$0.2}} &\multicolumn{1}{c|}{76.66\scriptsize{$\pm$0.2}} &\multicolumn{1}{c|}{74.84\scriptsize{$\pm$0.3}} &\multicolumn{1}{c|}{73.07\scriptsize{$\pm$0.2}} &\textbf{78.29}\scriptsize{$\pm$0.3}   \\

&0.5 &\multicolumn{1}{c|}{69.26\scriptsize{$\pm$0.2}} &\multicolumn{1}{c|}{69.96\scriptsize{$\pm$0.3}} &\multicolumn{1}{c|}{68.01\scriptsize{$\pm$0.2}} &\multicolumn{1}{c|}{69.87\scriptsize{$\pm$0.2}} &\multicolumn{1}{c|}{67.54\scriptsize{$\pm$0.3}} &\multicolumn{1}{c|}{65.64\scriptsize{$\pm$0.4}} &\textbf{71.14}\scriptsize{$\pm$0.3}    \\ \hline\hline

\multirow{3}{*}{\begin{tabular}[c]{@{}c@{}}HARBox\\ (DNN)\end{tabular}}              
&0.1 &\multicolumn{1}{c|}{48.59\scriptsize{$\pm$0.7}} &\multicolumn{1}{c|}{50.44\scriptsize{$\pm$0.6}} &\multicolumn{1}{c|}{37.83\scriptsize{$\pm$0.4}} &\multicolumn{1}{c|}{49.10\scriptsize{$\pm$0.7}} &\multicolumn{1}{c|}{51.07\scriptsize{$\pm$0.7}} &\multicolumn{1}{c|}{50.58\scriptsize{$\pm$0.6}} &\textbf{55.97}\scriptsize{$\pm$0.3}   \\

&0.3 &\multicolumn{1}{c|}{46.04\scriptsize{$\pm$0.5}} &\multicolumn{1}{c|}{48.44\scriptsize{$\pm$0.3}} &\multicolumn{1}{c|}{26.06\scriptsize{$\pm$0.3}} &\multicolumn{1}{c|}{46.52\scriptsize{$\pm$0.3}} &\multicolumn{1}{c|}{49.50\scriptsize{$\pm$0.3}} &\multicolumn{1}{c|}{48.97\scriptsize{$\pm$0.3}} &\textbf{55.86}\scriptsize{$\pm$0.3}   \\

&0.5 &\multicolumn{1}{c|}{44.47\scriptsize{$\pm$0.3}} &\multicolumn{1}{c|}{46.04\scriptsize{$\pm$0.3}} &\multicolumn{1}{c|}{26.04\scriptsize{$\pm$0.2}} &\multicolumn{1}{c|}{44.97\scriptsize{$\pm$0.4}} &\multicolumn{1}{c|}{48.24\scriptsize{$\pm$0.3}} &\multicolumn{1}{c|}{48.32\scriptsize{$\pm$0.3}} &\textbf{52.42}\scriptsize{$\pm$0.3}     \\ \hline
\end{tabular}}
\label{tbl:generalization-ring}
\end{table}

\textbf{Test accuracy and generalization comparison with PS based methods.} Since communications occur between the central server and workers in PS based setting, for a fair comparison, we only compare decentralized baselines with PS based baselines under ``Ring''.  As shown in Table~\ref{tbl:final-accuracy-ring}, we observe that \DePRL outperforms all considered baselines over all four datasets and three non-IID partitions. First, the state-of-the-art FedAvg performs worse in non-IID settings due to the fact that they target on learning a single model without encouraging personalization.  Second, though Ditto and FedRoD are incorporated with personalization, \DePRL always outperforms them. In particular, \DePRL achieves a remarkable performance improvement on non-IID partitioned CIFAR-100.  Compared to CIFAR-10, the data heterogeneity across workers are further increased due to the larger number of classes, and hence calls for personalization of local models.  This observation makes our representation learning augmented personalized model in \DePRL even pronounced compared to learning a single full-dimensional model in these baseline methods.  In addition, \DePRL also outperforms FedRep to a large extent, which also leverages the representation learning theory to learn a common representation under the PS setting.  Likewise, the same observations can be made when we compare the generalization performance as shown in Table~\ref{tbl:generalization-ring}. 

{We further elaborate the key technical idea behind \DePRL that guarantees enhanced performance over DisPFL.  Though DisPFL focuses on addressing personalization, it tackles data heterogeneity in a totally different way. The reason for the benefits of \DePRL over DisPFL is two-fold. First, in DisPFL, all workers still learn a single global parameter $\mathbf{w}$
 with each worker 
 learns an individual mask $\mathbf{m}_i$ 
, where the mask $\mathbf{m}_i$ 
 is a sparse vector with 
$0s$ and 
$1s$. The mask vector updates involve binary (i.e., 0-1) optimization, which requires some additional approximation techniques, possibly leading to some performance loss. Second, in DisPFL, to reduce the communication overhead, each worker 
 transmits the weights vector 
 $\mathbf{w}_i$  after sparsification with mask $\mathbf{m}_i$  
 to its neighbor. As this known, such sparsification techniques often improves the communication efficiency at the cost of some information loss, which may degrade the model accuracy (though in some cases may not be significant). While for our \DePRL, we leverage the representation learning theory by the insights that heterogeneous clients often share a common global representation . As a result, the global representation maps from 
$d$-dimensional data points to a lower space of size 
$z$, with $z\ll d$ 
. Since only global representation is shared among clients, our representation learning model naturally reduces the communication among clients without the extra sparsification used in DisPFL, and hence without sacrificing the model accuracy. To this end, we believe that the advantange of \DePRL comes from the powerful representation learning framework that allows heterogeneous workers to learn a shared common global representation but also a unique local head, in which the size of global representation is way larger than that of the local heads.}

\begin{table}[H]
\centering
\caption{The final test accuracy, converenge time and time-to-accuracy using non-IID partitioned CIFAR-100 and HARBox.}
\vspace{-0.1in}
\scalebox{0.7}{
\begin{tabular}{|c|c|cccc|cccc|cccc|}
\hline
\multirow{2}{*}{\textbf{\begin{tabular}[c]{@{}c@{}}Dataset\\ (Model)\end{tabular}}} & \multirow{2}{*}{\textbf{\begin{tabular}[c]{@{}c@{}}\# of\\ Clients\end{tabular}}} & \multicolumn{4}{c|}{\textbf{Final Accuracy (\%)}}                                                                                   & \multicolumn{4}{c|}{\textbf{Convergence Time (s)}}     & \multicolumn{4}{c|}{\textbf{Time to Accuracy (s)}}     \\ \cline{3-14} 
&                                                                                   & \multicolumn{1}{c|}{\textbf{FedRep}} & \multicolumn{1}{c|}{\textbf{D-PSGD}} & \multicolumn{1}{c|}{\textbf{DisPFL}} & \textbf{DePRL} & \multicolumn{1}{c|}{\textbf{FedRep}} & \multicolumn{1}{c|}{\textbf{D-PSGD}} & \multicolumn{1}{c|}{\textbf{DisPFL}} & \textbf{DePRL} & \multicolumn{1}{c|}{\textbf{FedRep}} & \multicolumn{1}{c|}{\textbf{D-PSGD}} & \multicolumn{1}{c|}{\textbf{DisPFL}} & \textbf{DePRL} \\ \hline\hline

\multirow{4}{*}{\begin{tabular}[c]{@{}c@{}}CIFAR-100\\ (ResNet-18)\end{tabular}}    
& 4                                                                                 & \multicolumn{1}{c|}{74.10}           & \multicolumn{1}{c|}{55.76}           & \multicolumn{1}{c|}{68.96}           & \textbf{77.98} & \multicolumn{1}{c|}{690.3}           & \multicolumn{1}{c|}{726.5}           & \multicolumn{1}{c|}{731.4}           & \textbf{602.7} & \multicolumn{1}{c|}{381.5}           & \multicolumn{1}{c|}{427.6}           & \multicolumn{1}{c|}{356.5}           & \textbf{332.8} \\
& 8                                                                                 & \multicolumn{1}{c|}{68.56}           & \multicolumn{1}{c|}{49.87}           & \multicolumn{1}{c|}{62.87}           & \textbf{71.58} & \multicolumn{1}{c|}{358.6}           & \multicolumn{1}{c|}{380.0}           & \multicolumn{1}{c|}{387.9}           & \textbf{331.2} & \multicolumn{1}{c|}{194.1}           & \multicolumn{1}{c|}{221.3}           & \multicolumn{1}{c|}{185.9}           & \textbf{167.5} \\
& 16                                                                                & \multicolumn{1}{c|}{59.61}           & \multicolumn{1}{c|}{42.40}           & \multicolumn{1}{c|}{56.63}           & \textbf{61.35} & \multicolumn{1}{c|}{230.5}           & \multicolumn{1}{c|}{243.6}           & \multicolumn{1}{c|}{234.2}           & \textbf{205.3} & \multicolumn{1}{c|}{105.4}           & \multicolumn{1}{c|}{121.0}           & \multicolumn{1}{c|}{107.5}           & \textbf{87.6}  \\ 
& 32                                                                                & \multicolumn{1}{c|}{53.37}           & \multicolumn{1}{c|}{36.71}           & \multicolumn{1}{c|}{47.95}           & \textbf{54.94} & \multicolumn{1}{c|}{160.9}           & \multicolumn{1}{c|}{167.9}           & \multicolumn{1}{c|}{163.3}           & \textbf{149.8} & \multicolumn{1}{c|}{59.7}            & \multicolumn{1}{c|}{66.1}            & \multicolumn{1}{c|}{55.8}            & \textbf{47.5}  \\ \hline\hline

\multirow{4}{*}{\begin{tabular}[c]{@{}c@{}}HARBox\\ (DNN)\end{tabular}}             
& 4                                                                                 & \multicolumn{1}{c|}{88.59}           & \multicolumn{1}{c|}{73.67}           & \multicolumn{1}{c|}{95.13}           & \textbf{95.98} & \multicolumn{1}{c|}{369.1}           & \multicolumn{1}{c|}{391.5}           & \multicolumn{1}{c|}{376.8}           & \textbf{319.3} & \multicolumn{1}{c|}{72.6}            & \multicolumn{1}{c|}{82.3}            & \multicolumn{1}{c|}{66.8}            & \textbf{55.2}  \\ 
& 8                                                                                 & \multicolumn{1}{c|}{84.98}           & \multicolumn{1}{c|}{69.66}           & \multicolumn{1}{c|}{92.45}           & \textbf{93.75} & \multicolumn{1}{c|}{222.5}           & \multicolumn{1}{c|}{245.6}           & \multicolumn{1}{c|}{230.9}           & \textbf{195.6} & \multicolumn{1}{c|}{38.2}            & \multicolumn{1}{c|}{48.9}            & \multicolumn{1}{c|}{34.3}            & \textbf{30.9}  \\
& 16                                                                                & \multicolumn{1}{c|}{80.70}           & \multicolumn{1}{c|}{65.59}           & \multicolumn{1}{c|}{88.75}           & \textbf{90.57} & \multicolumn{1}{c|}{128.2}           & \multicolumn{1}{c|}{141.2}           & \multicolumn{1}{c|}{137.1}           & \textbf{118.9} & \multicolumn{1}{c|}{20.5}            & \multicolumn{1}{c|}{24.2}            & \multicolumn{1}{c|}{18.1}            & \textbf{16.8}  \\
& 32                                                                                & \multicolumn{1}{c|}{78.79}           & \multicolumn{1}{c|}{61.30}           & \multicolumn{1}{c|}{83.62}           & \textbf{85.68} & \multicolumn{1}{c|}{89.9}            & \multicolumn{1}{c|}{95.3}            & \multicolumn{1}{c|}{91.6}            & \textbf{79.3}  & \multicolumn{1}{c|}{11.6}            & \multicolumn{1}{c|}{12.7}            & \multicolumn{1}{c|}{9.1}             & \textbf{8.3}   \\ \hline

\end{tabular}
}
\label{tbl:linearspeedup-app}
\vspace{-0.1in}
\end{table}

\textbf{Speedup.} Complementary to Figure~\ref{fig:linearspeedup}, we present the corresponding final test accuracy, convergence time and time-to-accuracy in Table~\ref{tbl:linearspeedup-app}. In particular, the time-to-accuracy is measured when the targeted accuracy is 35\% in CIFAR-100 and 60\% in HARBox. Again, we observe that the speedup (convergence
time) is almost linearly increasing (decreasing) as the number of workers increases, which validates the linear speedup property of \DePRL. The training loss of \DePRL is presented in Figure~\ref{fig:trainingloss-app}. Specifically, we compute the training loss as follows. After completing local training on individual workers, each worker evaluates the training loss using its local model (which consists of the global representation and the local head). Subsequently, each worker sends its calculated loss values to a designated worker (e.g., one dummy worker) that is responsible for determining convergence, and aggregates the received loss values to compute the global training loss. From Figure~\ref{fig:trainingloss-app}, we observe that \DePRL scales well when the number of workers is growing in the system. 

\begin{figure}
    \centering
 \includegraphics[width=0.7\columnwidth]{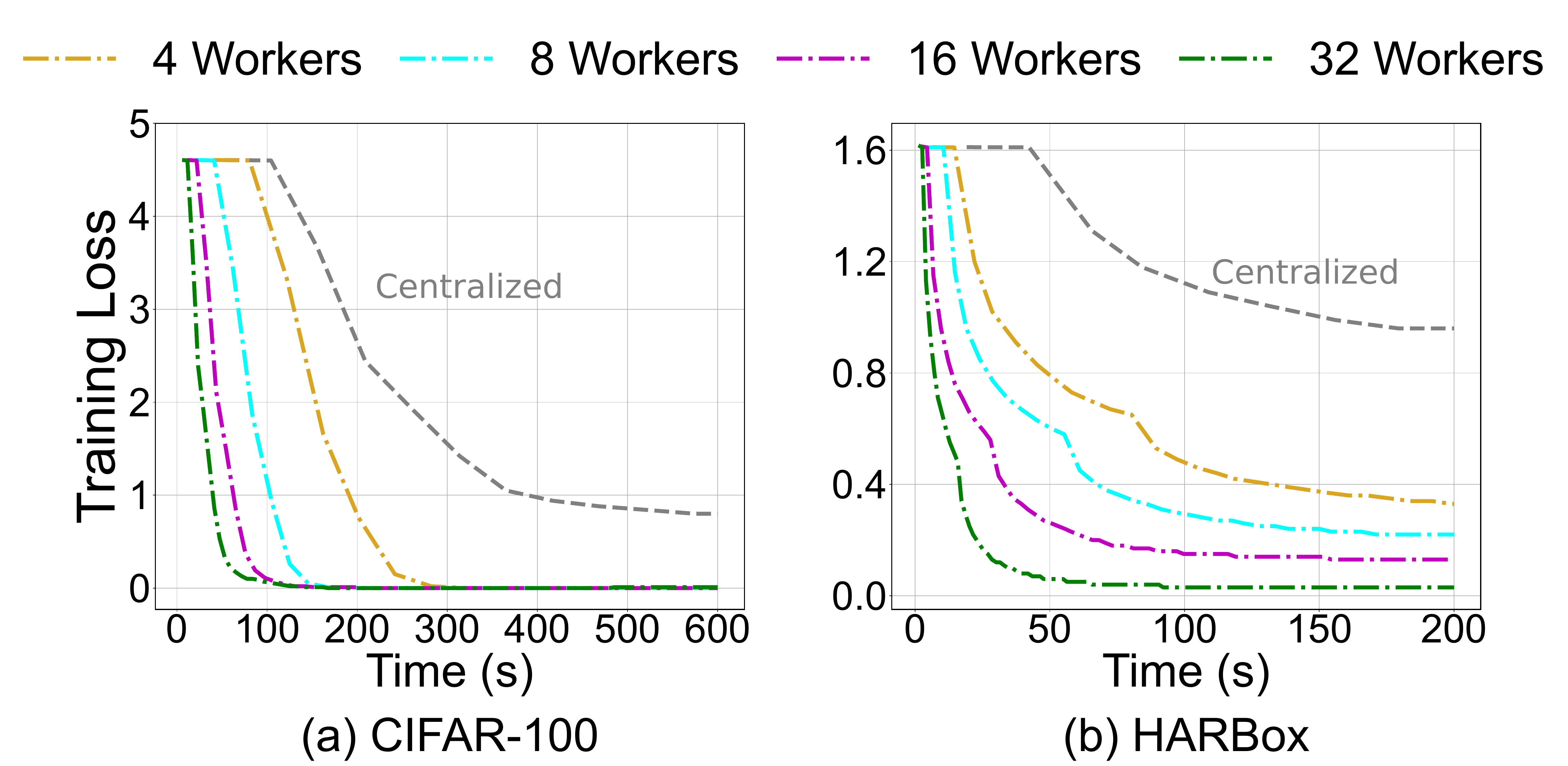}
 \vspace{-0.1in}
\caption{Training loss of \DePRL with different number of workers.}
 \label{fig:trainingloss-app}
  \vspace{-0.1in}
\end{figure}

\begin{figure}[t]
    \centering
       \includegraphics[width=0.7\textwidth]{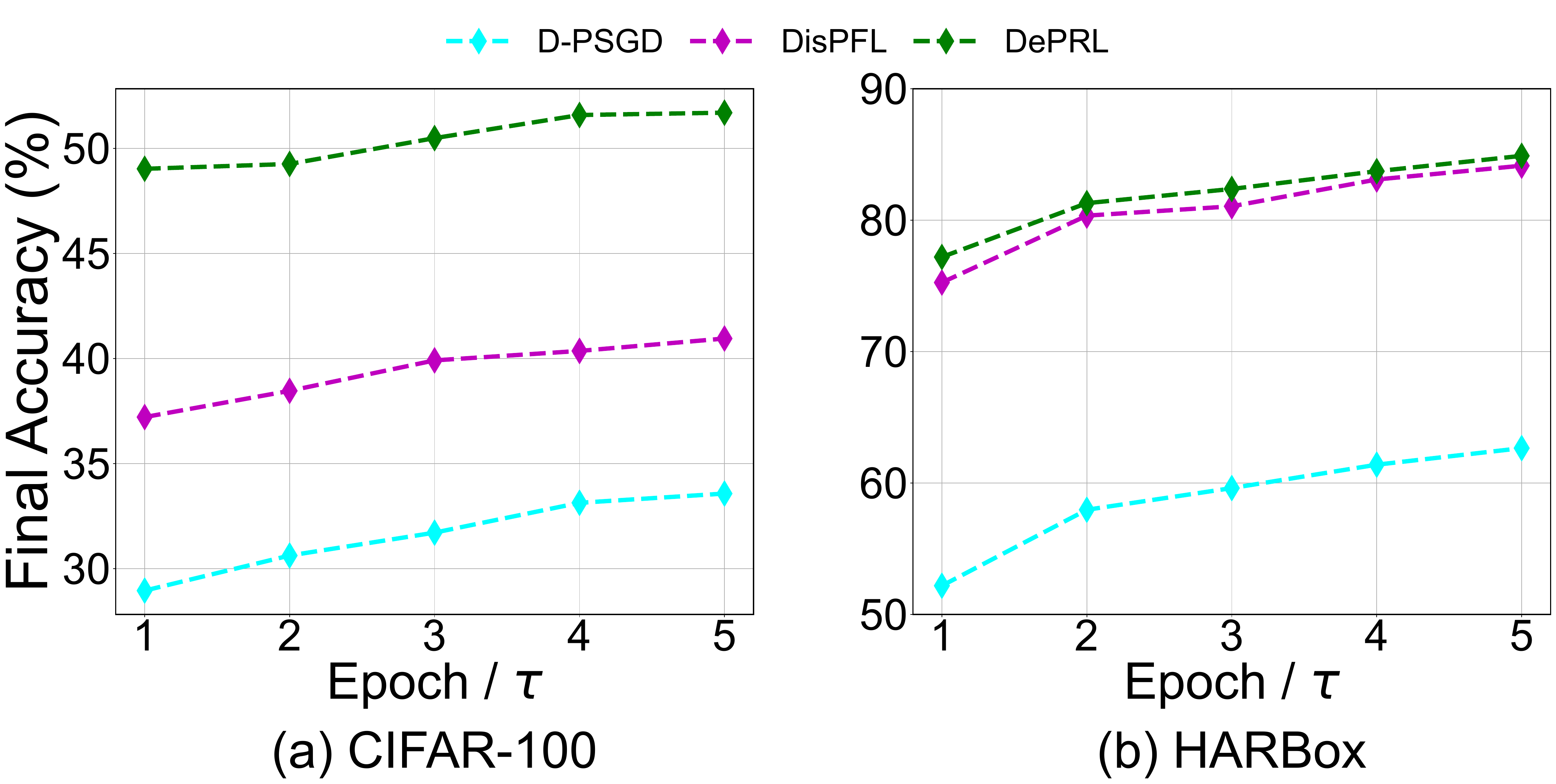}
 \caption{Test accuracy of different baselines using (Left) ResNet-18 on non-IID partitioned CIFAR-100 and (Right) DNN on non-IID partitioned HARBox across different update steps $\tau$ for local head.}
	\vspace{-0.1in}
	\label{fig:localsteps}
\end{figure}

\textbf{Local head update steps.} Since \DePRL allows to update the local head with $\tau$ steps, we set it to be 2 in our experiments.  The impact of $\tau$ on the final test accuracy is presented in Figure~\ref{fig:localsteps}.  On one hand, we observe that more local updates can benefit the performance of all algorithms, as observed in the representation learning theory augmented PS-based method \cite{collins2021exploiting}.  On the other hand, \DePRL consistently outperforms all baselines across all considered local head update steps.

\begin{figure}[t]
    \centering
       \includegraphics[width=0.75\textwidth]{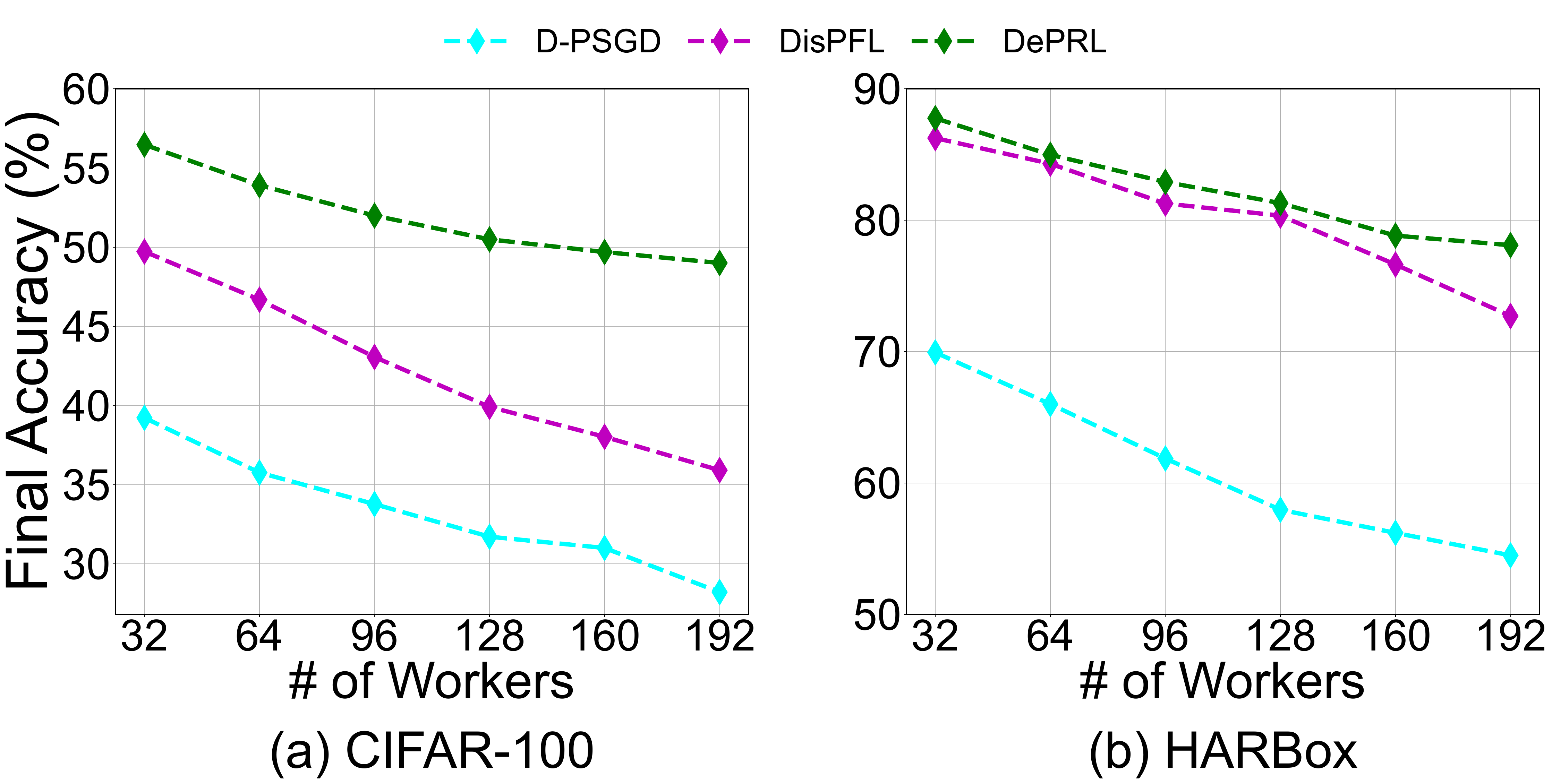}
 \caption{Test accuracy of different baselines using (Left) ResNet-18 on non-IID partitioned CIFAR-100 and (Right) DNN on non-IID partitioned HARBox across different numbers of total clients.}
	\vspace{-0.1in}
	\label{fig:clients}
\end{figure}

\textbf{Number of total workers.} Our experiments are reported with a total of 128 workers.  From Figure~\ref{fig:clients}, we can see that \DePRL always outperforms all baselines regardless of the number of total clients.

\begin{figure}[t]
    \centering
       \includegraphics[width=0.75\textwidth]{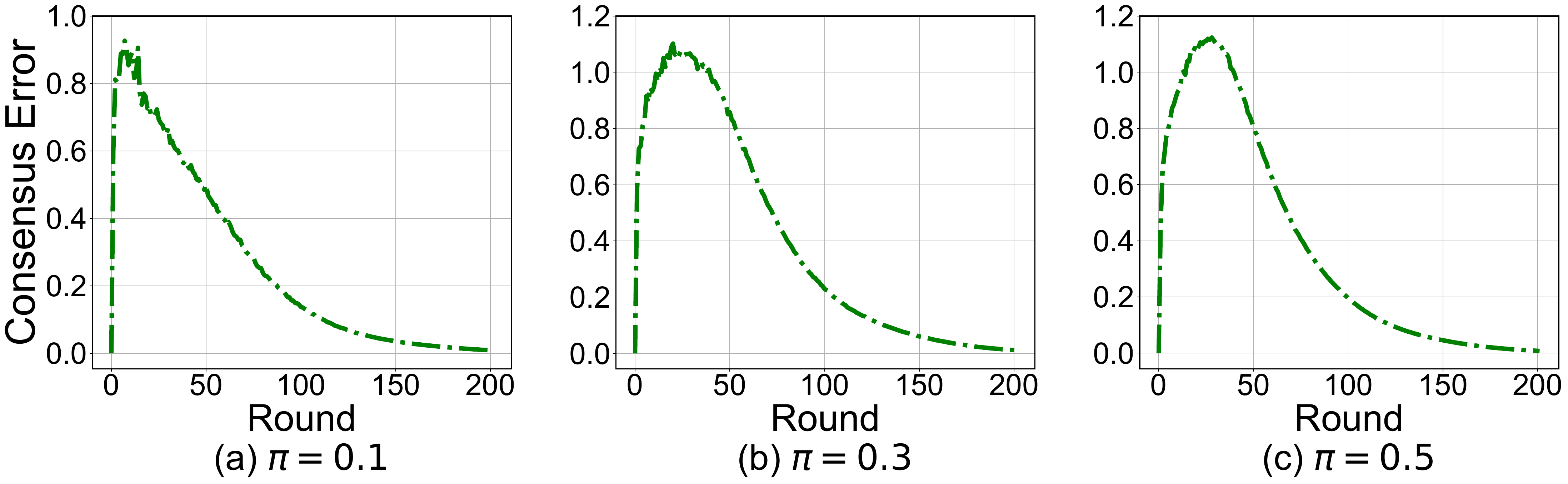}
 \caption{Consensus errors of \DePRL using VGG-11 on non-IID partitioned CIFAR-10 with different heterogeneities.}
	\vspace{-0.1in}
	\label{fig:CIFAR10-consensus}
\end{figure}

\begin{figure}[t]
    \centering
       \includegraphics[width=0.75\textwidth]{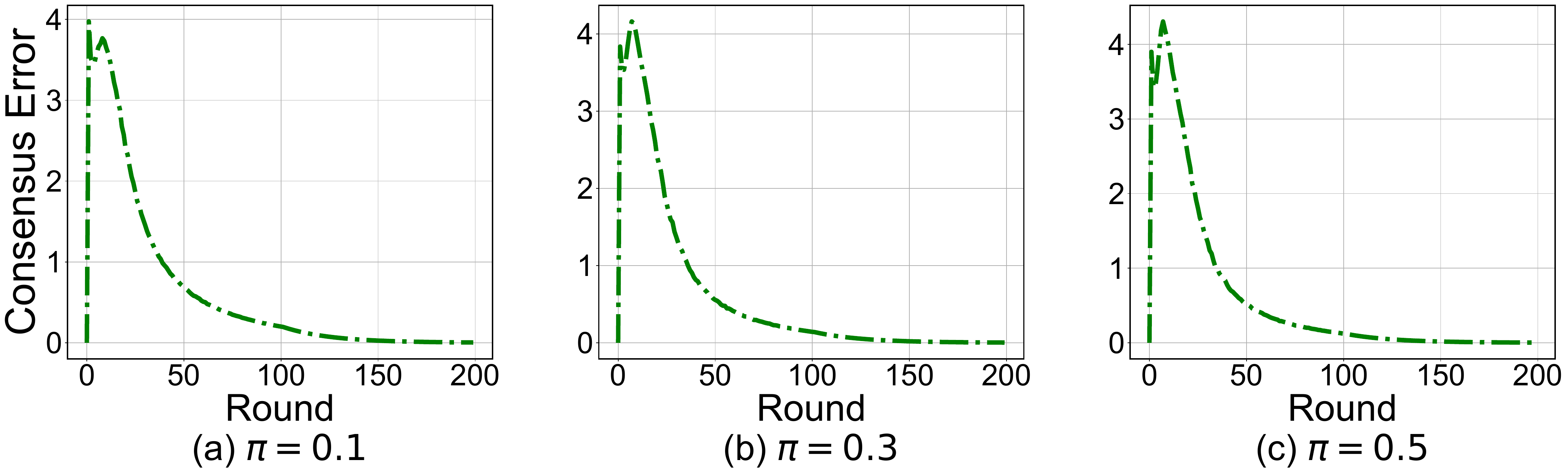}
 \caption{Consensus errors of \DePRL using ResNet-18 on non-IID partitioned CIFAR-100 with different heterogeneities.}
	\vspace{-0.1in}
	\label{fig:CIFAR100-consensus}
\end{figure}

\textbf{Consensus errors.} Since our \DePRL is a decentralized learning algorithm with provably convergence guarantee, we now report the consensus error of \DePRL as a function of training rounds.  As shown in Figures~\ref{fig:CIFAR10-consensus} and~\ref{fig:CIFAR100-consensus}, the consensus errors first increase and then decrease.  This is due to the fact that all workers initialized with a random local head, and it takes some rounds for workers to learn the global representation together through the communication, which is part of the nature of our \DePRL (see Step 9 in Algorithm~\ref{alg:general}).  After these initial rounds, the consensus decreases, which in alignment with our theoretical analysis (see Theorem~\ref{thm:loss_convergence} and proof sketch in the main paper).

\end{document}